\titlespacing*{\section}{0pt}{*0.1}{*0.1}
\titlespacing*{\subsection}{0pt}{*0.1}{*0.1}
\title{\huge Risk Bounds for Over-parameterized Maximum Margin Classification on Sub-Gaussian Mixtures}
\author{%
  Yuan Cao \\
  Department of Statistics \& Actuarial Science \\
  Department of Mathematics \\
  The University of Hong Kong\\
  \texttt{yuancao@hku.hk} \\
  \And
  Quanquan Gu\\
  Department of Computer Science\\
  University of California, Los Angeles\\
  Los Angeles, CA 90095, USA\\
  \texttt{qgu@cs.ucla.edu} \\
  \And
  Mikhail Belkin \\
  Halicioğlu Data Science Institute\\
  University of California San Diego\\
  La Jolla, CA 92093, USA\\
  \texttt{mbelkin@ucsd.edu} \\
}
\date{}
\def\cN{\mathcal{N}}
\def\dd{\mathrm{d}}
\newcommand{\la}{\langle}
\newcommand{\ra}{\rangle}
\def\CC{}
\renewcommand{\citet}{\citep}
\begin{document}

\maketitle

\begin{abstract}
    Modern machine learning systems such as deep neural networks are often highly over-parameterized so that they can fit the noisy training data exactly, yet they can still achieve small test errors in practice. In this paper, we study this ``benign overfitting''  phenomenon of the maximum margin classifier for linear classification problems. Specifically, we consider data generated from sub-Gaussian mixtures, and provide a tight risk bound for the maximum margin linear classifier in the over-parameterized setting. Our results precisely characterize the condition under which benign overfitting can occur in linear classification problems, and improve on previous work. They also have direct implications for over-parameterized logistic regression. 
\end{abstract}



\section{Introduction}

In modern machine learning, complex models such as deep neural networks have become increasingly  popular. These complicated models are  capable of  fitting noisy training data sets, while at the same time achieving small test errors. In fact, this \textit{benign overfitting} phenomenon is not a unique feature of deep learning. Even for kernel methods and linear models, \citet{belkin2018understand}  demonstrated that interpolators on the noisy training data can still perform near optimally on the test data. A series of recent works \citep{belkin2019two,muthukumar2020harmless,hastie2019surprises,bartlett2020benign}   theoretically studied how over-parameterization can achieve small population risk.


 
 In particular in \citet{bartlett2020benign} the authors considered the setting where the data are generated from a ground-truth linear model with noise, 
and established a tight population risk bound for the minimum norm linear interpolator with a matching lower bound. More recently, \citet{tsigler2020benign} further studied benign overfitting in ridge regression, and established non-asymptotic generalization bounds for over-parametrized ridge regression. They showed that those bounds are tight for a range of regularization parameter values. Notably, these results cover arbitrary covariance structure of the data, and give a nice characterization of how the spectrum of the data covariance matrix affects the population risk in the over-parameterized regime.



Very recently, benign overfitting has also been studied in the setting of linear classification \citep{chatterji2020finite,muthukumar2020classification,wang2020benign}. Specifically, \citet{muthukumar2020classification} studied the setting where the data inputs are Gaussian and the labels are generated from a ground truth linear model with label flipping noise, and showed equivalence between the hard-margin support vector machine (SVM) solution and the minimum norm interpolator to study benign overfitting. 
\citet{chatterji2020finite,wang2020benign} studied the benign overfitting phenomenon in sub-Gaussian/Gaussian mixture models and established population risk bounds for the maximum margin classifier. \citet{chatterji2020finite} leveraged the \textit{implicit bias} of gradient descent for logistic regression \citep{soudry2017implicit} to establish the risk bound. \citet{wang2020benign} established an equivalence result between classification and regression for isotropic Gaussian mixture models. While these results have offered valuable insights into the benign overfitting phenomenon for (sub-)Gaussian mixture classification, they still have certain limitations. Unlike the results in the regression setting where the eigenvalues of the data covariance matrix play a key role, \CC{the current results for Gaussian/sub-Gaussian mixture models do not show the impact of the spectrum of the data covariance matrix on the risk.}

In this paper, we study the benign overfitting phenomenon in a general sub-Gaussian mixture model that covers both the isotropic and anisotropic settings, where the $d$-dimensional features from two classes have the same covariance matrix $\bSigma$ but have different means $\bmu$ and $-\bmu$ respectively. 
We consider the over-parameterized setting where $d$ is larger than the sample size $n$, and prove a risk bound for the maximum margin classifier. 
We show that under certain conditions on 
eigenvalues of $\bSigma$, the mean vector $\bmu$ and the sample size $n$, the maximum margin classifier for this problem is identical to the minimum norm interpolator. We then utilize this result to establish a tight population risk bound of the maximum margin classifier. Our result reveals how the eigenvalues of the  covariance matrix $\bSigma$ affect the benign property of the classification problem, and is tighter and more general than existing results on sub-Gaussian/Gaussian mixture models. The contributions of this paper are as follows: 
\begin{itemize}[leftmargin = *]
    \item We establish a tight population risk bound for the maximum margin classifier. Our bound works for both the isotropic and anisotropic settings, which is more general than existing results in \citet{chatterji2020finite,wang2020benign}. When reducing our bound to the setting studied in \citet{chatterji2020finite}, our result gives a bound $\exp(-\Omega( n \| \bmu \|_2^4 / d ))$, where $n$ is the training sample size. Our bound is tighter than the risk bound $\exp(-\Omega( \| \bmu \|_2^4 / d ))$ in \citet{chatterji2020finite} by a factor of $n$ in the exponent. Our result also gives a tighter risk bound than that in \citet{wang2020benign}\footnote{After we posted our paper on arXiv, we noticed that the authors of \citet{wang2020benign} updated their paper a week later to include the results for mixture of anisotropic Gaussians and sharpened their bounds. Our results are still more general as we covered the mixture of anisotropic sub-Gaussian, and we also proved a matching lower bound.} in the so-called ``low SNR setting'': our result suggests that $ \| \bmu \|_2^4 = \omega(d / n)$ suffices to ensure an $o(1)$ population risk, while \citet{wang2020benign} requires $ \| \bmu \|_2^4 = \omega( (d / n)^{3/2})$.
    \item \CC{We establish population risk lower bounds achieved by the maximum margin classifier under two different settings. In both settings, the lower bounds match our population risk upper bound up to some absolute constants. This suggests that our population risk bound is tight.}
    \item Our analysis reveals that for a class of high-dimensional anisotropic sub-Gaussian mixture models, the maximum margin linear classifier on the training data can achieve small population risk under mild assumptions on the sample size $n$ and mean vector $\bmu$. Specifically, suppose that the eigenvalues of $\bSigma$ are $\{\lambda_k = k^{-\alpha}\}_{k=1}^d$ for some parameter $\alpha \in [0,1)$, and treat the sample size $n$ as a constant. Then our result shows that to achieve $o(1)$ population risk, the following conditions on $\| \bmu \|_2$ suffice:
    \CC{\begin{align*}
        \| \bmu \|_2 = \left\{ 
        \begin{array}{ll}
             \omega( d^{1/4 - \alpha/2}  ), & \text{ if }\alpha \in [0, 1/2),\\
             \omega( (\log(d) )^{1/4} ), & \text{ if }\alpha =  1/2, \\
             \omega( 1 ).  & \text{ if }\alpha \in ( 1/2,  1).
        \end{array}
        \right.
    \end{align*}}
    More specifically, when $\alpha = 1/2$, the condition on the mean vector $\bmu$ only has a logarithmic dependency on the dimension $d$, and when $\alpha \in (1/2,1)$, the condition on $\bmu$ for benign overfitting is dimension free.
    \item Our proof of the population risk bound introduces some tight intermediate results, which may be of independent interest. 
    Specifically, our proof utilizes the polarization identity to establish equivalence between the maximum margin classifier and the minimum norm interpolator. This is, to the best of our knowledge, the first equivalence result between classification and regression for anisotropic sub-Gaussian mixture models.
\end{itemize}



\paragraph{Additional Related Work} Our study is closely related to the phenomenon of double descent studied in recent works. \citet{belkin2019reconciling,belkin2019two} showed experimental results and provided theoretical analyses on some specific models to  demonstrate that the risk curve versus over-parameterization has a double descent shape. These results can therefore indicate that over-parameterization can be beneficial to achieve small test risk. \citet{hastie2019surprises,wu2020optimal} studied the double descent phenomenon in linear regression under the setting where the dimension $d$ and sample size $n$ can grow simultaneously but have a fixed ratio, and showed that the population risk exhibits a double descent curve with respect to the ratio. More recently, \citet{mei2019generalization,liao2020random,montanari2020interpolation} further extended the setting to random feature models and studied double descent when the sample size, data dimension and the number of random features have fixed ratios.

Our work is also related to the studies of implicit bias, which analyze the impact of training algorithms when the over-parameterized models have multiple global minima. Specifically, \citet{soudry2017implicit} showed that if the training data are linearly separable, then gradient descent on unregularized logistic regression converges directionally to the maximum margin linear classifier on the training data set. \citet{ji2019implicit} further studied the implicit bias of gradient descent for logistic regression on non-separable data. \citet{gunasekar2018characterizing} studied the implicit bias of various optimization methods for generic objective functions. \citet{gunasekar2017implicit,arora2019implicit} established implicit bias results for matrix factorization problems. More recently, \citet{lyu2019gradient} showed that gradient flow for learning homogeneous neural networks with logistic loss maximizes the normalized margin on the training data set. These studies of implicit bias offer a handle for us to connect the over-parameterized logistic regression  with the maximum margin classifiers for linear models.

\section{Problem Setting and Notation}\label{section:problemsetting}

\paragraph{Notations.} We use lower case letters to denote scalars, and use lower/upper case bold face letters to denote vectors/matrices respectively. For a vector $\vb$, we denote by $\| \vb \|_2$ the $\ell_2$-norm of $\vb$. For a matrix $\Ab$, we use $\| \Ab \|_2$, $\| \Ab \|_F$ to denote its spectral norm and Frobinuous norm respectively, and use $\tr(\Ab)$ to denote its trace. For a vector $\vb \in \RR^d$ and a positive definite matrix $\Ab$, we define $\| \vb \|_{\Ab} = \sqrt{ \vb^\top  \Ab  \vb}$. For an integer $n$, we denote $[n] = \{ 1,2, \ldots, n \}$.

We also use standard asymptotic notations $O(\cdot )$, $\Omega(\cdot )$,  $o(\cdot )$, and $\omega(\cdot )$. Let $\{a_n\}$ and $\{b_n\}$ be two sequences. If there exists a constant $C> 0$ such that $|a_n|\leq C |b_n|$ for all large enough $n$, then we denote $a_n = O(b_n)$. We denote $a_n = \Omega(b_n)$ if $b_n = O(a_n)$. Moreover, we write  $a_n = o(b_n)$ if $\lim | a_n / b_n | = 0$ and $a_n = \omega(b_n)$ if $\lim | a_n / b_n | = \infty$. We also use $\tilde O(\cdot)$ and $\tilde \Omega(\cdot)$ to hide some logarithmic terms in Big-O and Big-Omega notations.

At last, for a random variable $Z$, we denote by $\| Z \|_{\psi_2}$ and $\| Z \|_{\psi_1}$ the sub-Gaussian and sub-exponential norms of $Z$ respectively. 

\paragraph{Sub-Gaussian Mixture Model.}
We consider a model where the feature vectors are generated from a mixture of two sub-Gaussian distributions with means $\bmu$ and $-\bmu$ and the same covariance matrix $\bSigma$.
We assume that each data pair $(\xb,y)$ are generated independently from the following procedure:

\begin{enumerate}[leftmargin = *]
    \item The label $y\in\{+1,-1\}$ is generated as a Rademacher random variable.
    \item A random vector $\ub \in \RR^d$ is generated from a distribution such that the entries of $\ub $ are independent sub-Gaussian random variables with $\EE [u_j] = 0$, $\EE [u_j^2] = 1$ and $\| u_j \|_{\psi_2} \leq \sigma_u$ for all $j\in[d]$. 
    \item Let $\bSigma$ be a positive definite matrix with eigenvalue decomposition $\bSigma = \Vb \bLambda \Vb^\top$, where $\bLambda = \diag\{ \lambda_1,\ldots,\lambda_d \}$ and $\Vb$ is an orthonormal matrix consisting of the eigenvectors of $\bSigma$. We calculate the random vector $\qb$ based on $\ub$ as $\qb = \Vb \bLambda^{1/2} \ub$. This ensures that $\qb$ has mean zero and a covariance matrix $\bSigma$.
    \item The feature is given as $\xb = y\cdot \bmu + \qb$, where $\bmu \in \RR^d$ is a vector. Clearly, the mean of $\xb$ is $\bmu$ when $y = 1$ and is $-\bmu$ when  $y = -1$.
\end{enumerate}
We consider $n$ training data points $(\xb_i,y_i)$ generated independently from the above procedure, and denote
\begin{align*}
    \Xb = \yb \bmu^\top + \Qb,
\end{align*}
where $\Xb = [ \xb_1,\ldots, \xb_n ]^\top, \Qb = [ \qb_1,\ldots, \qb_n ]^\top \in \RR^{n \times d}$, and $\yb = [y_1,\ldots, y_n]^\top \in \{\pm1\}^n$. 
For any $\btheta\in\RR^d$, the population risk of the linear classifier $\xb \rightarrow \la \btheta, \xb\ra$ is defined as:
\begin{align*}
    R(\btheta) = \PP\big( y\cdot \la \btheta, \xb \ra < 0 \big).
\end{align*}
In this paper, we consider the maximum margin linear classifier  $\hat\btheta_{\text{SVM}}$, i.e., the solution to the hard-margin support vector machine:
\begin{align*} 
    \hat\btheta_{\text{SVM}} = \argmin \| \btheta \|_2^2, ~~\text{subject to } y_i \cdot \la\btheta , \xb_i\ra \geq 1, i\in [n],
\end{align*}
and study its population risk $R(\hat\btheta_{\text{SVM}})$.

A recent work \citep{chatterji2020finite} has studied a similar sub-Gaussian mixture model under an assumption that $\tr(\bSigma) = \Omega(d)$, and considered additional label flipping noises. In this paper, we do not introduce the label flipping noises for simplicity, but we consider a general covariance matrix $\bSigma$ to cover the general anisotropic setting. 
\CC{It is worth noting that although our model is not exactly the same as \citet{chatterji2020finite} because we don’t have additional label flipping noise, there is still noise in our model because of the nature of sub-Gaussian mixture model. For example, consider a mixture of two Gaussian distributions. The two Gaussian clusters have non-trivial overlap, and the Bayes optimal classifier has non-zero Bayes risk as long as $\|\bmu \|_2 < \infty$. Therefore, the Bayes optimal classifier and the interpolating classifier are generally quite different. 
In general, a model is appropriate for the study of benign overfitting whenever the optimal classifier has non-zero Bayes risk.}

Our model is rather general and covers the following examples. 

\begin{example}[Gaussian mixture model]
The most straight-forward example is when the data are generated from Gaussian mixtures $N(\bmu,\bSigma)$ and $N(-\bmu,\bSigma)$. This is covered by our model when the sub-Gaussian vector $\ub$ is a standard Gaussian random vector.
\end{example}

\begin{example}[Rare/weak feature model]\label{example:rare-weak}
The rare-weak model is a special case of the Gaussian mixture model where $\bSigma = \Ib$ and $\bmu$ is a sparse vector with $s$ non-zero entries equaling $\gamma$.
\end{example}

The rare/weak feature model was originally investigated by \citet{donoho2008higher,jin2009impossibility}, and was recently studied by \citet{chatterji2020finite}. 

\paragraph{Connection to Over-parameterized Logistic Regression.}
Our study of the maximum margin classifier is closely related to over-parameterized logistic regression.
In logistic regression, we consider the following empirical loss minimization problem:
\begin{align*}
    \min_{\btheta\in\RR^d} L(\btheta ) := \frac{1}{n} \sum_{i=1}^n \log [ 1 + \exp(- y_i \cdot \la \btheta, \xb_i \ra ) ].
\end{align*}
We solve the above optimization problem  with gradient descent
\begin{align}\label{eq:GDupdate}
    \btheta^{(t+1)} = \btheta^{(t)} - \eta\cdot \nabla  L(\btheta^{(t)} ),
\end{align}
where $\eta>0$ is the learning rate.

In the over-parameterized setting where $d \gg n$, it is evident that the training data points are linearly separable with high probability (for example, $\Xb\Xb^\top$ is invertible with high probability and the minimum norm interpolator $\hat\btheta_{\text{LS}} = \Xb^\top (\Xb\Xb^\top)^{-1} \yb$ separates the training data.). For linearly separable data, a series of recent works have studied the \textit{implicit bias} of (stochastic) gradient descent for logistic regression \citep{soudry2017implicit,ji2019implicit,nacson2019stochastic}. These results demonstrate that among all linear classifiers that can classify the training data correctly, gradient descent will converge to the one that maximizes the $\ell_2$ margin. Such an implicit bias result is summarized in the following lemma.
\begin{lemma}[Theorem~3 in \citet{soudry2017implicit}]\label{lemma:implicitbias}
    Suppose that the training data set $\{(\xb_i,y_i)\}$ is linearly separable. Then as long as $\eta>0$ is small enough, the gradient descent iterates $\btheta^{(t)}$ for logistic regression defined in \eqref{eq:GDupdate} has the following direction limit:
    \begin{align*}
    \lim_{t\rightarrow \infty}\frac{\btheta^{(t)}}{\| \btheta^{(t)} \|_2} =  \frac{\hat\btheta_{\text{SVM}}}{\| \hat\btheta_{\text{SVM}} \|_2},
\end{align*}
where $\hat\btheta_{\text{SVM}}$ is the maximum margin classifier.
\end{lemma}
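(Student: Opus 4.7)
The plan is to establish the directional convergence via the standard argument of \citet{soudry2017implicit}: decompose the iterates as $\btheta^{(t)} = g(t)\,\hat\btheta_{\text{SVM}} + \bm{\rho}^{(t)}$ for a scalar function $g(t)\to\infty$ and a residual $\bm{\rho}^{(t)}$ that grows at most sub-logarithmically, so that $\btheta^{(t)}/\|\btheta^{(t)}\|_2$ aligns with $\hat\btheta_{\text{SVM}}/\|\hat\btheta_{\text{SVM}}\|_2$ as $t\to\infty$. The starting point is that linear separability forces $L$ to have no finite minimizer: scaling any separator to infinity drives $L$ to zero, yet $L>0$ pointwise. Combining this with the descent property $L(\btheta^{(t+1)})\le L(\btheta^{(t)})$ for sufficiently small $\eta$ (which uses smoothness of the logistic loss), I would first show $L(\btheta^{(t)})\to 0$, $\|\btheta^{(t)}\|_2\to\infty$, and $y_i\la\btheta^{(t)},\xb_i\ra\to +\infty$ for every $i\in[n]$.

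The next step is to read off the limiting direction from the asymptotic shape of the gradient. Writing $\nabla L(\btheta) = -\tfrac{1}{n}\sum_{i=1}^n \sigma(-y_i\la\btheta,\xb_i\ra)\, y_i \xb_i$ with $\sigma$ the sigmoid, the weights satisfy $\sigma(-y_i\la\btheta,\xb_i\ra)\sim \exp(-y_i\la\btheta,\xb_i\ra)$ once all margins diverge. Under the candidate decomposition $\btheta^{(t)} = g(t)\hat\btheta_{\text{SVM}}+\bm{\rho}^{(t)}$, the support vectors of the hard-margin SVM (those $i$ with $y_i\la\hat\btheta_{\text{SVM}},\xb_i\ra=1$) produce exponentially larger contributions to $\nabla L$ than non-support vectors, whose margins exceed $1$ strictly. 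Matching the dominant sum against the KKT stationarity condition $\hat\btheta_{\text{SVM}}=\sum_{i\in S}\alpha_i y_i\xb_i$ with $\alpha_i\ge 0$ identifies the growing component of $\btheta^{(t)}$ as a positive multiple of $\hat\btheta_{\text{SVM}}$.

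To close, I would track the decomposition along the iterates. A telescoping argument applied to $\btheta^{(t)} = -\eta\sum_{s<t}\nabla L(\btheta^{(s)})$ plus the exponential weighting suggests $g(t)\sim \log t$, obtained by balancing the cumulative update against the growing scale of $\btheta^{(t)}$. Controlling $\bm{\rho}^{(t)}$ then requires showing its component along $\hat\btheta_{\text{SVM}}$ remains $O(1)$ while its orthogonal component grows at most as $O(\log\log t)$; this uses the strict positivity of $y_i\la\hat\btheta_{\text{SVM}},\xb_i\ra-1$ for non-support vectors and the summability of the associated error series. Given $\|\btheta^{(t)}\|_2\sim g(t)\|\hat\btheta_{\text{SVM}}\|_2$, the direction limit in the statement follows.

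The main obstacle is the control of $\bm{\rho}^{(t)}$: guessing the $\log t$ scale is easy, but proving that the orthogonal residual stays negligible is delicate and depends on a careful Lyapunov-type estimate exploiting the strict margin gap between support and non-support vectors. Since the lemma is cited verbatim as Theorem~3 of \citet{soudry2017implicit}, the cleanest route is to verify the hypotheses of that theorem --- linear separability (given) together with smoothness of $L$ and a sufficiently small step size --- and invoke their result directly rather than rederive the full technical bookkeeping here.
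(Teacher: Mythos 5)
The paper offers no proof of this lemma at all---it is imported verbatim as Theorem~3 of \citet{soudry2017implicit}---and your proposal correctly ends by doing exactly that: verifying the hypotheses (linear separability, smoothness of the logistic loss, sufficiently small step size) and invoking the cited result. Your sketch of the underlying argument (the decomposition $\btheta^{(t)} = g(t)\hat\btheta_{\text{SVM}} + \bm{\rho}^{(t)}$ with $g(t)\sim\log t$ and a residual controlled via the margin gap between support and non-support vectors) is an accurate summary of the original proof, so there is nothing to correct.
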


Lemma~\ref{lemma:implicitbias} suggests that our risk bound of the maximum margin classifier $\hat\btheta_{\text{SVM}}$ directly implies a risk bound for the over-parameterized logistic regression trained by gradient descent. 

\section{Main Results}\label{section:mainresults}

\CC{In this section, we present our main result on the population risk bound of the maximum margin classifier, and then give a lower bound result to demonstrate the tightness of our upper bounds. We also showcase the application of our results to isotropic and anisotropic sub-Gaussian mixture models to study the conditions under which benign overfitting occurs.}

The main result of this paper is given in the following theorem, where we establish the population risk bound for the maximum margin classifier $R(\hat\btheta_{\text{SVM}})$.

\begin{theorem}\label{thm:BOnew}
Suppose that 
$\tr( \bSigma ) \geq C \max\big\{ n^{3/2} \|\bSigma \|_2, n\| \bSigma \|_F , n\sqrt{\log(n)}\cdot \| \bmu \|_{\bSigma} \big\}$ 
and
$\| \bmu \|_2^2 \geq C \| \bmu \|_{\bSigma}$ 
for some absolute constant $C$. Then with probability at least $1 - n^{-1}$, the maximum margin classifier $\hat\btheta_{\text{SVM}}$ has the following risk bound
\begin{align*}
    R(\hat\btheta_{\text{SVM}}) \leq \exp\Bigg(  \frac{ - C' n \| \bmu \|_2^4  }{  n  \| \bmu \|_{\bSigma}^2+ \| \bSigma\|_F^2 +  n\| \bSigma\|_2^2 } \Bigg),
\end{align*}
where $C'$ is an absolute constant.
\end{theorem}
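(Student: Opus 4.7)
The plan is to split the argument into two stages. \textbf{Stage 1} shows that, under the stated conditions, the hard-margin SVM solution $\hat\btheta_{\text{SVM}}$ coincides (with probability at least $1 - n^{-1}$) with the minimum-norm interpolator $\hat\btheta_{\text{LS}} = \Xb^\top(\Xb\Xb^\top)^{-1}\yb$. \textbf{Stage 2} bounds $R(\hat\btheta_{\text{LS}})$ by a sub-Gaussian Chernoff argument on a fresh test point, conditional on the training set. Combining the two immediately gives the claimed bound for $\hat\btheta_{\text{SVM}}$.

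For Stage 1, the KKT conditions say that the hard-margin SVM equals the minimum-norm interpolator iff every training point is a support vector, i.e.\ the coefficient vector $\mathbf{c}:=(\Xb\Xb^\top)^{-1}\yb$ satisfies $y_i c_i \geq 0$ for every $i\in[n]$. Using $\Xb = \yb\bmu^\top + \Qb$, I would expand
\begin{align*}
\Xb\Xb^\top = \|\bmu\|_2^2\,\yb\yb^\top + \yb\bmu^\top\Qb^\top + \Qb\bmu\,\yb^\top + \Qb\Qb^\top,
\end{align*}
and use Hanson--Wright plus Bernstein-type inequalities to argue $\Qb\Qb^\top = \tr(\bSigma)\Ib_n + \Eb_0$ with $\|\Eb_0\|_2$ controlled by $n\|\bSigma\|_F$ (off-diagonal) and $n^{3/2}\|\bSigma\|_2$ (diagonal fluctuation), while the cross-term operator norms are $O(\sqrt{n\log n}\,\|\bmu\|_{\bSigma})$. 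The three trace conditions in the hypothesis are precisely what makes each such perturbation of smaller order than $\tr(\bSigma)$, so $\Xb\Xb^\top = \tr(\bSigma)\Ib_n + \Eb$ with $\|\Eb\|_2 \ll \tr(\bSigma)$. A Neumann expansion then yields $\mathbf{c} = \tr(\bSigma)^{-1}\yb + \mathbf{r}$ with $\|\mathbf{r}\|_\infty < \tr(\bSigma)^{-1}$, so $\mathrm{sign}(c_i) = y_i$ for all $i$, proving the equivalence.

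For Stage 2, a fresh test point $\xb = y\bmu + \qb$ gives $y\la\hat\btheta_{\text{LS}},\xb\ra = \la\hat\btheta_{\text{LS}},\bmu\ra + y\la\hat\btheta_{\text{LS}},\qb\ra$. Conditional on the training data, $\la\hat\btheta_{\text{LS}},\qb\ra$ is sub-Gaussian with proxy $O(\|\hat\btheta_{\text{LS}}\|_{\bSigma}^2)$, so
\begin{align*}
R(\hat\btheta_{\text{LS}}) \leq \exp\!\left(-\frac{c\,\la\hat\btheta_{\text{LS}},\bmu\ra^2}{\|\hat\btheta_{\text{LS}}\|_{\bSigma}^2}\right).
\end{align*}
Using $\Xb\bmu = \|\bmu\|_2^2\yb + \Qb\bmu$ together with the Neumann approximation of $(\Xb\Xb^\top)^{-1}$ from Stage 1 gives $\la\hat\btheta_{\text{LS}},\bmu\ra \gtrsim n\|\bmu\|_2^2/\tr(\bSigma)$. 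Expanding $\Xb\bSigma\Xb^\top = \|\bmu\|_{\bSigma}^2\yb\yb^\top + (\text{cross}) + \Qb\bSigma\Qb^\top$ and applying Hanson--Wright to the last term yields $\|\hat\btheta_{\text{LS}}\|_{\bSigma}^2 \lesssim (n^2\|\bmu\|_{\bSigma}^2 + n\|\bSigma\|_F^2 + n^2\|\bSigma\|_2^2)/\tr(\bSigma)^2$, where the three terms come respectively from the $\|\bmu\|_{\bSigma}^2\yb\yb^\top$ contribution, the mean $n\tr(\bSigma^2)$ of $\yb^\top\Qb\bSigma\Qb^\top\yb$, and its Hanson--Wright fluctuation. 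Taking the ratio and simplifying produces the exponent stated in the theorem.

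The main obstacle is Stage 1: the conclusion $y_i c_i \geq 0$ requires entry-wise, not merely spectral, control of $(\Xb\Xb^\top)^{-1}\yb$, and the rank-two cross terms $\yb\bmu^\top\Qb^\top + \Qb\bmu\,\yb^\top$ have entries correlated with $\yb$, so a naive triangle-inequality bound loses a factor of $n$. The polarization identity alluded to in the introduction is very likely deployed here to rewrite these cross contributions as differences of squared norms of shifted vectors (schematically, $\|\Qb\bmu + t\yb\|_2^2 - \|\Qb\bmu - t\yb\|_2^2$), so that each piece can be handled by a single clean Hanson--Wright application, recovering the sharp condition $\tr(\bSigma) \gtrsim n\sqrt{\log n}\,\|\bmu\|_{\bSigma}$. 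Getting this perturbation tight is what ultimately lets Stage 2 deliver the stated exponent instead of one weaker by a polynomial factor in $n$; conversely, the hypothesis $\|\bmu\|_2^2 \geq C\|\bmu\|_{\bSigma}$ is what keeps the $\Qb\bmu$ contribution to $\la\hat\btheta_{\text{LS}},\bmu\ra$ from cancelling the leading $n\|\bmu\|_2^2/\tr(\bSigma)$ term.
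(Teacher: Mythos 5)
Your two-stage architecture matches the paper's: equivalence of $\hat\btheta_{\text{SVM}}$ and $\hat\btheta_{\text{LS}}$ via the ``all points are support vectors'' condition, then a conditional sub-Gaussian tail bound $R(\btheta)\leq\exp(-c\la\btheta,\bmu\ra^2/\|\btheta\|_{\bSigma}^2)$, with concentration of $\Qb\Qb^\top$ and $\Qb\bSigma\Qb^\top$ around $\tr(\bSigma)\Ib$ and $\|\bSigma\|_F^2\Ib$, and the polarization identity supplying the sharp entrywise control of the cross terms (your guess about its role is exactly right). However, there is a genuine gap in the mechanism you propose for inverting $\Xb\Xb^\top$. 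Expanding $\Xb\Xb^\top = \|\bmu\|_2^2\yb\yb^\top + \yb\bmu^\top\Qb^\top + \Qb\bmu\yb^\top + \Qb\Qb^\top$, the hypotheses control the last three pieces relative to $\tr(\bSigma)$, but they place no upper bound on $\|\bmu\|_2$, so the rank-one spike $\|\bmu\|_2^2\yb\yb^\top$, of operator norm $n\|\bmu\|_2^2$, need not be small compared to $\tr(\bSigma)$. For instance, $\bSigma=\Ib$, $n$ fixed and $\|\bmu\|_2=d^{0.6}$ satisfies every hypothesis for large $d$ yet has $n\|\bmu\|_2^2\gg d=\tr(\bSigma)$; this high-SNR regime is precisely where the bound degenerates to $\exp(-c\|\bmu\|_2^4/\|\bmu\|_{\bSigma}^2)$ and is the most interesting one. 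Hence your claim that $\Xb\Xb^\top=\tr(\bSigma)\Ib_n+\Eb$ with $\|\Eb\|_2\ll\tr(\bSigma)$ fails in an admissible regime, and the Neumann expansion underlying both your sign argument in Stage 1 and your estimates in Stage 2 collapses there.

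The paper's resolution is to never treat the mean contribution perturbatively: Lemma~\ref{lemma:matrixcalculation} applies the Sherman--Morrison--Woodbury formula exactly, giving $\yb^\top(\Xb\Xb^\top)^{-1} = D^{-1}[(1+\yb^\top\Ab^{-1}\bnu)\,\yb^\top\Ab^{-1} - \yb^\top\Ab^{-1}\yb\cdot\bnu^\top\Ab^{-1}]$ with $\Ab=\Qb\Qb^\top$, $\bnu=\Qb\bmu$, and a scalar $D>0$ that absorbs the possibly huge $\|\bmu\|_2^2$-dependence; only $\Ab$ is treated as a perturbation of $\tr(\bSigma)\Ib$. The factor $D^{-1}$ is never bounded: being strictly positive it is irrelevant to the sign condition $\yb^\top(\Xb\Xb^\top)^{-1}\eb_i y_i>0$, and it appears squared in each of $I_1$, $I_2$, $I_3$ and cancels in the ratio entering the exponent. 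Your individual Stage-2 claims $\la\hat\btheta_{\text{LS}},\bmu\ra\gtrsim n\|\bmu\|_2^2/\tr(\bSigma)$ and $\|\hat\btheta_{\text{LS}}\|_{\bSigma}^2\lesssim(n^2\|\bmu\|_{\bSigma}^2+n\|\bSigma\|_F^2+n^2\|\bSigma\|_2^2)/\tr(\bSigma)^2$ are each off by this common factor when $\|\bmu\|_2$ is large; the argument can be repaired by carrying $D^{-1}$ through both bounds and observing the cancellation, but that repair is exactly the missing Woodbury step. A smaller inaccuracy: the cross term $\yb\bmu^\top\Qb^\top+\Qb\bmu\yb^\top$ has operator norm of order $n\|\bmu\|_{\bSigma}$ (not $\sqrt{n\log n}\,\|\bmu\|_{\bSigma}$); it is still dominated by $\tr(\bSigma)$ under the hypotheses, but a purely spectral $\ell_2\to\ell_\infty$ bound on the resulting correction to $(\Xb\Xb^\top)^{-1}\yb$ would then require $\tr(\bSigma)\gtrsim n^{3/2}\|\bmu\|_{\bSigma}$, which is why the entrywise polarization bound $|\bmu^\top\Qb^\top(\Qb\Qb^\top)^{-1}\eb_i|\lesssim\sqrt{\log n}\,\|\bmu\|_{\bSigma}/\tr(\bSigma)$ is indispensable for recovering the stated condition $\tr(\bSigma)\gtrsim n\sqrt{\log n}\,\|\bmu\|_{\bSigma}$.
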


Theorem~\ref{thm:BOnew} gives the population risk bound of the maximum margin classifier $\hat\btheta_{\text{SVM}}$. Based on the implicit bias of gradient descent for over-parameterized logistic regression (Lemma~\ref{lemma:implicitbias}), we have that the gradient descent iterates $\btheta^{(t)}$ satisfy that
\begin{align*}
    \lim_{t\rightarrow \infty} R(\btheta^{(t)}) &= \lim_{t\rightarrow \infty} R(\btheta^{(t)}/ \| \btheta^{(t)} \|_2) =R(\hat\btheta_{\text{SVM}}/\| \hat\btheta_{\text{SVM}} \|_2) = R(\hat\btheta_{\text{SVM}}).
\end{align*}
Therefore, the same risk bound in Theorem~\ref{thm:BOnew} also applies to the over-parameterized logistic regression trained by gradient descent.

\paragraph{Population Risk Lower Bound} We further present lower bounds on the population risk achieved by the maximum margin classifier, which demonstrate that our population risk upper bound in Theorem~\ref{thm:BOnew} is tight. We have the following theorem.

\begin{theorem}\label{thm:BO_lowerbound}
Consider Gaussian mixture model with covariance matrix $\bSigma$ and mean vectors $\bmu$ and $-\bmu$. 
Suppose that
    $\tr( \bSigma ) \geq C \max\big\{ n^{3/2} \|\bSigma \|_2, n\| \bSigma \|_F , n\sqrt{\log(n)}\cdot \| \bmu \|_{\bSigma} \big\}$, 
and
$\| \bmu \|_2^2 \geq C \| \bmu \|_{\bSigma}$ for some constant $C$.
Then there exist absolute constants $C',C''$, such that
the following results hold:
\begin{enumerate}[leftmargin = *]
    \item If $ n \| \bmu \|_{\bSigma}^2 \geq C (\| \bSigma \|_F^2 + n \| \bSigma\|_2^2)$, then with probability at least $1 - n^{-1}$, 
    $$ R(\hat\btheta_{\text{SVM}}) \geq C''\exp\big( - C'  \| \bmu \|_2^4  /  \| \bmu \|_{\bSigma}^2  \big).$$
    \item If $\| \bSigma \|_F^2 \geq C n (\| \bmu \|_{\bSigma}^2 + \| \bSigma\|_2^2)$, then with probability at least $1 - n^{-1}$, $$R(\hat\btheta_{\text{SVM}}) \geq C''\exp \big(  - C' n \| \bmu \|_2^4 / \| \bSigma\|_F^2  \big).$$ 
\end{enumerate}
\end{theorem}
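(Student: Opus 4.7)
The plan is to convert the population risk into a one-dimensional Gaussian tail probability, reuse the equivalence between the max-margin classifier and the min-norm interpolator that underlies Theorem~\ref{thm:BOnew}, and then invert the concentration inequalities used for the upper bound to obtain matching two-sided control of $\la\hat\btheta_{\text{SVM}},\bmu\ra$ and $\|\hat\btheta_{\text{SVM}}\|_{\bSigma}$.

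\paragraph{Step 1: reduction to a Gaussian tail.}
For the Gaussian mixture, conditioning on $y$ we have $y\la\btheta,\xb\ra = \la\btheta,\bmu\ra + y\la\btheta,\qb\ra$ with $y\la\btheta,\qb\ra \sim \mathcal{N}(0,\|\btheta\|_{\bSigma}^2)$, so
\begin{equation*}
R(\btheta) \;=\; \Phi\!\left(-\la\btheta,\bmu\ra/\|\btheta\|_{\bSigma}\right).
\end{equation*}
Since $\Phi(-t)\geq C''\exp(-C' t^2)$ for all $t\geq 0$ (with $C'>1/2$ and a suitably small $C''$, obtained from the standard Mills-ratio lower bound $\Phi(-t)\geq \tfrac{t}{\sqrt{2\pi}(1+t^{2})}e^{-t^2/2}$), it suffices to prove upper bounds on the ratio $\la\hat\btheta_{\text{SVM}},\bmu\ra^{2}/\|\hat\btheta_{\text{SVM}}\|_{\bSigma}^{2}$ of the form $C\|\bmu\|_{2}^{4}/\|\bmu\|_{\bSigma}^{2}$ in case~1 and $Cn\|\bmu\|_{2}^{4}/\|\bSigma\|_{F}^{2}$ in case~2. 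Note that the Mills-ratio lower bound eats a factor $t$, but this factor can be absorbed into the exponent by taking $C'$ slightly larger than $1/2$, so no constraint on the size of $t$ is needed.

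\paragraph{Step 2: use max-margin $=$ min-norm interpolator.}
Under the stated hypotheses the key lemma of the paper (the polarization-identity equivalence step used to prove Theorem~\ref{thm:BOnew}) gives $\hat\btheta_{\text{SVM}}=\Xb^{\top}\balpha$ with $\balpha=(\Xb\Xb^{\top})^{-1}\yb$. Writing $\Xb=\yb\bmu^{\top}+\Qb$, the Gram matrix decomposes as $\Xb\Xb^{\top}=\Qb\Qb^{\top}+\|\bmu\|_{2}^{2}\yb\yb^{\top}+\yb\bmu^{\top}\Qb^{\top}+\Qb\bmu\yb^{\top}$. The same concentration facts that are invoked in the upper-bound proof give $\Qb\Qb^{\top}=\tr(\bSigma)\Ib_{n}+\Eb$ with $\|\Eb\|_{2}\lesssim\sqrt{n}\|\bSigma\|_{F}+n\|\bSigma\|_{2}$, and Sherman--Morrison-type expansions then yield the two-sided estimates
\begin{equation*}
\la\hat\btheta_{\text{SVM}},\bmu\ra \;\asymp\; \tfrac{n\|\bmu\|_{2}^{2}}{\tr(\bSigma)},\qquad \|\hat\btheta_{\text{SVM}}\|_{\bSigma}^{2} \;\asymp\; \tfrac{n^{2}\|\bmu\|_{\bSigma}^{2}+n\|\bSigma\|_{F}^{2}+n^{2}\|\bSigma\|_{2}^{2}}{\tr(\bSigma)^{2}},
\end{equation*}
each holding with probability at least $1-n^{-1}$. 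The hypothesis $\|\bmu\|_{2}^{2}\geq C\|\bmu\|_{\bSigma}$ is exactly what is needed to ensure the cross term $\balpha^{\top}\Qb\bmu$ is dominated by $\balpha^{\top}\yb\|\bmu\|_{2}^{2}$ in the numerator, and likewise the cross term $2\balpha^{\top}\yb\bmu^{\top}\bSigma\Qb^{\top}\balpha$ is dominated in the denominator.

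\paragraph{Step 3: case analysis.}
Combining the two estimates in Step~2 gives $\la\hat\btheta_{\text{SVM}},\bmu\ra^{2}/\|\hat\btheta_{\text{SVM}}\|_{\bSigma}^{2}\asymp n\|\bmu\|_{2}^{4}/(n\|\bmu\|_{\bSigma}^{2}+\|\bSigma\|_{F}^{2}+n\|\bSigma\|_{2}^{2})$. In case~1 the assumption $n\|\bmu\|_{\bSigma}^{2}\geq C(\|\bSigma\|_{F}^{2}+n\|\bSigma\|_{2}^{2})$ makes $n\|\bmu\|_{\bSigma}^{2}$ the dominant term in the denominator, so the ratio is $\lesssim\|\bmu\|_{2}^{4}/\|\bmu\|_{\bSigma}^{2}$; in case~2 the assumption $\|\bSigma\|_{F}^{2}\geq Cn(\|\bmu\|_{\bSigma}^{2}+\|\bSigma\|_{2}^{2})$ makes $\|\bSigma\|_{F}^{2}$ dominate, so the ratio is $\lesssim n\|\bmu\|_{2}^{4}/\|\bSigma\|_{F}^{2}$. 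Plugging into the Gaussian-tail lower bound from Step~1 finishes the proof.

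\paragraph{Main obstacle.}
The hardest part is the \emph{lower} bound on $\|\hat\btheta_{\text{SVM}}\|_{\bSigma}^{2}$: both the signal contribution $\|\bmu\|_{\bSigma}^{2}(\balpha^{\top}\yb)^{2}$ and the quadratic-form contribution $\balpha^{\top}\Qb\bSigma\Qb^{\top}\balpha$ must be controlled from below simultaneously, and one must verify that the cross term cannot conspire to cancel them. Here the analysis of $\balpha^{\top}\Qb\bSigma\Qb^{\top}\balpha\approx\|\bSigma^{1/2}\Qb^{\top}\yb\|_{2}^{2}/\tr(\bSigma)^{2}\asymp n\|\bSigma\|_{F}^{2}/\tr(\bSigma)^{2}$, via a Hanson--Wright argument, and the $n\|\bSigma\|_{2}^{2}$ contribution coming from the Sherman--Morrison perturbation error, are the delicate pieces that pin down the correct scaling of the denominator and thereby yield a lower bound that matches Theorem~\ref{thm:BOnew} up to absolute constants.
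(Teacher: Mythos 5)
Your plan follows the same route as the paper's proof: a Gaussian lower tail bound $R(\btheta)\geq C''\exp(-C'(\btheta^\top\bmu)^2/\|\btheta\|_{\bSigma}^2)$, the equivalence $\hat\btheta_{\text{SVM}}=\Xb^\top(\Xb\Xb^\top)^{-1}\yb$ from Proposition~\ref{prop:interpolationregression}, the Sherman--Morrison expansion of $\yb^\top(\Xb\Xb^\top)^{-1}$, the concentration $\|\Qb\Qb^\top-\tr(\bSigma)\Ib\|_2\leq\epsilon_\lambda$, and a final case split. Steps 1 and the numerator upper bound in Step 2 are fine (and your remark that the Mills-ratio prefactor can be absorbed by taking $C'>1/2$ is correct; one also needs the trivial bound $\Phi(-t)\geq\Phi(-1)$ for $t\leq 1$). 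A cosmetic point: your two displayed $\asymp$ estimates silently drop the factor $D^{-1}$ (resp.\ $D^{-2}$) from Lemma~\ref{lemma:matrixcalculation}, which is not $\Theta(1)$ under the stated hypotheses; the paper carries it as $H=[D\tr(\bSigma)]^{-2}$ and cancels it only in the ratio, and you should do the same.

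The substantive gap is the lower-bound direction of $\|\hat\btheta_{\text{SVM}}\|_{\bSigma}^2\asymp(n^2\|\bmu\|_{\bSigma}^2+n\|\bSigma\|_F^2+n^2\|\bSigma\|_2^2)/\tr(\bSigma)^2$, which you flag as the ``main obstacle'' but do not actually resolve. Writing $J_1=\yb^\top(\Xb\Xb^\top)^{-1}\yb\cdot\|\bmu\|_{\bSigma}$ and $J_2=\|\Qb^\top(\Xb\Xb^\top)^{-1}\yb\|_{\bSigma}$, Cauchy--Schwarz only gives $\|\hat\btheta_{\text{SVM}}\|_{\bSigma}\geq|J_1-J_2|$, so when $J_1\approx J_2$ the cross term can in principle cancel both contributions; the hypothesis $\|\bmu\|_2^2\geq C\|\bmu\|_{\bSigma}$ controls the cross term in the \emph{numerator} $\yb^\top(\Xb\Xb^\top)^{-1}\Xb\bmu$, not this one. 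What actually rules out cancellation is the case hypothesis itself: in Case~1 it forces $J_1\geq 2J_2$ (so $|J_1-J_2|\geq J_1/2\gtrsim\sqrt{H}\,n\|\bmu\|_{\bSigma}$), and in Case~2 it forces $J_2\geq 2J_1$ (so $|J_1-J_2|\geq J_2/2\gtrsim\sqrt{H\,n}\,\|\bSigma\|_F$). This is exactly how the paper argues, and it is why the theorem is stated only in these two regimes. Relatedly, your claimed simultaneous lower bound including the $n^2\|\bSigma\|_2^2$ term is not provable: that term enters the upper bound only through the worst-case perturbations $\epsilon_\lambda,\epsilon_\lambda'$, and the paper's lower bound on $J_2^2$ is of the form $H(c\,n\|\bSigma\|_F^2-c'\,n^2\|\bSigma\|_2^2)$, i.e.\ the operator-norm term appears with a \emph{negative} sign and must be assumed small relative to $n\|\bSigma\|_F^2$ (part of the Case~2 hypothesis). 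Once you replace the blanket two-sided estimate by the reverse triangle inequality plus the dominance argument in each case, the proposal coincides with the paper's proof.
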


Theorem~\ref{thm:BO_lowerbound} gives lower bounds for the population risk in two settings: (i) $ n \| \bmu \|_{\bSigma}^2 \geq C (\| \bSigma \|_F^2 + n \| \bSigma\|_2^2)$; and (ii) $\| \bSigma \|_F^2 \geq C n (\| \bmu \|_{\bSigma}^2 + \| \bSigma\|_2^2)$. Note that in the population risk upper bound in Theorem~\ref{thm:BOnew}, there are three terms in the denominator of the exponent: $\| \bmu \|_{\bSigma}^2 $, $\| \bSigma \|_F^2 $, and $n  \| \bSigma\|_2^2$. Therefore, setting (i) and setting (ii) in Theorem~\ref{thm:BO_lowerbound} correspond to the cases when the first or the second term is the leading term, respectively. Moreover, it is also easy to check that under both settings, our lower bound in Theorem~\ref{thm:BO_lowerbound} matches the upper bound in Theorem~\ref{thm:BOnew}. This suggests that our population risk bound in Theorem~\ref{thm:BOnew} is tight.

\paragraph{Implications for Specific Examples.} \CC{Theorem~\ref{thm:BOnew} holds for general covariance matrices $\bSigma$, and illustrates how the spectrum of $\bSigma$ affects the population risk of \CC{the maximum margin classifier}. This makes our result more general than the recent results in \citet{chatterji2020finite,wang2020benign}, where the population risk bounds are given only in terms of the sample size $n$, dimension $d$ and the norm of the mean vector $\|\bmu\|_2$. In fact, when we specialize our general result to the isotropic setting, our result also provides a tighter risk bound than these existing results. Specifically, our population risk bound for the isotropic setting is given in the following corollary. }



\begin{corollary}[\CC{Isotropic sub-Gaussian mixtures}]\label{col:isotropic}
Consider the setting where $\bSigma = \Ib$. 
Suppose that
$d \geq C \max\big\{ n^{2} , n\sqrt{\log(n)}\cdot \| \bmu \|_{2} \big\}$ 
and
$\| \bmu \|_2 \geq C $ 
for some absolute constant $C$. Then with probability at least $1 - n^{-1}$, the maximum margin classifier $\hat\btheta_{\text{SVM}}$ has the following risk bound
\begin{align*}
    R(\hat\btheta_{\text{SVM}} ) \leq  \exp\bigg( - \frac{  C' n \| \bmu \|_2^4  }{  n  \| \bmu \|_{2}^2+ d } \bigg),
\end{align*}
where $C'$ is an absolute constant.
\end{corollary}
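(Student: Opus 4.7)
The plan is to derive this corollary as a direct specialization of Theorem~\ref{thm:BOnew} to the isotropic case $\bSigma = \Ib$. First I would compute each of the spectral quantities appearing in Theorem~\ref{thm:BOnew} under this choice: $\tr(\bSigma) = d$, $\|\bSigma\|_2 = 1$, $\|\bSigma\|_F = \sqrt{d}$, and $\| \bmu \|_{\bSigma} = \sqrt{\bmu^\top \bmu} = \| \bmu \|_2$.

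Next I would verify that the hypotheses of Theorem~\ref{thm:BOnew} are implied by the stated assumptions $d \geq C \max\{ n^{2}, n\sqrt{\log n}\,\|\bmu\|_2\}$ and $\|\bmu\|_2 \geq C$. Plugging in the identities above, the hypothesis on $\tr(\bSigma)$ becomes
\begin{align*}
d \;\geq\; C \max\bigl\{ n^{3/2},\, n\sqrt{d},\, n\sqrt{\log n}\,\| \bmu \|_2 \bigr\}.
\end{align*}
The term $d \geq C n \sqrt{d}$ is equivalent to $d \geq C^2 n^2$, which subsumes $d \geq C n^{3/2}$; this is exactly (up to adjusting the constant) the $d \geq C n^2$ requirement from the corollary. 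The third term matches the second condition in the corollary directly. The condition $\|\bmu\|_2^2 \geq C \|\bmu\|_{\bSigma}$ reduces to $\|\bmu\|_2 \geq C$, which is assumed. So the hypotheses of Theorem~\ref{thm:BOnew} all hold.

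Finally I would substitute the same identities into the conclusion of Theorem~\ref{thm:BOnew}. The denominator in the exponent becomes
\begin{align*}
n \|\bmu\|_{\bSigma}^2 + \|\bSigma\|_F^2 + n \|\bSigma\|_2^2 \;=\; n \|\bmu\|_2^2 + d + n.
\end{align*}
Since $d \geq C n^2 \geq n$ for $n \geq 1$ (with $C$ absorbed into constants), the term $n$ is dominated by $d$, so the denominator is bounded above by $2(n \|\bmu\|_2^2 + d)$ up to an absolute constant. Adjusting $C'$ accordingly yields the stated bound
\begin{align*}
R(\hat\btheta_{\text{SVM}}) \;\leq\; \exp\!\bigg( - \frac{ C' n \| \bmu \|_2^4}{n \| \bmu \|_2^2 + d} \bigg)
\end{align*}
with probability at least $1 - n^{-1}$.

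There is no real obstacle here: the proof is a bookkeeping exercise in checking that the isotropic specialization of each spectral quantity reduces the general conditions and bound of Theorem~\ref{thm:BOnew} to the simpler form stated in the corollary. The only mildly nontrivial step is noticing that among the three hypotheses in Theorem~\ref{thm:BOnew}, the one involving $n \|\bSigma\|_F$ is the binding one that produces the $d \geq C n^2$ requirement.
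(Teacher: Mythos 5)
Your proposal is correct and follows essentially the same route as the paper: specialize the spectral quantities ($\tr(\bSigma)=d$, $\|\bSigma\|_2=1$, $\|\bSigma\|_F=\sqrt{d}$, $\|\bmu\|_{\bSigma}=\|\bmu\|_2$), check that the hypotheses of Theorem~\ref{thm:BOnew} reduce to the stated conditions, and substitute into the bound. Your added observations (that the $n\|\bSigma\|_F$ condition is the binding one giving $d\geq Cn^2$, and that the extra $n\|\bSigma\|_2^2=n$ term in the denominator is absorbed since $d\geq n$) are details the paper leaves implicit but are exactly the right bookkeeping.
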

\begin{remark}
 \citet{chatterji2020finite} recently gave a risk bound of order $\exp( - \Omega( \| \bmu \|_2^4 / d ))$ for sub-Gaussian mixture models under the condition that $d = \Omega( \max\{ n^2\log(n), n\| \bmu \|_2^2 \} )$. In comparison, our result in Corollary~\ref{col:isotropic} only requires the condition $d = \Omega(\max\{n^2,n\sqrt{\log(n)}\cdot \| \bmu \|_2\} )$, which is milder. Moreover, when the stronger condition $d = \Omega(n\| \bmu \|_2^2 )$ holds, our risk bound becomes $\exp( - \Omega( n \| \bmu \|_2^4 / d ))$, which is better than the result of \citet{chatterji2020finite} by a factor of $n$ in the exponent.
\end{remark}





Besides being tighter than previous results  when reduced to the isotropic setting, 
Theorem~\ref{thm:BOnew} 
covers both the isotropic and anisotropic settings. In the following, we provide some case studies under the anisotropic setting and show how the decay rate of the eigenvalues of the covariance matrix $\bSigma$ affects the population risk.

It is worth noting that the assumption of Theorem~\ref{thm:BOnew} requires that $\tr(\bSigma)$ is large enough, while the risk bound in Theorem~\ref{thm:BOnew} only depends on $\| \bSigma \|_F$ and $\| \bSigma \|_2$. In the over-parameterized setting where the dimension $d$ is large, it is possible that for certain covariance matrices $\bSigma$ with appropriate eigenvalue decay rates, $\tr(\bSigma) \gg 1$ while $\| \bSigma \|_F, \| \bSigma \|_2 = O(1) $. This implies that for many anisotropic sub-Gaussian mixture models,  the assumptions in Theorem~\ref{thm:BOnew} can be easily satisfied, while the risk bound can be small at the same time. 
Following this intuition, we study the conditions under which the the maximum margin interpolator $\hat\btheta_{\text{SVM}}$ achieves $o(1)$ population risk. We denote by $\lambda_k$ the $k$-th largest eigenvalue of $\bSigma$, and consider a polynomial decay 
spectrum $\{ \lambda_k = k^{-\alpha} \}_{k=1}^d$, where we introduce a parameter $\alpha$ to control the eigenvalue decay rate.
We have the following corollary.

\begin{corollary}[\CC{Anisotripic sub-Gaussian mixtures with polynomial spectrum decay}]\label{col:polynomialdecay}
   \CC{Suppose that $\lambda_k = k^{-\alpha}$, $n$ is a large enough constant, and one of the following conditions hold:
\begin{enumerate}[leftmargin = *]
    \item $\alpha\in [0, 1/2)$, $ d = \tilde\Omega(  ( \| \bmu \|_{\bSigma})^{\frac{1}{1 - \alpha}}  )$, and $\| \bmu \|_2 = \omega( 1 + d^{1/4 - \alpha/2}  )$.
    \item $\alpha = 1/2$, $ d = \tilde\Omega( \| \bmu \|_{\bSigma}^2  )$, and $\| \bmu \|_2 = \omega(  (\log(d))^{1/4} )$.
    \item $\alpha\in (1/2, 1 )$, $ d = \tilde\Omega(  ( \| \bmu \|_{\bSigma})^{\frac{1}{1 - \alpha}}  )$, and $\| \bmu \|_2 = \omega(1)$.
\end{enumerate}
Then with probability at least $1 - n^{-1}$, the population risk of the maximum margin classifier satisfies $R(\hat\btheta_{\text{SVM}}) = o(1)$. }
\end{corollary}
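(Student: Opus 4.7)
The plan is to instantiate Theorem~\ref{thm:BOnew} with the polynomial spectrum $\lambda_k = k^{-\alpha}$, verify its hypotheses in each of the three $\alpha$-regimes, and then track which term in the denominator of the risk exponent dominates. The first step is a standard $p$-series computation: $\tr(\bSigma) = \sum_{k=1}^d k^{-\alpha} = \Theta(d^{1-\alpha})$, $\|\bSigma\|_2 = \lambda_1 = 1$, and $\|\bSigma\|_F^2 = \sum_{k=1}^d k^{-2\alpha}$ scales as $\Theta(d^{1-2\alpha})$ for $\alpha \in [0,1/2)$, as $\Theta(\log d)$ for $\alpha = 1/2$, and as $\Theta(1)$ for $\alpha \in (1/2,1)$. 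I will also use the crude bound $\|\bmu\|_{\bSigma}^2 = \bmu^\top \bSigma \bmu \leq \|\bSigma\|_2 \|\bmu\|_2^2 = \|\bmu\|_2^2$ throughout.

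Next I would verify the two preconditions of Theorem~\ref{thm:BOnew}. The clause $\|\bmu\|_2^2 \geq C\|\bmu\|_{\bSigma}$ follows from $\|\bmu\|_{\bSigma} \leq \|\bmu\|_2$ together with $\|\bmu\|_2 = \omega(1)$, which is implied in every case. The trace condition splits into three pieces. First, $\tr(\bSigma) \gtrsim n^{3/2}\|\bSigma\|_2 = n^{3/2}$ is automatic for large $d$ with $n$ constant. Second, $\tr(\bSigma) \gtrsim n\|\bSigma\|_F$ reduces to comparing $d^{1-\alpha}$ to $nd^{1/2-\alpha}$, $n\sqrt{\log d}$, or $n$ respectively, each again automatic for large $d$. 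Third, $\tr(\bSigma) \gtrsim n\sqrt{\log n}\cdot\|\bmu\|_{\bSigma}$ becomes $d^{1-\alpha} = \tilde\Omega(\|\bmu\|_{\bSigma})$ after absorbing the $n\sqrt{\log n}$ factor, i.e.\ $d = \tilde\Omega(\|\bmu\|_{\bSigma}^{1/(1-\alpha)})$, matching the corollary's hypothesis exactly.

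With the preconditions in hand, Theorem~\ref{thm:BOnew} together with $\|\bmu\|_{\bSigma}^2 \leq \|\bmu\|_2^2$ and $\|\bSigma\|_2 = 1$ yields
$$R(\hat\btheta_{\text{SVM}}) \leq \exp\Bigg(-\frac{C' n \|\bmu\|_2^4}{n\|\bmu\|_2^2 + \|\bSigma\|_F^2 + n}\Bigg),$$
with a possibly adjusted absolute constant $C'$. To conclude $R(\hat\btheta_{\text{SVM}}) = o(1)$ it suffices to show $n\|\bmu\|_2^4$ is super-linear in the denominator. In Case 1 this demands both $\|\bmu\|_2 = \omega(1)$ (to dominate $n\|\bmu\|_2^2$) and $\|\bmu\|_2 = \omega(d^{1/4 - \alpha/2})$ (to dominate $\|\bSigma\|_F^2 = \Theta(d^{1-2\alpha})$), which together are the stated $\|\bmu\|_2 = \omega(1 + d^{1/4 - \alpha/2})$. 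In Case 2 it demands $\|\bmu\|_2 = \omega((\log d)^{1/4})$, which also dominates the $n\|\bmu\|_2^2$ term since $(\log d)^{1/4}\to\infty$. In Case 3 it demands only $\|\bmu\|_2 = \omega(1)$. In all three regimes the exponent tends to $-\infty$.

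There is no genuine technical obstacle here: the corollary is a bookkeeping exercise whose purpose is to exhibit concrete anisotropic models in which Theorem~\ref{thm:BOnew} produces benign overfitting. The only mildly subtle point is the decision to bound $\|\bmu\|_{\bSigma}$ by $\|\bmu\|_2$: a finer estimate that exploits the alignment of $\bmu$ with the eigenvectors of $\bSigma$ could at most relax the $\tilde\Omega$ condition on $d$, but it cannot improve the dependence of the risk on $\|\bmu\|_2$, because in the worst case the $n\|\bmu\|_{\bSigma}^2$ term saturates at $n\|\bmu\|_2^2$ and reappears in the denominator.
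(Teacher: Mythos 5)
Your proposal is correct and follows essentially the same route as the paper's own proof (given as Corollary~\ref{col:polynomialdecay_with_n} in Appendix~\ref{section:corollaryproof}): integral/$p$-series estimates for $\tr(\bSigma)$ and $\|\bSigma\|_F^2$ in the three regimes, verification of the preconditions of Theorem~\ref{thm:BOnew}, and a case-by-case check that $n\|\bmu\|_2^4$ dominates each term in the denominator of the exponent, using $\|\bmu\|_{\bSigma}\le\|\bmu\|_2$ exactly as the paper does. The only difference is that the paper first proves the version with explicit $n$-dependence and then specializes to constant $n$, which is immaterial.
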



Corollary~\ref{col:polynomialdecay} follows by calculating the orders of $\tr(\bSigma) = \sum_{k=1}^d \lambda_k$ and $\| \bSigma \|_F^2 =  \sum_{k=1}^d \lambda_k^2$. Here we treat the sample size $n$ as a constant for simplicity. A full version of the corollary with detailed dependency on $n$ is given as Corollary~\ref{col:polynomialdecay_with_n} in Appendix~\ref{section:corollaryproof} together with the proof. 
Intuitively,  when $\| \bmu \|_2$ is large, the two classes are far away from each other and therefore linear classifiers can achieve small population risk. From Corollary~\ref{col:polynomialdecay}, we can see that the decay rate of the eigenvalues of the covariance matrix $\bSigma$ determines how large $\| \bmu \|_2 $ needs to be to ensure small population risk: when the $\{\lambda_k\}$ decays faster (i.e., when $\alpha$ is larger), the maximum margin classifier can achieve $o(1)$ population risk with a smaller $\| \bmu \|_2 $. 

Corollary~\ref{col:polynomialdecay} also exhibits a certain ``phase transition'' regarding the eigenvalue decay rate and the conditions on $\| \bmu \|_2$. We can see that the eigenvalue decay rate can be divided into three regimes $\alpha\in [0, 1/2)$, $\alpha =  1/2$ and $\alpha\in (1/2, 1)$. Under the condition that $ d = \tilde\Omega(  ( \| \bmu \|_{\bSigma})^{\frac{1}{1 - \alpha}}  )$, achieving $o(1)$ risk in each of these regimes requires $\| \bmu \|_2 = \omega( d^{1/4} ) $, $\| \bmu \|_2 = \omega( [\log(d)]^{1/4} ) $, and $\| \bmu \|_2 = \omega( 1) $ respectively. Specifically, when $\alpha\in (1/2, 1)$, the condition on $\bmu$ is independent of the dimension $d$. This means that when $\alpha\in (1/2, 1)$, for any $\epsilon > 0$, as long as $\| \bmu \|_2 = \Omega( \sqrt{\log(\epsilon)})$, we have
\begin{align*}
    \lim_{d\rightarrow \infty} R(\hat\btheta_{\text{SVM}}) \leq \epsilon.
\end{align*}
Therefore, our result covers the infinite dimensional setting when the eigenvalues of the covariance matrix $\bSigma$ have an appropriate decay rate, i.e., $\alpha\in (1/2, 1)$.

\CC{Another interesting observation in Theorem~\ref{thm:BOnew} is that it uses both $\| \bmu \|_{\bSigma}$ and $\| \bmu \|_2$, and therefore the alignment between $\bmu$ and the eigenvectors of $\bSigma$ can affect the population risk bound. In our discussion above, we have mainly focused on the worst case scenario where the direction of $\bmu$ aligns with the first eigenvector of $\bSigma$. In the following corollary, we discuss the case where $\bmu$ is parallel to the eigenvector of $\bSigma$ corresponding to the eigenvalue $\lambda_k$.}

\begin{corollary}[\CC{Risk bounds for $\bmu$ along different directions}]\label{col:alignment_mu_Sigma}
\CC{Suppose that $\bSigma \bmu = \lambda_k \bmu$ for some $k \in [d]$, 
$\tr( \bSigma ) \geq C \max\big\{ n^{3/2} \|\bSigma \|_2, n\| \bSigma \|_F , n\sqrt{\lambda_k \log(n)}\cdot \| \bmu \|_{2} \big\}$ 
and
\CC{$\| \bmu \|_2^2 \geq C \lambda_k$} 
for some absolute constant $C$. Then with probability at least $1 - n^{-1}$, the maximum margin classifier $\hat\btheta_{\text{SVM}}$ has the following risk bound
\begin{align*}
    R(\hat\btheta_{\text{SVM}}) \leq \exp\Bigg(  \frac{ - C' n \| \bmu \|_2^4  }{  n \lambda_k\cdot  \| \bmu \|_{2}^2+ \| \bSigma\|_F^2 +  n\| \bSigma\|_2^2 } \Bigg),
\end{align*}
where $C'$ is an absolute constant.}
\end{corollary}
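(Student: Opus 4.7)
The proof will be a direct corollary of Theorem~\ref{thm:BOnew}, so the plan is simply to verify that the hypotheses of Corollary~\ref{col:alignment_mu_Sigma} imply those of Theorem~\ref{thm:BOnew} and that the stated bound reduces to the one in Corollary~\ref{col:alignment_mu_Sigma}. The key identity I will exploit is that when $\bmu$ is an eigenvector of $\bSigma$ with eigenvalue $\lambda_k$, we have
\begin{align*}
    \| \bmu \|_{\bSigma}^2 = \bmu^\top \bSigma \bmu = \lambda_k\, \bmu^\top \bmu = \lambda_k \| \bmu \|_2^2,
\end{align*}
so $\| \bmu \|_{\bSigma} = \sqrt{\lambda_k}\, \| \bmu \|_2$.

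With this identity in hand, the plan is to substitute $\|\bmu\|_{\bSigma} = \sqrt{\lambda_k}\|\bmu\|_2$ into the two assumptions of Theorem~\ref{thm:BOnew}. The trace condition
$\tr(\bSigma) \geq C\max\{n^{3/2}\|\bSigma\|_2,\, n\|\bSigma\|_F,\, n\sqrt{\log(n)}\,\|\bmu\|_{\bSigma}\}$
becomes exactly the trace condition stated in the corollary, since $n\sqrt{\log(n)}\,\|\bmu\|_{\bSigma} = n\sqrt{\lambda_k \log(n)}\,\|\bmu\|_2$. The second condition $\|\bmu\|_2^2 \geq C\|\bmu\|_{\bSigma}$ reads $\|\bmu\|_2^2 \geq C\sqrt{\lambda_k}\,\|\bmu\|_2$, which is equivalent (after squaring, with an adjusted absolute constant) to $\|\bmu\|_2^2 \geq C^2 \lambda_k$, matching the hypothesis $\|\bmu\|_2^2 \geq C\lambda_k$ of the corollary after relabeling the constant.

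Having verified the hypotheses, I would then invoke Theorem~\ref{thm:BOnew} to conclude that, with probability at least $1 - n^{-1}$,
\begin{align*}
    R(\hat\btheta_{\text{SVM}}) \leq \exp\Bigg(  \frac{ - C' n \| \bmu \|_2^4  }{  n  \| \bmu \|_{\bSigma}^2+ \| \bSigma\|_F^2 +  n\| \bSigma\|_2^2 } \Bigg).
\end{align*}
Substituting $\|\bmu\|_{\bSigma}^2 = \lambda_k \|\bmu\|_2^2$ into the denominator yields the stated bound in Corollary~\ref{col:alignment_mu_Sigma}. Since the proof is essentially algebraic substitution into an established theorem, there is no serious obstacle; the only step requiring any attention is ensuring that the absolute constants $C$ in the two assumptions can be made consistent across both statements, which is immediate since squaring and multiplying by constants preserves the form of the assumptions.
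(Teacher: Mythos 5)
Your proposal is correct and matches the intended argument: the paper gives no separate proof of this corollary precisely because it follows from Theorem~\ref{thm:BOnew} by the substitution $\| \bmu \|_{\bSigma}^2 = \bmu^\top \bSigma \bmu = \lambda_k \| \bmu \|_2^2$, exactly as you describe. Your verification that the two hypotheses of the theorem follow (up to relabeling absolute constants) from those of the corollary, and that the denominator transforms accordingly, is complete.
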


\CC{We can see that, when $\bmu$ aligns with the eigendirections corresponding to a smaller eigenvalue of $\bSigma$, then Corollary~\ref{col:alignment_mu_Sigma} holds under milder conditions on $\tr(\bSigma)$ and $\| \bmu \|_2$, and the population risk achieved by the maximum margin solution is also better. This phenomenon perfectly matches the geometric intuition of sub-Gaussian mixture classifications, as is illustrated in Figure~\ref{fig:mualignment}.}

\begin{figure}[ht!]
	\begin{center}
 		\hspace{-0.2in}
			\subfigure[$\bmu $
 aligns with $\vb_2$]{\includegraphics[height=1.3in,angle=0]{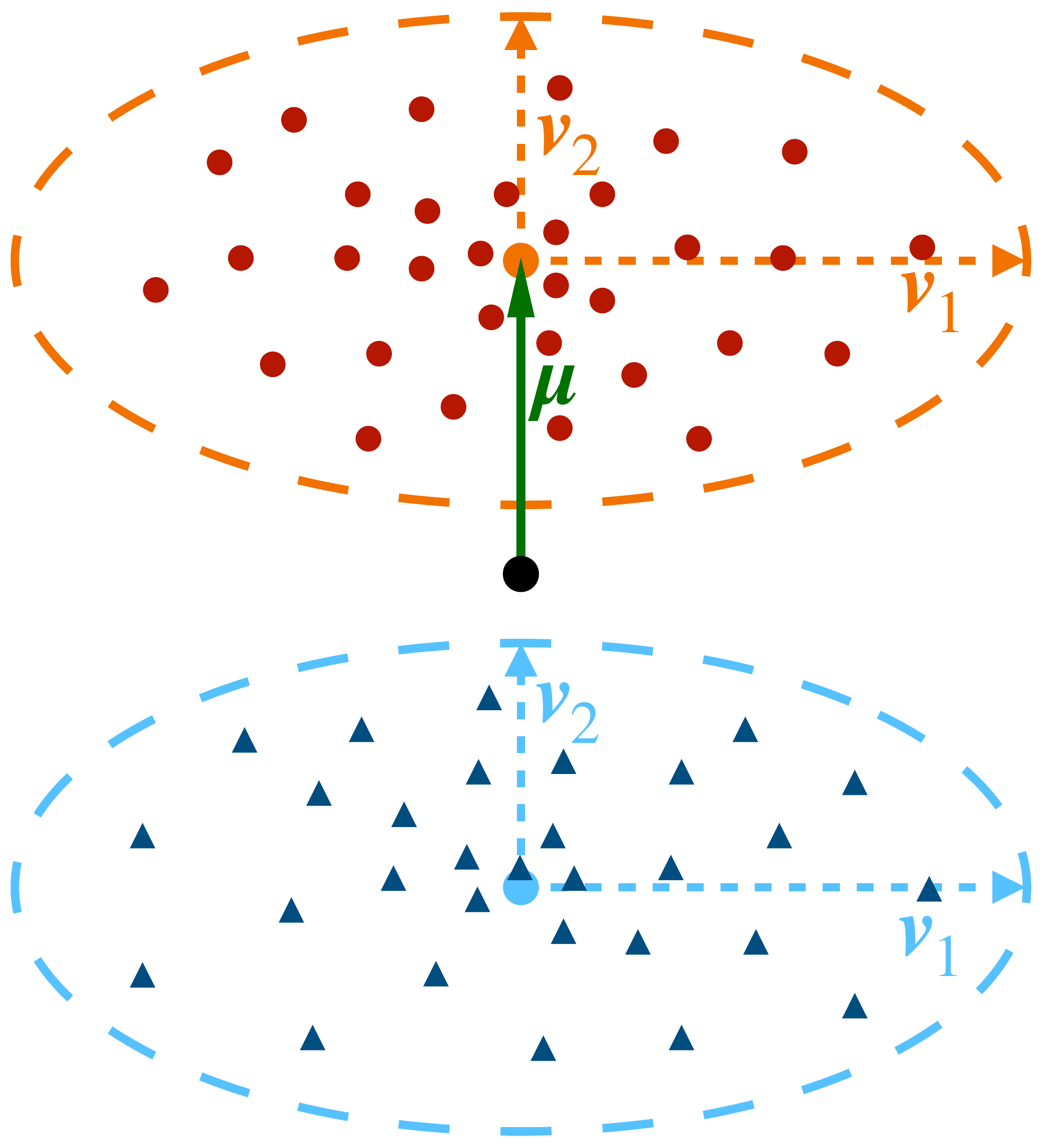}\label{subfig:v2}}\qquad
			\subfigure[$\bmu $
 points at a random direction]{\includegraphics[height=1.3in,angle=0]{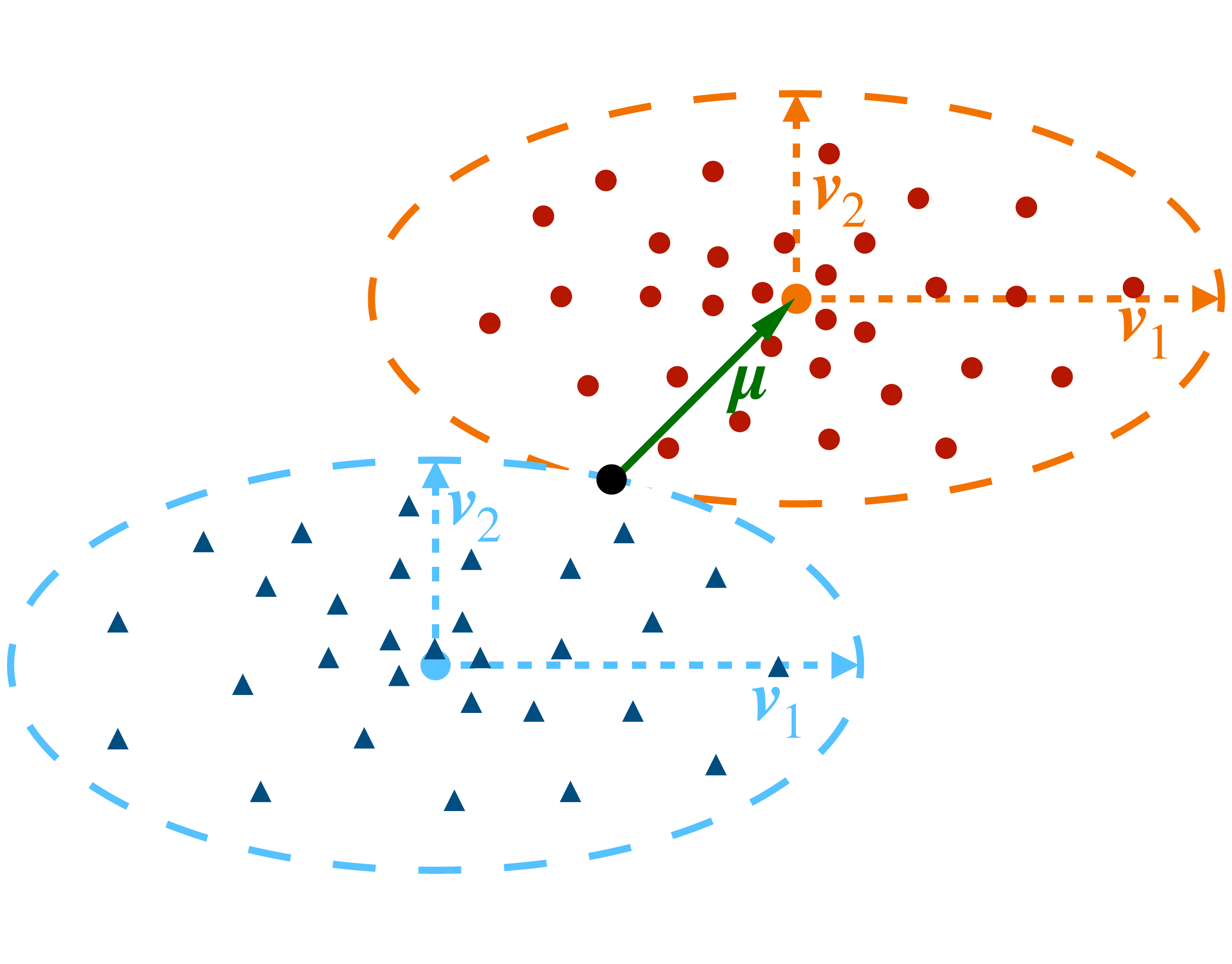}\label{subfig:mix}}\qquad
			\subfigure[$\bmu $
 aligns with $\vb_1$]{\includegraphics[height=1.3in,angle=0]{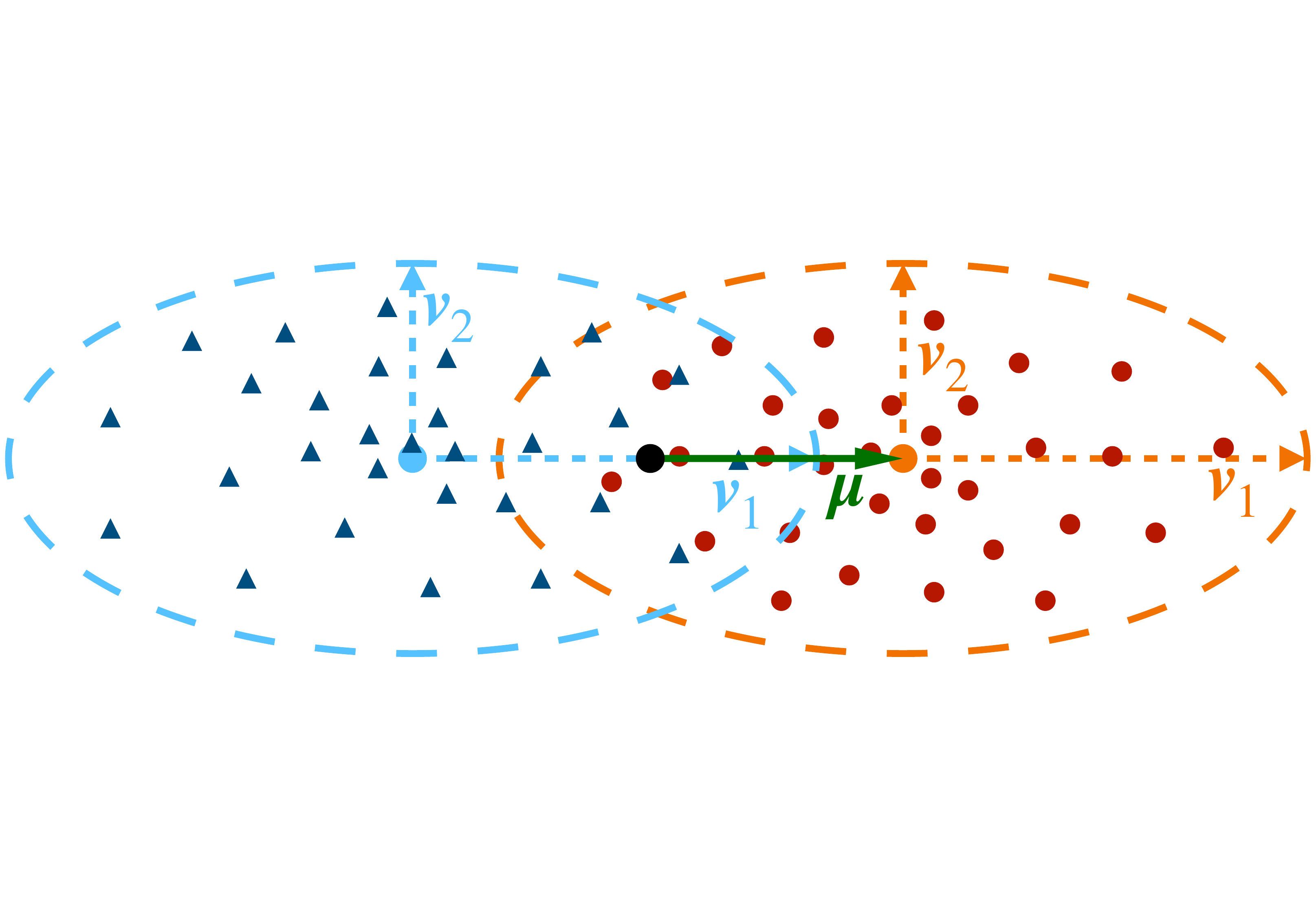}\label{subfig:v1}}
	\end{center}
	\vskip -12pt
	\caption{\CC{A $2$-dimensional illustration of  sub-Gausisan mixture classification problems with different directions of $\bmu$. We consider the setting where $\bSigma \in \RR^{2\times 2}$ has two eigenvalues  $\lambda_1 > \lambda_2$ with the corresponding eigenvectors $\vb_1,\vb_2$. (a) shows the setting where $\bmu $
 aligns with $\vb_2$. (b) shows the setting where $\bmu $
 points at a random direction. (c) is for the case when $\bmu $
 aligns with $\vb_1$. These figures clearly show that (a) is the easiest case for classification and (c) is the hardest case. This matches the result in Corollary~\ref{col:alignment_mu_Sigma}.}
	} 
	\label{fig:mualignment}
\end{figure}

At last, we can also apply our risk bound to the rare/weak feature model defined in Example~\ref{example:rare-weak}. We have the following corollary. 

\begin{corollary}[\CC{Rare/weak feature model}]\label{col:rare-weak}
Consider the rare/weak feature model (Example~\ref{example:rare-weak}). Suppose that 
$d \geq C \max\{ n^{2} , \gamma n\sqrt{ s \log(n)} \}$ 
and
$\gamma\sqrt{s} \geq C $ 
for some large enough absolute constant $C$. Then when $n$ is large enough, with probability at least $1 - n^{-1}$, the maximum margin classifier $\hat\btheta_{\text{SVM}}$ has the following risk bound
\begin{align*}
    R( \hat\btheta_{\text{SVM}} ) \leq  \exp\bigg( - \frac{  C' n \gamma^4s^2  }{  n  \gamma^2 s + d } \bigg),
\end{align*}
where $C'$ is an absolute constant.
\end{corollary}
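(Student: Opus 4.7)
\textbf{Proof proposal for Corollary~\ref{col:rare-weak}.}
The plan is to derive this corollary as a direct specialization of Theorem~\ref{thm:BOnew} to the rare/weak feature model, where $\bSigma = \Ib$ and $\bmu$ is an $s$-sparse vector whose nonzero entries equal $\gamma$. First I would record the relevant scalar quantities: from $\bSigma = \Ib$ one has $\tr(\bSigma) = d$, $\|\bSigma\|_2 = 1$, and $\|\bSigma\|_F^2 = d$; from the sparsity pattern of $\bmu$ one has $\|\bmu\|_2^2 = \gamma^2 s$ and, because $\bSigma$ is the identity, $\|\bmu\|_{\bSigma} = \|\bmu\|_2 = \gamma\sqrt{s}$ and $\|\bmu\|_{\bSigma}^2 = \gamma^2 s$.

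Next I would verify the two structural hypotheses of Theorem~\ref{thm:BOnew}. The condition $\|\bmu\|_2^2 \geq C\|\bmu\|_{\bSigma}$ becomes $\gamma^2 s \geq C\gamma\sqrt{s}$, i.e.\ $\gamma\sqrt{s} \geq C$, which is exactly the second hypothesis of the corollary. The trace condition becomes
\[
d \;\geq\; C\max\bigl\{\,n^{3/2},\ n\sqrt{d},\ n\sqrt{\log n}\,\cdot\gamma\sqrt{s}\,\bigr\}.
\]
The middle term forces $d \geq C^2 n^2$, which in turn dominates the $n^{3/2}$ term; combined with $d \geq Cn\sqrt{s\log n}\,\gamma$, this is exactly the hypothesis $d \geq C\max\{n^2,\ \gamma n\sqrt{s\log n}\}$ stated in the corollary (adjusting $C$).

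Having checked the hypotheses, I would plug the computed quantities into the upper bound of Theorem~\ref{thm:BOnew}, which yields
\[
R(\hat\btheta_{\text{SVM}}) \;\leq\; \exp\!\Bigl(\frac{-C' n\gamma^4 s^2}{n\gamma^2 s + d + n}\Bigr).
\]
The only remaining step is to absorb the stray $n\|\bSigma\|_2^2 = n$ term in the denominator. Since $d \geq Cn^2$ from the hypothesis and $n$ is assumed large enough, we have $n \leq d/C \leq d$, so $n\gamma^2 s + d + n \leq 2(n\gamma^2 s + d)$; replacing $C'$ by $C'/2$ gives the claimed bound.

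The argument is essentially a bookkeeping exercise, so there is no substantial obstacle; the only point that requires a moment of care is justifying that the $n\|\bSigma\|_2^2$ term in the denominator of Theorem~\ref{thm:BOnew} is negligible compared to $\|\bSigma\|_F^2 = d$, which is precisely what the ``$n$ large enough'' qualifier in the corollary statement enables.
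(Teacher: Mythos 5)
Your proposal is correct and is essentially the paper's argument: the paper routes through Corollary~\ref{col:isotropic} (which itself records $\tr(\bSigma)=d$, $\|\bSigma\|_2=1$, $\|\bSigma\|_F=\sqrt{d}$, $\|\bmu\|_{\bSigma}=\|\bmu\|_2=\gamma\sqrt{s}$ and absorbs the $n\|\bSigma\|_2^2$ term using $d\geq Cn^2$), whereas you inline that same specialization of Theorem~\ref{thm:BOnew} directly. The bookkeeping, including the verification that the middle trace condition forces $d\gtrsim n^2$ and the absorption of the stray $n$ in the denominator, matches the paper's computation.
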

By Corollary~\ref{col:rare-weak}, we can see that our bound is tighter by a factor of $n$ in the exponent compared with the risk bound in \citet{chatterji2020finite} for the rare/weak feature model. Under the setting where $n$ and $\gamma$ are fixed constants, our bound can also be compared with the negative result in \citet{jin2009impossibility}, which showed that achieving a small population risk is impossible when $s = O(d^2)$. Our result, on the other hand, demonstrates that when $s = \omega(d^2)$, $o(1)$ population risk is achievable.



\section{Proof of the Main Results}\label{section:proof_main}
In this section, we explain how we establish the population risk bound of the maximum margin classifier, and give the proof of Theorem~\ref{thm:BOnew}. 

For classification problems, one of the key challenges is that the maximum margin classifier usually does not have an explicit form solution. To overcome this difficulty,  \citet{chatterji2020finite} 
utilized the implicit bias results (Lemma~\ref{lemma:implicitbias}) to get a handle on the relationship between the maximum margin classifier and the training data. 
More recently, \citet{wang2020benign} showed that for isotropic Gaussian mixture models, an explicit form of $\hat\btheta_{\text{SVM}}$ can be calculated by the equivalence between hard-margin support vector machine and minimum norm least square regression. Notably, it was shown that such an equivalence result holds under the assumptions of \citet{chatterji2020finite} and no any additional assumptions are needed. In this paper, we also study the equivalence between classification and regression as a first step. However, our proof works for a more general setting that covers both isotropic and anisotropic sub-Gaussian mixtures, and introduces a novel proof technique based on the polarization identity that leads to a tighter bound. 
We present this result in Section~\ref{section:equivalence}.


\paragraph{Step 1. Equivalence Between Classification and Regression.}\label{section:equivalence}
Here we establish an equivalence guarantee for the 
maximum margin classifier
and the minimum norm interpolator. Note that the definitions of the minimum norm interpolator $\hat\btheta_{\text{LS}} $ and the maximum margin classifier $\hat\btheta_{\text{SVM}}$ are as follows:
\begin{align*}
    &\hat\btheta_{\text{LS}} := \argmin \| \btheta \|_2^2, ~~\text{subject to } y_i \cdot \la\btheta , \xb_i\ra = 1, i\in [n],\\
    &\hat\btheta_{\text{SVM}} = \argmin \| \btheta \|_2^2, ~~\text{subject to } y_i \cdot \la\btheta , \xb_i\ra \geq 1, i\in [n].
\end{align*}
We can see that the two optimization problems have the same solution when all the training data are support vectors, i.e., all the inequalities become equalities in the constraints \citep{muthukumar2020classification,hsu2020proliferation}. Here we derive such an equivalence result for sub-Gaussian mixture models. The result is as follows.


\begin{proposition}\label{prop:interpolationregression}
Suppose that 
$\tr( \bSigma ) \geq C \max\{ n^{3/2} \| \bSigma \|_2, n \| \bSigma \|_F, n\sqrt{\log(n)}\cdot \| \bmu \|_{\bSigma} \} $ 
for some absolute constant $C$. Then with probability at least $ 1 - O(n^{-2}) $, 
$\hat\btheta_{\text{SVM}} =  \hat\btheta_{\text{LS}}$.
\end{proposition}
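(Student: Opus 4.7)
The plan is to combine Lemma~\ref{lemma:equivalence} (the characterization of $\hat\btheta_{\text{SVM}} = \hat\btheta_{\text{LS}}$), Lemma~\ref{lemma:condition_calculation} (the lower bound on $\yb^\top(\Xb\Xb^\top)^{-1}\eb_i y_i$), and Lemma~\ref{lemma:anisotropicbound1} (the refined upper bound on $|\bmu^\top \Qb^\top(\Qb\Qb^\top)^{-1}\eb_i|$). The pieces are already in place; the proposition is essentially a bookkeeping step verifying that the assumption on $\tr(\bSigma)$ makes the bracketed factor in Lemma~\ref{lemma:condition_calculation} strictly positive for every $i\in[n]$ simultaneously.

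More concretely, I would proceed as follows. First, I will invoke Lemma~\ref{lemma:equivalence}, which reduces the identity $\hat\btheta_{\text{SVM}} = \hat\btheta_{\text{LS}}$ to the sign conditions $\yb^\top(\Xb\Xb^\top)^{-1}\eb_i y_i > 0$ for all $i\in[n]$. Next, since the assumption $\tr(\bSigma) \geq C\max\{n^{3/2}\|\bSigma\|_2, n\|\bSigma\|_F, n\|\bmu\|_{\bSigma}\}$ is implied by our hypothesis (note $n\sqrt{\log n}\cdot\|\bmu\|_{\bSigma} \geq n\|\bmu\|_{\bSigma}$ for $n\geq 3$, and for small $n$ one adjusts $C$), I can apply Lemma~\ref{lemma:condition_calculation} to obtain, with probability at least $1 - O(n^{-2})$, that
\begin{align*}
\yb^\top (\Xb\Xb^\top)^{-1}\eb_i y_i \;\geq\; G\,\bigl[\,1 - C' n\,\bigl|\bmu^\top \Qb^\top (\Qb\Qb^\top)^{-1}\eb_i\bigr|\,\bigr]
\end{align*}
for all $i\in[n]$, with $G>0$.

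Then I will apply Lemma~\ref{lemma:anisotropicbound1}, whose hypotheses are again a subset of ours, to control the cross term: with probability at least $1-O(n^{-2})$,
\begin{align*}
\bigl|\bmu^\top \Qb^\top (\Qb\Qb^\top)^{-1}\eb_i\bigr| \;\leq\; \frac{C'' \|\bmu\|_{\bSigma}\sqrt{\log n}}{\tr(\bSigma)} \qquad \text{for all } i\in[n].
\end{align*}
Plugging this into the previous display gives
\begin{align*}
\yb^\top (\Xb\Xb^\top)^{-1}\eb_i y_i \;\geq\; G\,\Bigl[\,1 - \frac{C'C'' n\,\|\bmu\|_{\bSigma}\sqrt{\log n}}{\tr(\bSigma)}\,\Bigr].
\end{align*}
By the standing assumption $\tr(\bSigma) \geq C n\sqrt{\log(n)}\cdot \|\bmu\|_{\bSigma}$ with $C$ chosen so that $C > C'C''$, the bracket is bounded below by a positive constant (e.g.\ $1/2$). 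A union bound over the two failure events of probability $O(n^{-2})$ keeps the total failure probability at $O(n^{-2})$, so the sign condition holds for every $i$ simultaneously, and Lemma~\ref{lemma:equivalence} then yields $\hat\btheta_{\text{SVM}} = \hat\btheta_{\text{LS}}$.

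The only mild subtlety, and the main thing I would double-check during write-up, is matching the absolute constants across the three lemmas so that the $n\sqrt{\log n}\cdot\|\bmu\|_{\bSigma}$ term in the hypothesis of the proposition really does dominate $C'C''$ times the bound from Lemma~\ref{lemma:anisotropicbound1} after multiplying by $n$. There is no hard technical obstacle here; all the work has been absorbed into Lemmas~\ref{lemma:condition_calculation} and~\ref{lemma:anisotropicbound1}, and in particular into the polarization-identity argument behind the latter that avoids the loose $\sqrt{n}$ factor one would get from Cauchy--Schwarz.
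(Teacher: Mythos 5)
Your proposal is correct and follows essentially the same route as the paper's proof: combine Lemma~\ref{lemma:equivalence} with the lower bound from Lemma~\ref{lemma:condition_calculation}, control the cross term via Lemma~\ref{lemma:anisotropicbound1}, and use the assumption $\tr(\bSigma) \geq C n\sqrt{\log(n)}\cdot \|\bmu\|_{\bSigma}$ together with a union bound to make the bracket strictly positive for all $i$. No substantive differences.
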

The proof of Proposition~\ref{prop:interpolationregression} utilizes an argument based on the polarization identity to give a tight bound, which may be of independent interest. The details are given in Appendix~\ref{prop:interpolationregression}.





\paragraph{Step 2. Population Risk of the Maximum Margin Classifier.}
We derive the population risk bound for the maximum margin classifier and provide the proof of Theorem~\ref{thm:BOnew}. We first present the following lemma on the risk bound of linear classifiers for sub-Gaussian mixture models.

\begin{lemma}\label{lemma:subGaussian_riskbound}
There exists an absolute  constant $C$ such that, for any $\btheta\in \RR^d$, the following risk bound holds:
\begin{align*}
    R(\btheta) \leq \exp\bigg( - \frac{ C( \btheta^\top \bmu )^2 }{ \|\btheta\|_{\bSigma}^2} \bigg).
\end{align*}
A similar result is given in \citet{chatterji2020finite} where $\bSigma$ is replaced by $\Ib$. Our result here depends on the full spectrum of the covariance matrix and is sharper than \citet{chatterji2020finite} when $\bSigma$ has decaying eigenvalues.

\end{lemma}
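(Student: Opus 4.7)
The plan is to reduce $R(\btheta)$ to a one-dimensional sub-Gaussian tail probability and then apply a standard Hoeffding-type bound, carefully tracking how the anisotropic covariance enters through the quadratic form $\|\btheta\|_{\bSigma}^2$.

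First I would use the decomposition $\xb = y\bmu + \qb$ to rewrite
\[
y\la\btheta,\xb\ra = \la\btheta,\bmu\ra + y\la\btheta,\qb\ra,
\]
so that $R(\btheta) = \PP\bigl(y\la\btheta,\qb\ra < -\la\btheta,\bmu\ra\bigr)$. The inequality in the lemma is informative only when $\la\btheta,\bmu\ra \geq 0$ (otherwise the right-hand side is at most $1$ but the probability may exceed the bound), so I would either assume this WLOG or note that the bound is vacuous in the opposite regime since the conclusion becomes $\leq 1$. With $\la\btheta,\bmu\ra \geq 0$, the event $y\la\btheta,\qb\ra < -\la\btheta,\bmu\ra$ is contained in $\{|\la\btheta,\qb\ra| \geq \la\btheta,\bmu\ra\}$, regardless of the value of $y$.

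Next I would compute the sub-Gaussian norm of the scalar $Z := \la\btheta,\qb\ra$. By the generative model $\qb = \Vb\bLambda^{1/2}\ub$, we have $Z = \la\ab,\ub\ra$ with $\ab := \bLambda^{1/2}\Vb^\top\btheta$, so $\|\ab\|_2^2 = \btheta^\top\bSigma\btheta = \|\btheta\|_{\bSigma}^2$. Since the coordinates $u_j$ are independent sub-Gaussian with $\|u_j\|_{\psi_2}\leq \sigma_u$, the standard rotation-invariance bound for sums of independent sub-Gaussians (e.g., Proposition~2.6.1 in Vershynin) yields
\[
\|Z\|_{\psi_2}^2 \leq c\,\sigma_u^2\,\|\ab\|_2^2 = c\,\sigma_u^2\,\|\btheta\|_{\bSigma}^2,
\]
for an absolute constant $c$. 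The general sub-Gaussian tail inequality then gives
\[
\PP\bigl(|Z| \geq t\bigr) \leq 2\exp\!\Bigl(-\frac{c' t^2}{\sigma_u^2\|\btheta\|_{\bSigma}^2}\Bigr).
\]

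Finally, setting $t = \la\btheta,\bmu\ra$ and plugging into the containment above gives
\[
R(\btheta) \leq 2\exp\!\Bigl(-\frac{c'(\btheta^\top\bmu)^2}{\sigma_u^2\|\btheta\|_{\bSigma}^2}\Bigr),
\]
and I would absorb $\sigma_u$ and the leading factor $2$ into a single absolute constant $C$ (possible because either the exponential factor already dominates $2$, or the bound is trivially $\leq 1$ so we may assume $(\btheta^\top\bmu)^2/\|\btheta\|_{\bSigma}^2$ is at least some constant threshold below which we use the trivial bound). There is no serious obstacle here; the only care needed is in using the correct sub-Gaussian concentration for a linear combination with anisotropic weights so that the denominator in the exponent is exactly $\|\btheta\|_{\bSigma}^2$ rather than the cruder $\|\bSigma\|_2\|\btheta\|_2^2$ one would get from bounding $\|Z\|_{\psi_2}^2 \leq \sigma_u^2\|\bSigma\|_2\|\btheta\|_2^2$. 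This sharpening is exactly what makes the bound tight when $\bSigma$ has decaying eigenvalues, and is the reason the lemma improves on its isotropic counterpart in \citet{chatterji2020finite}.
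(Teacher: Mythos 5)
Your proposal follows essentially the same route as the paper's proof in Appendix~\ref{section:proof_subGaussian_riskbound}: decompose $y\la\btheta,\xb\ra = \btheta^\top\bmu + y\,\btheta^\top\qb$, note that $\btheta^\top\qb = \btheta^\top\Vb\bLambda^{1/2}\ub$ is a linear combination of independent mean-zero sub-Gaussians and hence has $\psi_2$-norm at most $c\,\sigma_u\|\btheta\|_{\bSigma}$, and finish with a sub-Gaussian tail bound. One small wrinkle: to absorb the leading factor $2$ from the two-sided tail you argue that below a constant threshold on $r := (\btheta^\top\bmu)^2/\|\btheta\|_{\bSigma}^2$ ``the trivial bound $\le 1$'' suffices, but the claimed conclusion $\exp(-Cr)$ is strictly less than $1$ for every $r>0$, so the trivial bound does not close that regime; the clean fix is to use a one-sided tail bound with no factor of $2$ (as the paper does, citing Theorem~A.2 of \citet{chatterji2020finite}), or to observe that $y\,\btheta^\top\qb$ is symmetric about zero, so $R(\btheta)\le 1/2\le \exp(-Cr)$ whenever $r\le \log(2)/C$. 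Your explicit caveat that the bound is only meaningful, and indeed only true, when $\btheta^\top\bmu\ge 0$ is correct and is a point the paper leaves implicit; in the application the sign condition is verified for $\hat\btheta_{\text{SVM}}$.
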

The proof of Lemma~\ref{lemma:subGaussian_riskbound} is given in Appendix~\ref{section:proof_appendix1}. 
In addition to this risk bound for general vector $\btheta$, we also have the following explicit calculation for $\hat\btheta_{\text{SVM}}$ thanks to our analysis in Section~\ref{section:equivalence}. This is because the minimum norm interpolator $\hat\btheta_{\text{LS}}$ has the explicit form $\hat\btheta_{\text{LS}} = \Xb^\top (\Xb\Xb^\top)^{-1} \yb$.
Therefore by Proposition~\ref{prop:interpolationregression}, we also have
    $\hat\btheta_{\text{SVM}} = \Xb^\top (\Xb\Xb^\top)^{-1} \yb$. 
Plugging this calculation into the risk bound in Lemma~\ref{lemma:subGaussian_riskbound} and utilizing the model definition $\Xb = \yb \bmu^\top + \Qb$, we are able to show the following risk bound for $\hat\btheta_{\text{SVM}}$.
\begin{lemma}\label{lemma:thetaSVM_riskbound} 
    Suppose that 
$\tr( \bSigma ) \geq C \max\{n\sqrt{\log(n)}, n^{3/2} \| \bSigma \|_2, n \| \bSigma \|_F, n \| \bmu \|_{\bSigma} \} $ 
 for some absolute constant $C$. Then with probability at least $ 1 - O( n^{-2} )$, 
\begin{align*}
    &R(\hat\btheta_{\text{SVM}} )\leq \exp\bigg\{ \frac{ - C'\cdot [ \yb^\top (\Xb\Xb^\top)^{-1} \Xb \bmu  ]^2 }{(\yb^\top (\Xb\Xb^\top)^{-1} \yb)^2 \| \bmu \|_{\bSigma}^2 + \| \Qb^\top (\Xb\Xb^\top)^{-1} \yb \|_{\bSigma}^2 } \bigg\},
\end{align*}
where $C'$ is an absolute constant.
\end{lemma}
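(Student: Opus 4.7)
The plan is to combine the equivalence result of Proposition~\ref{prop:interpolationregression} with the general sub-Gaussian risk bound in Lemma~\ref{lemma:subGaussian_riskbound}, and then use the model decomposition $\Xb = \yb\bmu^\top + \Qb$ to expand the resulting quantities. Since the hypothesis on $\tr(\bSigma)$ is at least as strong as the one in Proposition~\ref{prop:interpolationregression}, on the high-probability event of that proposition we have the closed-form expression
\begin{align*}
    \hat\btheta_{\text{SVM}} = \hat\btheta_{\text{LS}} = \Xb^\top (\Xb\Xb^\top)^{-1}\yb.
\end{align*}
Feeding this into Lemma~\ref{lemma:subGaussian_riskbound}, the task is reduced to computing $\hat\btheta_{\text{SVM}}^\top\bmu$ and $\|\hat\btheta_{\text{SVM}}\|_{\bSigma}^2$ and matching them with the numerator and denominator in the desired bound.

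Write $\ab := (\Xb\Xb^\top)^{-1}\yb$ for convenience. The numerator is immediate: $\hat\btheta_{\text{SVM}}^\top\bmu = \yb^\top(\Xb\Xb^\top)^{-1}\Xb\bmu$, which matches the squared numerator in the statement. For the denominator, substituting $\Xb = \yb\bmu^\top + \Qb$ yields
\begin{align*}
    \|\hat\btheta_{\text{SVM}}\|_{\bSigma}^2 = \ab^\top\Xb\bSigma\Xb^\top\ab = \|\bmu\|_{\bSigma}^2(\yb^\top\ab)^2 + 2(\yb^\top\ab)(\bmu^\top\bSigma\Qb^\top\ab) + \|\Qb^\top\ab\|_{\bSigma}^2.
\end{align*}
The first and third terms are exactly the two pieces appearing in the denominator of the statement; everything hinges on absorbing the cross term into a constant multiple of these two terms.

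For the cross term, the Cauchy--Schwarz inequality in the $\bSigma$-inner product gives $|\bmu^\top\bSigma\Qb^\top\ab| \leq \|\bmu\|_{\bSigma}\,\|\Qb^\top\ab\|_{\bSigma}$, and then AM--GM produces
\begin{align*}
    2\,|\yb^\top\ab|\cdot\|\bmu\|_{\bSigma}\cdot\|\Qb^\top\ab\|_{\bSigma} \leq (\yb^\top\ab)^2\|\bmu\|_{\bSigma}^2 + \|\Qb^\top\ab\|_{\bSigma}^2.
\end{align*}
Combining the displays gives $\|\hat\btheta_{\text{SVM}}\|_{\bSigma}^2 \leq 2\bigl[(\yb^\top\ab)^2\|\bmu\|_{\bSigma}^2 + \|\Qb^\top\ab\|_{\bSigma}^2\bigr]$, after which substituting into Lemma~\ref{lemma:subGaussian_riskbound} (and absorbing the factor of $2$ into the absolute constant $C'$) yields the claimed bound.

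I do not expect any serious obstacle here: the main step is essentially algebraic, and the only nontrivial input — the identification $\hat\btheta_{\text{SVM}} = \Xb^\top(\Xb\Xb^\top)^{-1}\yb$ — has already been established. The one subtlety worth flagging is that $\Xb\Xb^\top$ must be invertible for the formula to make sense; this follows from the same eigenvalue concentration bound $\|\Qb\Qb^\top - \tr(\bSigma)\Ib\|_2 \leq \epsilon_\lambda$ used in the proof of Lemma~\ref{lemma:anisotropicbound1} together with the assumption $\tr(\bSigma) \gg \epsilon_\lambda$, so it fits within the high-probability event on which Proposition~\ref{prop:interpolationregression} holds.
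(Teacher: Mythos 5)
Your proposal is correct and follows essentially the same route as the paper: apply Proposition~\ref{prop:interpolationregression} to write $\hat\btheta_{\text{SVM}} = \Xb^\top(\Xb\Xb^\top)^{-1}\yb$, plug into Lemma~\ref{lemma:subGaussian_riskbound}, and bound $\|\Xb^\top(\Xb\Xb^\top)^{-1}\yb\|_{\bSigma}^2$ by twice the sum of the two target terms (the paper uses $\|\ab+\bbb\|_{\bSigma}^2 \leq 2\|\ab\|_{\bSigma}^2 + 2\|\bbb\|_{\bSigma}^2$ directly, which is the same computation as your Cauchy--Schwarz plus AM--GM handling of the cross term). Your remark about the invertibility of $\Xb\Xb^\top$ is a reasonable point that the paper leaves implicit.
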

Lemma~\ref{lemma:thetaSVM_riskbound} 
utilizes the structure of the model to divide the denominator in the exponent into two terms. 
Motivated by this result, we define
\begin{align*}
    I_1 = [ \yb^\top (\Xb\Xb^\top)^{-1} \Xb \bmu  ]^2,~~I_2 = (\yb^\top (\Xb\Xb^\top)^{-1} \yb)^2 \cdot \| \bmu \|_{\bSigma}^2,~~I_3 =  \| \Qb^\top (\Xb\Xb^\top)^{-1} \yb \|_{\bSigma}^2.
\end{align*}
This leads to our analysis in the next step.

\paragraph{Step 3. Bounds for $I_1$, $I_2$ and $I_3$.} In the following, we develop a lower bound for $I_1$ and upper bounds for $I_2$ and $I_3$ respectively. The following lemma summarizes the bounds.


\begin{lemma}\label{lemma:I1I2I3bounds}
    Suppose that
$\tr( \bSigma ) \geq C \max\{ n, n \|\bSigma \|_2, \sqrt{n} \|\bSigma \|_F , n \| \bmu \|_{\bSigma} \}$ 
and
$\| \bmu \|_2^2 \geq C \| \bmu \|_{\bSigma}$ 
for some absolute constant $C$. Then when $n$ is large enough, with probability at least $1 - O(n^{-2})$, 
\begin{align*}
    &I_1 \geq C'^{-1} H(\bmu, \Qb, \yb, \bSigma)\cdot n^2 \cdot   \| \bmu \|_2^4,\\
    &I_2 \leq C' H(\bmu, \Qb, \yb, \bSigma)\cdot n^2 \cdot \| \bmu \|_{\bSigma}^2,\\
    &I_3 \leq C' H(\bmu, \Qb, \yb, \bSigma)\cdot (  n\cdot \|\bSigma\|_F^2 +  n^2\cdot \| \bSigma\|_2^2),
\end{align*}
where $H(\bmu, \Qb, \yb, \bSigma) > 0$ is a strictly positive coefficient, and $C' > 0$ is an absolute constant.  
\end{lemma}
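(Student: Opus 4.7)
The plan is to reduce each $I_j$ to the product of a single common positive scalar $H$ and an explicit deterministic quantity, so that $H$ cancels in the ratio $I_1/(I_2+I_3)$ that feeds Lemma~\ref{lemma:thetaSVM_riskbound}. The algebraic starting point is the identity
$$\Xb\Xb^\top = \Qb\Qb^\top + \|\bmu\|_2^2\,\yb\yb^\top + \yb\vb^\top + \vb\yb^\top, \qquad \vb := \Qb\bmu,$$
together with $\Xb\bmu = \|\bmu\|_2^2\,\yb + \vb$. Setting $G := \yb^\top(\Xb\Xb^\top)^{-1}\yb$, these identities give exactly
$$\yb^\top(\Xb\Xb^\top)^{-1}\Xb\bmu \;=\; G\,\|\bmu\|_2^2 + \yb^\top(\Xb\Xb^\top)^{-1}\vb, \qquad I_2 \;=\; G^2\,\|\bmu\|_{\bSigma}^2,$$
while $I_3 = \|\Qb^\top(\Xb\Xb^\top)^{-1}\yb\|_{\bSigma}^2$ is the $\bSigma$-norm of the same resolvent acting on $\yb$. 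I take $H := G^2/n^2$, which turns the bound on $I_2$ into an equality, and it remains to (i) show $\sqrt{I_1} \geq \tfrac{1}{2}\,G\,\|\bmu\|_2^2$ and (ii) bound $I_3$ by $C'\,(G/n)^2\bigl(n\|\bSigma\|_F^2 + n^2\|\bSigma\|_2^2\bigr)$.

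Both goals are attacked through the same mechanism: the concentration $\|\Qb\Qb^\top - \tr(\bSigma)\Ib\|_2 \leq \epsilon_\lambda$ with $\epsilon_\lambda = O\bigl(n\|\bSigma\|_2 + \sqrt{n}\,\|\bSigma\|_F\bigr)$, which is already established in the paper and used inside Lemma~\ref{lemma:anisotropicbound1}, combined with the Sherman--Morrison / Woodbury identity applied to the rank-two update $\|\bmu\|_2^2\yb\yb^\top + \yb\vb^\top + \vb\yb^\top$. Under the lemma's hypothesis $\tr(\bSigma) \gg \epsilon_\lambda$, Woodbury yields a convergent expansion in which $(\Xb\Xb^\top)^{-1}\yb$ is close to a scalar multiple of $\yb$ with coefficient of order $1/(\tr(\bSigma) + n\|\bmu\|_2^2)$, $G > 0$ of the same order, and $(\Xb\Xb^\top)^{-1}\vb$ and $\Qb^\top(\Xb\Xb^\top)^{-1}\yb$ are, up to lower-order remainders controlled by $\epsilon_\lambda/\tr(\bSigma)$, equal to $(G/n)\,\vb$ and $(G/n)\,\Qb^\top\yb$ respectively.

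For (i), this expansion reduces $\yb^\top(\Xb\Xb^\top)^{-1}\vb$ to $(G/n)\,\yb^\top\vb$ plus a lower-order error, and $\yb^\top\vb = \sum_{i=1}^n y_i\,\la\qb_i,\bmu\ra$ is a sum of $n$ independent mean-zero sub-Gaussian scalars with variance proxy $\|\bmu\|_{\bSigma}^2$, so Hoeffding yields $|\yb^\top\vb| = O(\sqrt{n\log n}\,\|\bmu\|_{\bSigma})$ with the claimed probability; combined with the hypothesis $\|\bmu\|_2^2 \gtrsim \|\bmu\|_{\bSigma}$ and $n$ large, this correction is dominated by $G\,\|\bmu\|_2^2$, giving the required lower bound on $I_1$. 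For (ii), it remains to bound $\|\Qb^\top\yb\|_{\bSigma}^2 = \sum_{k=1}^d \lambda_k^2\bigl(\sum_{i=1}^n y_i u_{ik}\bigr)^2$. Each inner sum is sub-Gaussian with variance proxy $n$, and a Hanson--Wright / Bernstein argument for weighted sums of sub-exponentials produces $\|\Qb^\top\yb\|_{\bSigma}^2 \leq C\bigl(n\|\bSigma\|_F^2 + n^2\|\bSigma\|_2^2\bigr)$, in which the $n\|\bSigma\|_F^2$ is the mean $n\tr(\bSigma^2)$ and the $n^2\|\bSigma\|_2^2$ is the sub-exponential tail contribution governed by $\max_k \lambda_k^2 = \|\bSigma\|_2^2$.

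The main obstacle is the bookkeeping of the Woodbury cross-terms so that $\sqrt{I_1}$, $I_3$, and the common prefactor $G$ appearing in $H$ all align with matching constants. A naive Cauchy--Schwarz estimate $|\yb^\top(\Xb\Xb^\top)^{-1}\vb| \leq \|(\Xb\Xb^\top)^{-1}\yb\|_2\,\|\vb\|_2$ would introduce a spurious $\sqrt{n}$ factor and spoil the exponent that Theorem~\ref{thm:BOnew} promises; this is the same tightness issue that Lemma~\ref{lemma:anisotropicbound1} resolved via polarization, and the remedy here is again to compute $\yb^\top\vb$ directly as a scalar sub-Gaussian sum rather than through $\|\Qb\bmu\|_2$.
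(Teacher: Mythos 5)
Your overall architecture is the same as the paper's: a Sherman--Morrison/Woodbury treatment of the rank-two perturbation $\Xb\Xb^\top = \Qb\Qb^\top + \|\bmu\|_2^2\yb\yb^\top + \yb\vb^\top + \vb\yb^\top$, concentration of $\Qb\Qb^\top$ around $\tr(\bSigma)\Ib$, direct scalar concentration on $\yb^\top\vb$ to dodge the Cauchy--Schwarz $\sqrt{n}$ loss, and a common positive prefactor that cancels in $I_1/(I_2+I_3)$ (your $H = G^2/n^2$ agrees with the paper's $[D\cdot\tr(\bSigma)]^{-2}$ up to absolute constants, since $G = D^{-1}\yb^\top(\Qb\Qb^\top)^{-1}\yb$ and $\yb^\top(\Qb\Qb^\top)^{-1}\yb \asymp n/\tr(\bSigma)$). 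The paper writes the resolvent exactly via Lemma~\ref{lemma:matrixcalculation} rather than as a ``convergent expansion,'' which is worth doing because several of your ``lower-order remainders'' are not actually lower-order: e.g.\ in $I_1$ the term $-\yb^\top\Ab^{-1}\yb\cdot\bnu^\top\Ab^{-1}\bnu$ is not small relative to $(G/n)\yb^\top\vb$; it is only small relative to $G\|\bmu\|_2^2$ (via $\bnu^\top\Ab^{-1}\bnu \leq \|\bmu\|_2^2/2$, which uses $\tr(\bSigma)\geq Cn\|\bSigma\|_2$). That is a presentational imprecision, not a fatal one, since the needed comparison is against the main term anyway.

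The genuine gap is in $I_3$. You bound only the fixed quadratic form $\|\Qb^\top\yb\|_{\bSigma}^2$ by Bernstein and dismiss the discrepancy $\Qb^\top\big[(\Xb\Xb^\top)^{-1}\yb - (G/n)\yb\big]$ as a remainder ``controlled by $\epsilon_\lambda/\tr(\bSigma)$.'' But that remainder, measured in the $\bSigma$-norm, is amplified by $\|\bSigma^{1/2}\Qb^\top\|_2 = \|\Qb\bSigma\Qb^\top\|_2^{1/2}$, and the error vector is correlated with $\Qb$, so your Hanson--Wright bound on the single direction $\yb$ does not cover it. Without a dedicated operator-norm concentration for $\Qb\bSigma\Qb^\top$, the only available estimate is $\|\Qb\bSigma\Qb^\top\|_2 \leq \|\bSigma\|_2\|\Qb\Qb^\top\|_2 \approx \|\bSigma\|_2\tr(\bSigma)$, which exceeds the target $\|\bSigma\|_F^2 + n\|\bSigma\|_2^2$ whenever $\tr(\bSigma) \gg \|\bSigma\|_F^2/\|\bSigma\|_2 + n\|\bSigma\|_2$ (e.g.\ $\lambda_k = k^{-1}$ with $\log d \gg n$, which is exactly the anisotropic regime the lemma is built for); the resulting remainder then swamps the claimed bound. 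The paper closes this by proving $\|\Qb\bSigma\Qb^\top - \|\bSigma\|_F^2\Ib\|_2 \lesssim n\|\bSigma\|_2^2 + \sqrt{n}\|\bSigma^2\|_F$ (Lemma~\ref{lemma:eigenvalue_concentration2}, an $\epsilon$-net version of your Bernstein computation) and then bounding $\ab^\top(\Zb\bLambda\Zb^\top)^{-1}\Zb\bLambda^2\Zb^\top(\Zb\bLambda\Zb^\top)^{-1}\ab \leq \|\ab\|_2^2(\|\bSigma\|_F^2+\epsilon_\lambda')/(\tr(\bSigma)-\epsilon_\lambda)^2$ with $\|\ab\|_2^2 \lesssim D^{-2}n$, which never requires isolating a main term from a remainder. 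You need this additional operator-norm concentration (or an equivalent) to make your $I_3$ argument go through.
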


The proof of Lemma~\ref{lemma:I1I2I3bounds} is given in Appendix~\ref{section:proof_appendix1}. 
To illustrate the key idea in the proof of Lemma~\ref{lemma:I1I2I3bounds}, we take $I_3$ as an example. 
Based on our model in Section~\ref{section:problemsetting}, we have $\Qb = \Zb \bLambda^{1/2} \Vb^\top$, where $\Zb\in\RR^{n\times d}$ is a random matrix with independent sub-Gaussian entries, and $\bLambda$, $\Vb$ are defined based on the eigenvalue decomposition $\bSigma = \Vb \bLambda \Vb^\top$. By some linear algebra calculation (see the proof of Lemma~\ref{lemma:I1I2I3bounds} in Appendix~\ref{section:proof_appendix1} for more details), we have
\begin{align}\label{eq:main_lemmaproof_eq2}
    I_3 = \ab^\top (\Zb\bLambda \Zb^\top)^{-1} \Zb\bLambda^2 \Zb^\top (\Zb\bLambda \Zb^\top)^{-1} \ab,
\end{align}
where  $\| \ab \|_2^2 = O(D^{-2} n) $ with
\begin{align*}
    D &=  \yb^\top (\Qb \Qb^\top)^{-1} \yb \cdot  (\| \bmu\|_2^2 - \bmu^\top \Qb^\top (\Qb \Qb^\top)^{-1} \Qb\bmu)  + (1 + \yb^\top(\Qb \Qb^\top)^{-1} \Qb\bmu)^2.
\end{align*}
The key observation here is that while the term $D$ above has a very complicated form, it is not necessary to bound it. This is because $D^{-2}$ is a common term that appears in all $I_1$, $I_2$ and $I_3$ and therefore can be canceled out when calculating the ratio $I_1 / (I_2 + I_3)$. 
With the calculation in \eqref{eq:main_lemmaproof_eq2}, we are able to invoke the following eigenvalue concentration inequalities (see Lemma~\ref{lemma:eigenvalue_concentration} and Lemma~\ref{lemma:eigenvalue_concentration2} for more details) to give upper and lower bounds regarding the matrices $ \Zb\bLambda^2 \Zb^\top $ and $ \Zb\bLambda \Zb^\top $ respectively:
\begin{align*}
    & \big\| \Zb\bLambda \Zb^\top - \tr(\bSigma)\cdot \Ib \big\|_2 \leq c_1\cdot \big( n \| \bSigma\|_2  + \sqrt{n} \| \bSigma\|_F \big), \\
    & \big\| \Zb\bLambda^2 \Zb^\top - \|\bSigma\|_F^2\cdot \Ib \big\|_2 \leq c_1\cdot \big( n \| \bSigma\|_2^2  + \sqrt{n} \| \bSigma^2 \|_F \big),
\end{align*}
where $c_1$ is an absolute constant. Plugging the above inequalities and the bound $\| \ab \|_2^2 = O(D^{-2} n)$ into \eqref{eq:main_lemmaproof_eq2}, we obtain with some calculation that
\begin{align*}
    I_3 \leq c_2 H(\bmu, \Qb, \yb, \bSigma)\cdot (  n\cdot \|\bSigma\|_F^2 +  n^2\cdot \| \bSigma\|_2^2)
\end{align*}
with $ H(\bmu, \Qb, \yb, \bSigma) = [D\cdot \tr(\bSigma) ]^{-2} $, where $c_2$ is an absolute constant. This gives the bound of $I_3$.


Lemma~\ref{lemma:I1I2I3bounds} is significant in three-fold. First of all, the result does not have an explicit dependency on $d$, which makes it applicable to infinite dimensional data. 
Second, Lemma~\ref{lemma:I1I2I3bounds} gives bounds with great simplicity, and shows that the three bounds share a same strictly positive factor $ H(\bmu, \Qb, \yb, \bSigma)$, which can be canceled out since our final goal is to bound the ratio $I_1 / (I_2 + I_3)$. Lastly, Lemma~\ref{lemma:I1I2I3bounds} reveals the fact that the risk bound only depends on $\|\bSigma\|_F$ and $ \| \bSigma\|_2$, which can be small even though the assumption requires $\tr(\bSigma)$ to be large.

We are now ready to present the proof of Theorem~\ref{thm:BOnew}.

\begin{proof}[Proof of Theorem~\ref{thm:BOnew}]
Clearly, under the assumptions of Theorem~\ref{thm:BOnew}, the conditions in Lemma~\ref{lemma:thetaSVM_riskbound} and Lemma~\ref{lemma:I1I2I3bounds} are both satisfied. 
By Lemma~\ref{lemma:I1I2I3bounds}, we have
\begin{align*}
    &\frac{ [ \yb^\top (\Xb\Xb^\top)^{-1} \Xb \bmu  ]^2 }{(\yb^\top (\Xb\Xb^\top)^{-1} \yb)^2 \| \bmu \|_{\bSigma}^2 + \| \Qb^\top (\Xb\Xb^\top)^{-1} \yb \|_{\bSigma}^2} \geq c_1 \cdot \frac{ n^2 \| \bmu \|_2^4 }{  n^2 \| \bmu \|_{\bSigma}^2 + n\cdot \|\bSigma\|_F^2 +  n^2\cdot \| \bSigma\|_2^2 },
\end{align*}
where $c_1$ is an absolute constant. 
Therefore by Lemma~\ref{lemma:thetaSVM_riskbound} we have
\begin{align*}
    R(\hat\btheta_{\text{SVM}}) \leq \exp\Bigg(  \frac{ - c_2 n \| \bmu \|_2^4  }{  n  \| \bmu \|_{\bSigma}^2+ \| \bSigma\|_F^2 +  n\| \bSigma\|_2^2 } \Bigg)
\end{align*}
for some absolute constant $c_2$. Note that by union bound, the above inequality holds with probability at least $ 1 - O(n^{-2}) \geq 1 - n^{-1}$ when $n$ is large enough. This completes the proof. 
\end{proof}


\section{Conclusion and Future Work}
We have studied the benign overfitting phenomenon for sub-Gaussian mxiture models, and established a population risk bound for the maximum margin classifier. Our population risk bound is general and covers both the isotropic and anisotropic settings. When reduced to the isotropic setting, our bound is tighter than existing results. We have also studied a class of non-isotropic models which can be benign even for infinite-dimensional data.

\CC{An interesting future work direction is to study the relation between the dimension and the population risk and verify the double descent phenomenon. Studying  benign overfitting for more complicated learning models such as neural networks is another important future work direction. }

\section*{Acknowledgments and Disclosure of Funding}

We thank the anonymous reviewers for their helpful comments. This work was done when YC was a postdoctoral researcher at UCLA. YC and QG are partially supported by the National Science Foundation award IIS-1903202 and IIS-2008981. MB acknowledges support from NSF IIS-1815697 and the support of the NSF and the Simons Foundation for the Collaboration on the Theoretical Foundations of Deep Learning through awards DMS-2031883 and \#814639. The views and conclusions contained in this paper are those of the authors and should not be interpreted as representing any funding agencies.

\bibliography{deeplearningreference}
\bibliographystyle{ims}

\section*{Checklist}

\begin{enumerate}

\item For all authors...
\begin{enumerate}
  \item Do the main claims made in the abstract and introduction accurately reflect the paper's contributions and scope?
    \answerYes{} 
  \item Did you describe the limitations of your work?
    \answerYes{} 
  \item Did you discuss any potential negative societal impacts of your work?
    \answerNA{The focus of this paper is on theoretical analysis. The results do not have any negative social impact.} 
  \item Have you read the ethics review guidelines and ensured that your paper conforms to them?
    \answerYes{} 
\end{enumerate}

\item If you are including theoretical results...
\begin{enumerate}
  \item Did you state the full set of assumptions of all theoretical results?
    \answerYes{} 
	\item Did you include complete proofs of all theoretical results?
    \answerYes{} 
\end{enumerate}

\item If you ran experiments...
\begin{enumerate}
  \item Did you include the code, data, and instructions needed to reproduce the main experimental results (either in the supplemental material or as a URL)?
    \answerYes{} 
  \item Did you specify all the training details (e.g., data splits, hyperparameters, how they were chosen)?
    \answerYes{} 
	\item Did you report error bars (e.g., with respect to the random seed after running experiments multiple times)?
    \answerNA{The experiments are to verify high-probability guarantees on synthetic data.}
	\item Did you include the total amount of compute and the type of resources used (e.g., type of GPUs, internal cluster, or cloud provider)?
    \answerNA{This paper does not focus on computing cost. All experiments can be run very efficiently on a standard PC.}
\end{enumerate}

\item If you are using existing assets (e.g., code, data, models) or curating/releasing new assets...
\begin{enumerate}
  \item If your work uses existing assets, did you cite the creators?
    \answerNA{}
  \item Did you mention the license of the assets?
    \answerNA{}
  \item Did you include any new assets either in the supplemental material or as a URL?
    \answerNA{}
  \item Did you discuss whether and how consent was obtained from people whose data you're using/curating?
    \answerNA{}
  \item Did you discuss whether the data you are using/curating contains personally identifiable information or offensive content?
    \answerNA{}
\end{enumerate}

\item If you used crowdsourcing or conducted research with human subjects...
\begin{enumerate}
  \item Did you include the full text of instructions given to participants and screenshots, if applicable?
    \answerNA{}
  \item Did you describe any potential participant risks, with links to Institutional Review Board (IRB) approvals, if applicable?
    \answerNA{}
  \item Did you include the estimated hourly wage paid to participants and the total amount spent on participant compensation?
    \answerNA{}
\end{enumerate}

\end{enumerate}

\newpage
\appendix




\section{Completing the Proof of Theorem~\ref{thm:BOnew}}
In Section~\ref{section:proof_main} we give the proof of Theorem~\ref{thm:BOnew} based on several technical propositions and lemmas. Here we present the complete proof of these propositions and lemmas.

\subsection{Proof of Proposition~\ref{prop:interpolationregression}}
\label{section:proof_proposition}
Here we present the proof of Proposition~\ref{prop:interpolationregression}. We begin with the following lemma, which is studied by \citet{muthukumar2020classification,hsu2020proliferation}. 


\begin{lemma}[\citet{hsu2020proliferation}]\label{lemma:equivalence}
$\hat\btheta_{\text{SVM}} = \hat\btheta_{\text{LS}}$ if and only if 
$\yb^\top (\Xb\Xb^\top)^{-1} \eb_i y_i >0$ for all $i\in[n]$. 
\end{lemma}

According to Lemma~\ref{lemma:equivalence}, to study the equivalence between the maximum margin classifier and the minimum norm interpolator, it suffices to derive sufficient conditions such that $\yb^\top (\Xb\Xb^\top)^{-1} \eb_i y_i$, $i\in[n]$ are strictly positive with high probability. We have the following lemma which summarizes some calculations regarding the quantity $\yb^\top (\Xb\Xb^\top)^{-1} \eb_i y_i$. 

\begin{lemma}\label{lemma:condition_calculation}
    Suppose that 
    $\tr(\bSigma) > C\max\{ n^{3/2}  \| \bSigma\|_2 , n \| \bSigma\|_F, n \| \bmu \|_{\bSigma}\}$ 
    for some absolute constant $C$. Then with probability at least $ 1 - O(n^{-2}) $, 
    \begin{align*}
        \yb^\top (\Xb\Xb^\top)^{-1} \eb_i y_i\geq G \Big[ 1 - C' n\big|\bmu^\top\Qb^\top (\Qb\Qb^\top)^{-1}\eb_i \big|   \Big]
    \end{align*}
    for all $i\in [n]$, 
    where $G = G(\bmu, \Qb, \yb, \bSigma)>0 $ is a strictly positive factor and $C'>0$ is an absolute constant. 
\end{lemma}
By Lemma~\ref{lemma:condition_calculation}, we can see that in order to ensure $\yb^\top (\Xb\Xb^\top)^{-1} \eb_i y_i > 0$, it suffices to establish an upper bound for
$|\bmu^\top\Qb^\top (\Qb\Qb^\top)^{-1}\eb_i | $. 
However, deriving tight upper bounds for this term turns out to be challenging, as a simple application of the Cauchy-Schwarz inequality can lead to a loose bound with an additional $\sqrt{n}$ factor. In the following, we establish a refined bound on the term $|\bmu^\top\Qb^\top (\Qb\Qb^\top)^{-1}\eb_i | $.

\begin{lemma}\label{lemma:anisotropicbound1}
Suppose that 
    $\tr(\bSigma) > C\max\{ n^{3/2}  \| \bSigma\|_2 , n \| \bSigma\|_F \}$  
for some absolute constant $C$. Then with probability at least $ 1 - O(n^{-2}) $, 
\begin{align*}
    \big| \bmu^\top\Qb^\top (\Qb\Qb^\top)^{-1} \eb_i \big| \leq   \frac{C' \| \bmu \|_{\bSigma}  \cdot  \sqrt{\log(n)}}{ \tr(\bSigma)  }
\end{align*}
for all $i\in [n]$, where $C'>0$ is an absolute constant.
\end{lemma}

We are now ready to present the proof of Proposition~\ref{prop:interpolationregression}

\begin{proof}[Proof of Proposition~\ref{prop:interpolationregression}]
By the union bound, we have that with probability at least $1 - 2n^{-2}$, the results in Lemma~\ref{lemma:condition_calculation} and Lemma~\ref{lemma:anisotropicbound1} both hold. Therefore, for any $i\in[n]$, we have
\begin{align*}
    \yb^\top (\Xb\Xb^\top)^{-1} \eb_i y_i&\geq G \Big[ 1 - c_1 n\big|\bmu^\top\Qb^\top (\Qb\Qb^\top)^{-1}\eb_i \big|   \Big] \geq G \Bigg[ 1 - \frac{ c_2 n  \sqrt{\log(n)} \cdot \| \bmu \|_{\bSigma} }{ \tr(\bSigma)  } \Bigg]\\
    &\propto  \tr(\bSigma) - c_2 n \sqrt{\log(n)}\cdot \| \bmu \|_{\bSigma}.
\end{align*}
By the assumption $\tr( \bSigma ) \geq C n\sqrt{\log(n)}\cdot \| \bmu \|_{\bSigma} $ for some large enough absolute constant $C$, we have $\yb^\top (\Xb\Xb^\top)^{-1} \eb_i y_i > 0$. Finally, applying Lemma~\ref{lemma:equivalence}, we conclude that $\hat\btheta_{\text{SVM}} = \hat\btheta_{\text{LS}}$.
\end{proof}

\subsection{Proof of Lemmas in Section~\ref{section:proof_main}}\label{section:proof_appendix1}
We denote $\bnu = \Qb \bmu$ and $\Ab = \Qb\Qb^\top$. Based on these notations, in the following we present several basic lemmas that are used in our proof. We have the following lemma which gives concentration inequalities for the the eigenvalues of $\Ab$.


\begin{lemma}\label{lemma:eigenvalue_concentration}
    With probability at least $1 - n^{-2}$,
    \begin{align*}
    \big\| \Ab  - \tr(\bSigma)\cdot \Ib \big\|_2 \leq \epsilon_{\lambda} := C \sigma_u^2  \big( n\cdot \| \bSigma\|_2  + \sqrt{n}\cdot \| \bSigma\|_F \big),
\end{align*}
where $C$ is an absolute constant.
\end{lemma}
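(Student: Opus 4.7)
The plan is to reduce to the factorization $\Qb = \Zb\bLambda^{1/2}\Vb^\top$ that follows from the model definition, where $\Zb \in \RR^{n\times d}$ has independent entries $z_{ij}$ of mean zero, unit variance, and $\|z_{ij}\|_{\psi_2} \leq \sigma_u$. Since $\Vb$ is orthogonal, $\Ab = \Qb\Qb^\top = \Zb\bLambda\Zb^\top$ and $\tr(\bLambda) = \tr(\bSigma)$, $\|\bLambda\|_F = \|\bSigma\|_F$, $\|\bLambda\|_2 = \|\bSigma\|_2$, so it suffices to establish
\[
\big\|\Zb\bLambda\Zb^\top - \tr(\bLambda)\Ib\big\|_2 \leq C\sigma_u^2\bigl(n\|\bSigma\|_2 + \sqrt{n}\|\bSigma\|_F\bigr)
\]
with probability at least $1 - n^{-2}$.

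The main tool is a pointwise quadratic-form bound combined with an $\epsilon$-net argument on $S^{n-1}$. Fix any $\vb \in S^{n-1}$ and define $Y_k := \sum_{i=1}^n v_i z_{ik}$ for $k \in [d]$. Since the $z_{ij}$ are independent across both indices, $Y_1,\ldots,Y_d$ are mutually independent, each of mean zero, unit variance, and with sub-Gaussian norm $O(\sigma_u)$. A direct expansion gives
\[
\vb^\top\Zb\bLambda\Zb^\top\vb - \tr(\bLambda) = \sum_{k=1}^d \lambda_k(Y_k^2 - 1),
\]
and the Hanson-Wright inequality applied to this diagonal quadratic form then yields
\[
\PP\bigl(|\vb^\top\Zb\bLambda\Zb^\top\vb - \tr(\bLambda)| > t\bigr) \leq 2\exp\!\Bigl(-c\min\bigl(t^2/(\sigma_u^4\|\bSigma\|_F^2),\; t/(\sigma_u^2\|\bSigma\|_2)\bigr)\Bigr).
\]

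A standard $1/4$-net $\cN \subset S^{n-1}$ has $|\cN| \leq 9^n$ and satisfies $\|\Zb\bLambda\Zb^\top - \tr(\bLambda)\Ib\|_2 \leq 2\sup_{\vb\in\cN}|\vb^\top\Zb\bLambda\Zb^\top\vb - \tr(\bLambda)|$. A union bound over $\cN$ therefore requires the per-vector exponent to exceed $n\log 9 + 2\log n$, which is achieved by taking $t = C\sigma_u^2(\sqrt{n}\|\bSigma\|_F + n\|\bSigma\|_2)$ after inverting each branch of the Hanson-Wright minimum separately. This matches the stated $\epsilon_\lambda$. The only real obstacle is the bookkeeping of the two Hanson-Wright regimes: the sub-Gaussian tail is responsible for the $\sqrt{n}\|\bSigma\|_F$ contribution while the sub-exponential tail produces the $n\|\bSigma\|_2$ contribution, and one must invert each regime separately and sum the resulting thresholds to obtain a single uniform spectral-norm bound.
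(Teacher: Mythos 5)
Your proposal is correct and follows essentially the same route as the paper: reduce to the diagonal quadratic form $\sum_k \lambda_k(Y_k^2-1)$ for a fixed unit vector, apply a Bernstein/Hanson--Wright tail bound (the paper cites Lemma~21 and Corollary~23 of \citet{bartlett2020benign}, which are exactly this bound), then union bound over a $1/4$-net of cardinality $9^n$ and convert to a spectral-norm bound with deviation parameter of order $n$. The only cosmetic difference is that you invoke Hanson--Wright directly rather than the packaged lemmas from \citet{bartlett2020benign}.
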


The following lemma presents some calculations on the quantity $\yb^\top (\Xb\Xb^\top)^{-1}$. It utilizes a result introduced in \citet{wang2020benign}, which is based on the application of the Sherman–Morrison–Woodbury formula.


\begin{lemma}\label{lemma:matrixcalculation}
The following calculation of $\yb^\top (\Xb\Xb^\top)^{-1}$ holds: 
\begin{align*}
    \yb^\top (\Xb\Xb^\top)^{-1} = D^{-1} [ ( 1 + \yb^\top \Ab^{-1} \bnu )\cdot \yb^\top \Ab^{-1} - \yb^\top \Ab^{-1} \yb \cdot \bnu^\top \Ab^{-1}],
\end{align*}
where 
$D = \yb^\top\Ab^{-1} \yb \cdot  (\| \bmu\|_2^2 - \bnu^\top \Ab^{-1} \bnu)  + (1 + \yb^\top\Ab^{-1} \bnu)^2> 0$.
\end{lemma}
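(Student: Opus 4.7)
The plan is to apply the Sherman--Morrison--Woodbury identity after recognizing that $\Xb\Xb^\top$ differs from $\Ab = \Qb\Qb^\top$ by a rank-two perturbation. Expanding $\Xb\Xb^\top = (\yb\bmu^\top + \Qb)(\yb\bmu^\top + \Qb)^\top$ and using $\bnu = \Qb\bmu$ gives
$$\Xb\Xb^\top = \Ab + \|\bmu\|_2^2\,\yb\yb^\top + \yb\bnu^\top + \bnu\yb^\top = \Ab + \Ub\Cb\Ub^\top,$$
where $\Ub = [\yb,\bnu] \in \RR^{n\times 2}$ and $\Cb = \bigl(\begin{smallmatrix} \|\bmu\|_2^2 & 1 \\ 1 & 0 \end{smallmatrix}\bigr)$. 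Note $\Cb$ is invertible with $\det(\Cb) = -1$ and $\Cb^{-1} = \bigl(\begin{smallmatrix} 0 & 1 \\ 1 & -\|\bmu\|_2^2 \end{smallmatrix}\bigr)$, so the Woodbury formula applies cleanly.

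By Woodbury, $(\Xb\Xb^\top)^{-1} = \Ab^{-1} - \Ab^{-1}\Ub\,\Mb^{-1}\,\Ub^\top\Ab^{-1}$ with
$$\Mb := \Cb^{-1} + \Ub^\top\Ab^{-1}\Ub = \begin{pmatrix} \yb^\top\Ab^{-1}\yb & 1+\yb^\top\Ab^{-1}\bnu \\ 1+\yb^\top\Ab^{-1}\bnu & \bnu^\top\Ab^{-1}\bnu - \|\bmu\|_2^2 \end{pmatrix}.$$
A direct determinant computation gives $\det(\Mb) = -D$, so $\Mb^{-1}$ carries a factor of $1/D$ in front of an explicit adjugate. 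Then I would left-multiply by $\yb^\top$, which reduces $\yb^\top\Ab^{-1}\Ub$ to the row $[\yb^\top\Ab^{-1}\yb,\,\yb^\top\Ab^{-1}\bnu]$; multiplying this row by $\Mb^{-1}\Ub^\top\Ab^{-1}$ and simplifying (the cross terms involving $\yb^\top\Ab^{-1}\bnu$ combine in a way that cancels against $\yb^\top\Ab^{-1}$) collapses the whole expression into $D^{-1}[(1+\yb^\top\Ab^{-1}\bnu)\,\yb^\top\Ab^{-1} - \yb^\top\Ab^{-1}\yb\cdot\bnu^\top\Ab^{-1}]$, which is exactly the claimed identity.

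For the positivity $D > 0$: under the over-parameterization assumption $\Xb\Xb^\top$ is invertible almost surely, and the matrix determinant lemma gives $\det(\Xb\Xb^\top) = \det(\Ab)\det(\Cb)\det(\Mb) = D\det(\Ab)$, which forces $D > 0$ since $\Ab$ is positive definite. Alternatively, $\Qb^\top\Ab^{-1}\Qb$ is the orthogonal projection onto the row space of $\Qb$, so $\|\bmu\|_2^2 - \bnu^\top\Ab^{-1}\bnu \ge 0$ and $D$ visibly decomposes as a sum of a non-negative term and $(1+\yb^\top\Ab^{-1}\bnu)^2$. The main obstacle is purely bookkeeping: the four scalars $\yb^\top\Ab^{-1}\yb$, $\yb^\top\Ab^{-1}\bnu$, $\bnu^\top\Ab^{-1}\bnu$, and $\|\bmu\|_2^2$ enter the $2\times 2$ simplification in many combinations and must be tracked carefully, but no conceptual difficulty arises beyond identifying the correct rank-two decomposition of $\Xb\Xb^\top$.
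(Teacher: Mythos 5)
Your proof is correct, and it takes a genuinely more self-contained route than the paper's. The paper imports the expansion of $\yb^\top (\Xb\Xb^\top)^{-1}$ from Lemma~3 of \citet{wang2020benign} as a black box and only carries out the final simplification showing the coefficient of $\yb^\top \Ab^{-1}$ collapses to $(1+\yb^\top\Ab^{-1}\bnu)/D$; you instead derive that expansion from scratch by writing $\Xb\Xb^\top$ as $\Ab$ plus a rank-two update with factor $[\yb,\bnu]$ and symmetric core $\bigl(\begin{smallmatrix}\|\bmu\|_2^2 & 1\\ 1 & 0\end{smallmatrix}\bigr)$, then applying Woodbury. I checked the algebra: your capacitance matrix, its determinant $-D$, and the final cancellation (with $s=\yb^\top\Ab^{-1}\yb$, $t=\bnu^\top\Ab^{-1}\bnu$, $h=\yb^\top\Ab^{-1}\bnu$, the coefficient of $\yb^\top\Ab^{-1}$ is $1-D^{-1}(s\|\bmu\|_2^2-st+h^2+h)=(h+1)/D$) reproduce exactly the paper's intermediate expression, so the two arguments converge. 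Your determinant-lemma argument for $D>0$ is arguably cleaner than the paper's, which bounds $D\ge(1+h)^2$ via the projection inequality and then needs the almost-sure non-degeneracy $h\neq -1$ (misprinted as $\neq 1$ in the paper). One ordering nit: Woodbury presupposes that the capacitance matrix is invertible, i.e.\ $D\neq 0$, so you should establish $D>0$ first via the matrix determinant lemma (which requires only invertibility of $\Ab$, of the $2\times 2$ core, and of $\Xb\Xb^\top$, not of the capacitance matrix) and only then invoke Woodbury; as written, the positivity argument appears after the inversion that relies on it, but this is trivially reordered.
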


Motivated by Lemma~\ref{lemma:matrixcalculation}, we estimate the orders of the terms $ \yb^\top \Ab^{-1} \yb$, $\bnu^\top \Ab^{-1} \bnu$, and $\yb^\top \Ab^{-1} \bnu$. The results are given in the following lemma.

\begin{lemma}\label{lemma:concentrationbounds}
Let $\epsilon_{\lambda}$ be defined in Lemma~\ref{lemma:eigenvalue_concentration}, and suppose that $\tr( \bSigma )  > \epsilon_{\lambda}$. Then with probability at least $1 - O(n^{-2})$, the following inequalities hold: 
\begin{align*}
    &\frac{n}{ \tr( \bSigma )  + \epsilon_{\lambda} } \leq  \yb^\top \Ab^{-1} \yb \leq \frac{n}{ \tr( \bSigma )  - \epsilon_{\lambda} },\\
    &\frac{n  - C \sqrt{n\log( n)}}{ \tr( \bSigma )  + \epsilon_{\lambda} } \cdot \| \bmu \|_{\bSigma}^2 \leq  \bnu^\top \Ab^{-1} \bnu \leq \frac{ n  + C \sqrt{n\log( n)} }{ \tr( \bSigma )  - \epsilon_{\lambda} } \cdot \| \bmu \|_{\bSigma}^2,\\
    &|\yb^\top \Ab^{-1} \bnu| \leq \frac{C n   }{  \tr( \bSigma )  - \epsilon_{\lambda}  } \| \bmu \|_{\bSigma},
\end{align*}
where $C$ is an absolute constant.


\end{lemma}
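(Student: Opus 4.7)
The plan is to condition throughout on the high-probability event from Lemma~\ref{lemma:eigenvalue_concentration}, which yields the operator-norm bound $\|\Ab - \tr(\bSigma)\Ib\|_2 \leq \epsilon_\lambda$. This immediately translates into two-sided spectral bounds on $\Ab^{-1}$, namely that all its eigenvalues lie in $[1/(\tr(\bSigma) + \epsilon_\lambda),\, 1/(\tr(\bSigma) - \epsilon_\lambda)]$ (here the hypothesis $\tr(\bSigma) > \epsilon_\lambda$ is what makes $\Ab$ invertible with a controlled inverse). The three inequalities of the lemma then reduce to controlling the norms of the ``sandwiched'' vectors $\yb$ and $\bnu$ in the ordinary Euclidean sense.

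The first bound is immediate: since $\yb \in \{\pm 1\}^n$, we have $\|\yb\|_2^2 = n$, so the spectral bounds on $\Ab^{-1}$ yield $n/(\tr(\bSigma)+\epsilon_\lambda) \leq \yb^\top \Ab^{-1} \yb \leq n/(\tr(\bSigma)-\epsilon_\lambda)$ without any further probabilistic work. For the second bound, the spectral bounds reduce the problem to estimating $\|\bnu\|_2^2 = \sum_{i=1}^n (\qb_i^\top \bmu)^2$. Each summand is a squared sub-Gaussian random variable with mean $\bmu^\top \bSigma \bmu = \|\bmu\|_{\bSigma}^2$ and sub-exponential norm of order $\sigma_u^2 \|\bmu\|_{\bSigma}^2$, and the summands are independent across $i$. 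A standard Bernstein inequality for sums of independent sub-exponential random variables then gives
\begin{equation*}
\bigl| \|\bnu\|_2^2 - n \|\bmu\|_{\bSigma}^2 \bigr| \leq C \sqrt{n \log(n)} \cdot \|\bmu\|_{\bSigma}^2
\end{equation*}
with probability at least $1 - O(n^{-2})$ (choosing the deviation to be $C\sqrt{n\log n}\,\|\bmu\|_{\bSigma}^2$, one needs $n \geq \log n$ so the Gaussian tail dominates the exponential tail, which holds for all $n \geq 1$). Sandwiching this concentration estimate between the spectral bounds on $\Ab^{-1}$ produces exactly the claimed two-sided bound on $\bnu^\top \Ab^{-1} \bnu$.

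For the third inequality, the natural route is Cauchy--Schwarz with respect to the inner product induced by the positive definite matrix $\Ab^{-1}$:
\begin{equation*}
|\yb^\top \Ab^{-1} \bnu| \leq \sqrt{\yb^\top \Ab^{-1} \yb} \cdot \sqrt{\bnu^\top \Ab^{-1} \bnu}.
\end{equation*}
Plugging in the upper bounds from the first two parts, the right-hand side is at most $\sqrt{n \cdot (n + C\sqrt{n\log n})}\, \|\bmu\|_{\bSigma}/(\tr(\bSigma) - \epsilon_\lambda)$, and absorbing the lower-order $\sqrt{n\log n}$ term into a constant (valid since the leading term is of order $n$) yields $Cn\,\|\bmu\|_{\bSigma}/(\tr(\bSigma) - \epsilon_\lambda)$, as claimed. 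A union bound over the eigenvalue-concentration event and the Bernstein event ensures all three inequalities hold simultaneously with probability $1 - O(n^{-2})$.

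The only step that requires genuine probabilistic work is the Bernstein estimate on $\|\bnu\|_2^2$; everything else is linear algebra combined with the spectral bound on $\Ab$. The mildly delicate point is making sure the sub-exponential norm of $(\qb_i^\top \bmu)^2$ is computed correctly through the reparameterization $\qb_i = \Vb \bLambda^{1/2} \ub_i$, so that $\qb_i^\top \bmu = \ub_i^\top \bLambda^{1/2} \Vb^\top \bmu$ is a weighted sum of independent sub-Gaussians with variance parameter $\|\bmu\|_{\bSigma}^2$; once that identification is made, the remainder of the argument is routine.
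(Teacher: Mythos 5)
Your proposal is correct and follows essentially the same route as the paper: condition on the event $\|\Ab - \tr(\bSigma)\Ib\|_2 \leq \epsilon_\lambda$, handle $\yb^\top\Ab^{-1}\yb$ by pure linear algebra, and control $\|\bnu\|_2^2 = \|\Qb\bmu\|_2^2$ via the reparameterization $\Qb\bmu = \Zb\tilde\bmu$ with $\tilde\bmu = \bLambda^{1/2}\Vb^\top\bmu$ and a Bernstein bound on the sum of independent squared sub-Gaussians with mean $\|\bmu\|_{\bSigma}^2$. The only cosmetic difference is in the third inequality, where you apply Cauchy--Schwarz in the $\Ab^{-1}$ inner product while the paper uses Euclidean Cauchy--Schwarz followed by the spectral bound on $\Ab^{-2}$; both yield the same order $n\|\bmu\|_{\bSigma}/(\tr(\bSigma)-\epsilon_\lambda)$.
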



\subsubsection{Proof of Lemma~\ref{lemma:subGaussian_riskbound}}\label{section:proof_subGaussian_riskbound}
Here we give the detailed proof of Lemma~\ref{lemma:subGaussian_riskbound}, which is based on the one-side sub-Gaussian tail bound.
\begin{proof}[Proof of Lemma~\ref{lemma:subGaussian_riskbound}]
By definition, we have
\begin{align*}
    R(\btheta) &= \PP( y\cdot \btheta^\top \xb < 0 )  = \PP[ y\cdot \btheta^\top ( y\cdot \bmu + \qb) < 0 ] = \PP[ \btheta^\top \bmu  < y\cdot \btheta^\top \qb ] = \PP[ \btheta^\top \bmu  < y\cdot \btheta^\top  \Vb \bLambda^{1/2} \ub ],
\end{align*}
where in the second and last equations we plug in the definitions of $\xb$ and $\qb$ according to our data generation procedure described in Section~\ref{section:problemsetting}. Note that $\ub$ has independent, $\sigma_u$-sub-Gaussian entries. Therefore we have

\begin{align*}
    \|  \btheta^\top  \Vb \bLambda^{1/2} \ub  \|_{\psi_2} \leq c_1  \|  \btheta^\top  \Vb \bLambda^{1/2} \|_{2} = c_1  \sqrt{ \btheta^\top  \Vb \bLambda \Vb^\top \btheta } = c_1  \sqrt{ \btheta^\top  \bSigma \btheta }.
\end{align*}
Applying the one-side sub-Gaussian tail bound (e.g., Theorem~A.2 in \citet{chatterji2020finite}) completes the proof. 
\end{proof}

\subsubsection{Proof of Lemma~\ref{lemma:thetaSVM_riskbound}}
The proof of Lemma~\ref{lemma:thetaSVM_riskbound} is given as follows, where we utilize Proposition~\ref{prop:interpolationregression} and Lemma~\ref{lemma:subGaussian_riskbound} to derive the desired bound.

\begin{proof}[Proof of Lemma~\ref{lemma:thetaSVM_riskbound}]
By Proposition~\ref{prop:interpolationregression}, we have
\begin{align*}
    \hat\btheta_{\text{SVM}} = \hat\btheta_{\text{LS}}  = \Xb^\top (\Xb\Xb^\top)^{-1} \yb.
\end{align*}
Plugging it into the risk bound in Lemma~\ref{lemma:subGaussian_riskbound}, we obtain
\begin{align*}
    R(\hat\btheta_{\text{SVM}} ) \leq \exp\bigg\{ - \frac{ C[ \yb^\top (\Xb\Xb^\top)^{-1} \Xb \bmu  ]^2}{ \| \Xb^\top (\Xb\Xb^\top)^{-1} \yb \|_{\bSigma}^2 } \bigg\}.
\end{align*}
Note that based on our model, we have $\Xb = \yb \bmu^\top + \Qb$, and 
\begin{align}
    \| \Xb^\top (\Xb\Xb^\top)^{-1} \yb \|_{\bSigma}^2 
    &\quad= \| (\yb \bmu^\top + \Qb)^\top (\Xb\Xb^\top)^{-1} \yb \|_{\bSigma}^2  \nonumber\\
    &\quad\leq 2 \| \bmu \yb^\top (\Xb\Xb^\top)^{-1} \yb \|_{\bSigma}^2 + 2\| \Qb^\top (\Xb\Xb^\top)^{-1} \yb \|_{\bSigma}^2 \nonumber\\
    &\quad= 2 (\yb^\top (\Xb\Xb^\top)^{-1} \yb)^2 \cdot \| \bmu \|_{\bSigma}^2 + 2\| \Qb^\top (\Xb\Xb^\top)^{-1} \yb \|_{\bSigma}^2.\nonumber
\end{align}
Therefore we have
\begin{align*}
    R(\hat\btheta_{\text{SVM}} ) \leq \exp\bigg\{ \frac{ - (C/2)\cdot [ \yb^\top (\Xb\Xb^\top)^{-1} \Xb \bmu  ]^2 }{(\yb^\top (\Xb\Xb^\top)^{-1} \yb)^2 \cdot \| \bmu \|_{\bSigma}^2 + \| \Qb^\top (\Xb\Xb^\top)^{-1} \yb \|_{\bSigma}^2 } \bigg\}.
\end{align*}
This completes the proof.
\end{proof}

\subsubsection{Proof of Lemma~\ref{lemma:I1I2I3bounds}}
In this subsection we present the proof of Lemma~\ref{lemma:I1I2I3bounds}. We first give the following lemma, which follows by exactly the same proof as Lemma~\ref{lemma:eigenvalue_concentration}.

\begin{lemma}\label{lemma:eigenvalue_concentration2}
    Suppose that $\Zb \in \RR^{n\times d}$ is a random matrix with i.i.d. sub-Gaussian entries with sub-Gaussian norm $\sigma_u$. Then with probability at least $1 - O(n^{-2})$,
    \begin{align*}
    \big\| \Zb\bLambda^2 \Zb^\top   - \|\bSigma\|_F^2\cdot \Ib \big\|_2 \leq \epsilon_{\lambda}' := C \sigma_u^2  \big( n\cdot \| \bSigma\|_2^2  + \sqrt{n}\cdot \| \bSigma^2\|_F \big),
\end{align*}
where $C$ is an absolute constant.
\end{lemma}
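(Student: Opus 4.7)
The plan is to mimic the proof of Lemma~\ref{lemma:eigenvalue_concentration} essentially verbatim, with every occurrence of $\bLambda$ replaced by $\bLambda^2$. Writing $\zb_k$ for the $k$-th column of $\Zb$, we have
\begin{align*}
\Zb \bLambda^2 \Zb^\top = \sum_{k=1}^d \lambda_k^2\, \zb_k \zb_k^\top,
\end{align*}
and since $\EE[\zb_k \zb_k^\top] = \Ib$, this centers around $\sum_k \lambda_k^2 \Ib = \tr(\bSigma^2) \Ib = \|\bSigma\|_F^2 \Ib$. So the lemma is a concentration statement for a sum of independent, zero-mean random matrices $\lambda_k^2(\zb_k\zb_k^\top - \Ib)$.

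First, I would reduce to a scalar concentration problem via the variational characterization $\|\Mb\|_2 = \sup_{\vb\in S^{n-1}} |\vb^\top \Mb \vb|$ applied to the symmetric matrix $\Mb := \Zb\bLambda^2\Zb^\top - \|\bSigma\|_F^2 \Ib$. Fixing a unit $\vb \in \RR^n$,
\begin{align*}
\vb^\top \Mb \vb = \sum_{k=1}^d \lambda_k^2 \big[(\vb^\top \zb_k)^2 - 1\big].
\end{align*}
Each $\vb^\top \zb_k$ is a $\sigma_u$-sub-Gaussian random variable (since $\vb$ is unit and the entries of $\zb_k$ are i.i.d.\ sub-Gaussian with norm $\sigma_u$), so $(\vb^\top\zb_k)^2 - 1$ is centered sub-exponential with $\|\cdot\|_{\psi_1}$-norm $\lesssim \sigma_u^2$. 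Bernstein's inequality for independent sub-exponentials then yields, for each $t>0$,
\begin{align*}
\PP\big(|\vb^\top \Mb \vb| > t\big) \leq 2\exp\left(-c\min\left\{\frac{t^2}{\sigma_u^4\sum_k \lambda_k^4},\ \frac{t}{\sigma_u^2 \max_k \lambda_k^2}\right\}\right) = 2\exp\left(-c\min\left\{\frac{t^2}{\sigma_u^4\|\bSigma^2\|_F^2},\ \frac{t}{\sigma_u^2\|\bSigma\|_2^2}\right\}\right).
\end{align*}

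Next, I would upgrade the pointwise bound to a uniform bound by a standard $1/4$-net argument on $S^{n-1}$: there is a net $\mathcal{N}$ of cardinality at most $9^n$ such that $\|\Mb\|_2 \leq 2\sup_{\vb \in \mathcal{N}} |\vb^\top \Mb \vb|$. Union-bounding the Bernstein inequality over $\mathcal{N}$ and choosing
\begin{align*}
t = C\sigma_u^2\big(n\|\bSigma\|_2^2 + \sqrt{n}\,\|\bSigma^2\|_F\big)
\end{align*}
with $C$ sufficiently large, the exponent beats $n\log 9 + 2\log n$ in both the quadratic and the linear regimes, giving the claimed $1 - O(n^{-2})$ probability.

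The only obstacle worth flagging is the book-keeping in the two Bernstein regimes: in the quadratic regime we need $t^2/(\sigma_u^4\|\bSigma^2\|_F^2) \gtrsim n$, which forces the $\sqrt{n}\,\|\bSigma^2\|_F$ term, while in the linear regime we need $t/(\sigma_u^2\|\bSigma\|_2^2) \gtrsim n$, which forces the $n\|\bSigma\|_2^2$ term; taking the maximum of the two thresholds gives exactly the form in the statement. This is precisely analogous to the derivation in Lemma~\ref{lemma:eigenvalue_concentration}, where the same argument with $\lambda_k$ in place of $\lambda_k^2$ produced $n\|\bSigma\|_2 + \sqrt{n}\,\|\bSigma\|_F$, so no new ideas beyond the substitution $\bLambda \mapsto \bLambda^2$ are required.
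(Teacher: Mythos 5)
Your proposal is correct and follows essentially the same route as the paper: the paper's proof of Lemma~\ref{lemma:eigenvalue_concentration2} is literally stated to be "exactly the same proof as Lemma~\ref{lemma:eigenvalue_concentration}" with $\bLambda$ replaced by $\bLambda^2$, which is precisely the substitution you carry out (fixed-vector Bernstein bound for sub-exponential variables, $1/4$-net of size $9^n$, union bound, and $t \asymp n$). The only difference is presentational: you spell out the Bernstein and net-to-spectral-norm steps that the paper delegates to cited lemmas from \citet{bartlett2020benign} and \citet{vershynin2010introduction}.
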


Based on Lemma~\ref{lemma:eigenvalue_concentration2}, we can give the proof of Lemma~\ref{lemma:I1I2I3bounds} as follows.

\begin{proof}[Proof of Lemma~\ref{lemma:I1I2I3bounds}]
We first derive the lower bound for $I_1$. By Lemma~\ref{lemma:matrixcalculation} and the model definition $\Xb = \yb \bmu^\top + \Qb$, we have
\begin{align}
    \yb^\top (\Xb\Xb^\top)^{-1}  \Xb \bmu &= D^{-1} [ ( 1 + \yb^\top \Ab^{-1} \bnu ) \yb^\top \Ab^{-1} - \yb^\top \Ab^{-1} \yb \cdot \bnu^\top \Ab^{-1}] (\yb \bmu^\top + \Qb)\bmu \nonumber\\
    &= D^{-1} [ ( 1 + \yb^\top \Ab^{-1} \bnu ) \yb^\top \Ab^{-1} - \yb^\top \Ab^{-1} \yb \cdot \bnu^\top \Ab^{-1}] (\yb \cdot \| \bmu \|_2^2 + \Qb\bmu) \nonumber\\
    & = D^{-1} [ ( 1 + \yb^\top \Ab^{-1} \bnu ) \yb^\top \Ab^{-1} \yb - \yb^\top \Ab^{-1} \yb \cdot \bnu^\top \Ab^{-1} \yb] \cdot \| \bmu \|_2^2 \nonumber \\
    & \qquad + D^{-1} [ ( 1 + \yb^\top \Ab^{-1} \bnu ) \yb^\top \Ab^{-1}\bnu - \yb^\top \Ab^{-1} \yb \cdot \bnu^\top \Ab^{-1}\bnu] )\nonumber\\
    & = D^{-1}\cdot [ (\| \bmu \|_2^2 - \bnu^\top \Ab^{-1}\bnu) \yb^\top \Ab^{-1} \yb  + ( 1 + \yb^\top \Ab^{-1} \bnu ) \yb^\top \Ab^{-1}\bnu  ],\label{eq:refinedBO_numerator_eq0}
\end{align}
where the third equality follows by the notation $\bnu = \Qb\bmu$.
By Lemma~\ref{lemma:concentrationbounds} and the assumption that $\tr( \bSigma ) \geq C \max\{ \epsilon_{\lambda} , n \|\bSigma \|_2 , n \| \bmu \|_{\bSigma} \}$ for some large enough constant $C$, when $n$ is large enough we have
\begin{align*}
     &| \yb^\top \Ab^{-1} \bnu | \leq \frac{ c_1 n   }{  \tr( \bSigma )  - \epsilon_{\lambda}  } \| \bmu \|_{\bSigma} \leq \frac{ 2c_1 n   }{  \tr( \bSigma ) } \| \bmu \|_{\bSigma} \leq 1,\\
     &0\leq \bnu^\top \Ab^{-1}\bnu \leq \frac{ n  + c_2 \sqrt{n\log( n)} }{ \tr( \bSigma )  - \epsilon_{\lambda} } \cdot \| \bmu \|_{\bSigma}^2 \leq \frac{ 2n }{ \tr( \bSigma ) } \cdot \| \bmu \|_{\bSigma}^2 \leq \frac{ 2n \| \bSigma \|_2 }{ \tr( \bSigma ) } \cdot \| \bmu \|_2^2 \leq \frac{1}{2}\cdot  \| \bmu \|_2^2 ,\\
     & \yb^\top \Ab^{-1} \yb \geq \frac{n}{ \tr( \bSigma )  + \epsilon_{\lambda} } \geq \frac{n}{ 2\tr( \bSigma ) },
\end{align*}
where $c_1,c_2$ are absolute constants. 
Plugging the bounds above into \eqref{eq:refinedBO_numerator_eq0}, we obtain
\begin{align*}
    | \yb^\top (\Xb\Xb^\top)^{-1}  \Xb \bmu | &\geq D^{-1}\cdot  \bigg(\frac{1}{2}\cdot\| \bmu \|_2^2\cdot \yb^\top \Ab^{-1} \yb  - 2\cdot   |\yb^\top \Ab^{-1}\bnu | \bigg)\\
    &\geq D^{-1}\cdot \bigg[ \frac{n}{ 4\tr( \bSigma ) } \cdot \| \bmu \|_2^2 - \frac{ 4n   }{  \tr( \bSigma ) } \| \bmu \|_{\bSigma} \bigg]\\
    &\geq D^{-1}\cdot \frac{n}{ 4\tr( \bSigma ) } \cdot  ( \| \bmu \|_2^2 -16 \| \bmu \|_{\bSigma} )\\
    &\geq D^{-1}\cdot \frac{n}{ 8\tr( \bSigma ) } \cdot  \| \bmu \|_2^2,
\end{align*}
where the last inequality follows by the assumption that $\| \bmu \|_2^2 \geq C \| \bmu\|_{\bSigma} $ for some large enough absolute constant $C$. 
Therefore we have
\begin{align*}
    [\yb^\top (\Xb\Xb^\top)^{-1}  \Xb \bmu]^2 \geq D^{-2}\cdot \frac{n^2}{ 64 [\tr( \bSigma )]^2 } \cdot   \| \bmu \|_2^4 = \frac{H(\bmu, \Qb, \yb, \bSigma) }{64} \cdot n^2 \| \bmu \|_2^4,
\end{align*}
where we define 
$$ H(\bmu, \Qb, \yb, \bSigma) := [D\cdot \tr(\bSigma) ]^{-2} > 0. $$
This completes the proof of the lower bound of $I_1$. 



For $I_2$, by Lemma~\ref{lemma:matrixcalculation} we have 
\begin{align*}
    \yb^\top (\Xb\Xb^\top)^{-1} \yb &=  D^{-1} [ ( 1 + \yb^\top \Ab^{-1} \bnu ) \yb^\top \Ab^{-1} \yb- \yb^\top \Ab^{-1} \yb \cdot \bnu^\top \Ab^{-1} \yb] \\
    & = D^{-1} [ ( 1 + \yb^\top \Ab^{-1} \bnu ) \yb^\top \Ab^{-1} \yb- \yb^\top \Ab^{-1} \yb \cdot \bnu^\top \Ab^{-1} \yb] \\
    &= D^{-1} \cdot \yb^\top \Ab^{-1} \yb\\
    &\leq D^{-1}  \cdot \frac{n}{ \tr( \bSigma )  - \epsilon_{\lambda} }\\
    &\leq  2D^{-1}  \cdot \frac{n}{ \tr( \bSigma )},
\end{align*}
where the first inequality follows by Lemma~\ref{lemma:concentrationbounds}, and the second inequality follows by the assumption that $\tr( \bSigma ) \geq C \epsilon_{\lambda}$ for some large enough constant $C$. Therefore we have
\begin{align*}
    I_2 =  (\yb^\top (\Xb\Xb^\top)^{-1} \yb)^2 \cdot  \| \bmu \|_{\bSigma}^2 \leq  4D^{-2}  \cdot \frac{n^2 \cdot \| \bmu \|_{\bSigma}^2}{ [\tr( \bSigma )]^2} = 4 H(\bmu, \Qb, \yb, \bSigma) \cdot n^2 \cdot \| \bmu \|_{\bSigma}^2, 
\end{align*}
where we use the definition $H(\bmu, \Qb, \yb, \bSigma) =  [D\cdot \tr(\bSigma) ]^{-2}$.
This proves the upper bound of $I_2$.

For $I_3$, by our calculation in Lemma~\ref{lemma:matrixcalculation}, we have
\begin{align*}
    \yb^\top (\Xb\Xb^\top)^{-1} = D^{-1} [ ( 1 + \yb^\top \Ab^{-1} \bnu ) \yb - \yb^\top \Ab^{-1} \yb \cdot \bnu]^\top \Ab^{-1}.
\end{align*}
Denote $\ab = D^{-1}[( 1 + \yb^\top \Ab^{-1} \bnu )\cdot \yb - \yb^\top \Ab^{-1} \yb \cdot \bnu]$. Then
\begin{align}
    I_3 &= \yb^\top (\Xb\Xb^\top)^{-1} \Qb \bSigma \Qb^\top (\Xb\Xb^\top)^{-1} \yb \nonumber \\
    &= \ab^\top (\Qb\Qb^\top)^{-1}\Qb\bSigma\Qb^\top (\Qb\Qb^\top)^{-1} \ab \nonumber \\
    & = \ab^\top (\Zb\bLambda \Zb^\top)^{-1} \Zb\bLambda^2 \Zb^\top (\Zb\bLambda \Zb^\top)^{-1} \ab, \label{eq:refinedBOderivation_eq1}
\end{align}
where we plug in $\bSigma = \Vb \bLambda \Vb^\top$ and $\Qb = \Zb \bLambda^{1/2} \Vb^\top$ for $\Zb$ with independent sub-Gaussian entries. By Lemma~\ref{lemma:eigenvalue_concentration}, Lemma~\ref{lemma:eigenvalue_concentration2} and \eqref{eq:refinedBOderivation_eq1}, when $\tr(\bSigma) \geq \epsilon_{\lambda}$ we have
\begin{align}
    I_3&= \ab^\top (\Zb\bLambda \Zb^\top)^{-1} \Zb\bLambda^2 \Zb^\top (\Zb\bLambda \Zb^\top)^{-1} \ab \nonumber\\
    & \leq \ab^\top (\Zb\bLambda \Zb^\top)^{-2} \ab \cdot \big[\|\bSigma\|_F^2 + \epsilon_{\lambda}'\big] \nonumber\\
    & \leq \| \ab \|_2^2\cdot \frac{\|\bSigma\|_F^2 + \epsilon_{\lambda}'}{ [\tr(\bSigma) - \epsilon_{\lambda}]^2}.\label{eq:refinedBOderivation_eq1.5}
\end{align}
Here the first inequality follows by Lemma~\ref{lemma:eigenvalue_concentration2}, and the second inequality follows by Lemma~\ref{lemma:eigenvalue_concentration}. 
By definition, we have
\begin{align}
    \| \ab \|_2^2 & = \| D^{-1}( 1 + \yb^\top \Ab^{-1} \bnu ) \yb - \yb^\top \Ab^{-1} \yb \cdot \bnu \|_2^2 \nonumber \\
    & \leq 2 D^{-2} ( 1 + \yb^\top \Ab^{-1} \bnu )^2 \| \yb \|_2^2 + 2  D^{-2} ( \yb^\top \Ab^{-1} \yb  )^2\cdot  \|  \Qb \bmu\|_2^2. \nonumber
\end{align}

Then with the same proof as in Lemma~\ref{lemma:concentrationbounds}, when $n$ is sufficiently large, with probability at least $1- O(n^{-2})$ we have 
\begin{align*}
      \| \Qb \bmu \|_2^2 \leq  2n \| \bmu \|_{\bSigma}^2.
\end{align*}
Therefore we have
\begin{align}
    \| \ab \|_2^2 & \leq 2 D^{-2} ( 1 + \yb^\top \Ab^{-1} \bnu )^2 \| \yb \|_2^2 + 2  D^{-2} ( \yb^\top \Ab^{-1} \yb  )^2\cdot  \|  \Qb \bmu\|_2^2 \nonumber\\
    & \leq 2 D^{-2} ( 1 + \yb^\top \Ab^{-1} \bnu )^2 \cdot n + 4  D^{-2} ( \yb^\top \Ab^{-1} \yb  )^2\cdot  n \cdot \| \bmu \|_{\bSigma}^2. \label{eq:refinedBOderivation_eq2}
\end{align}

Moreover, by Lemma~\ref{lemma:concentrationbounds} and the assumption that $\tr( \bSigma ) \geq C \max\{ \epsilon_{\lambda} , n, n \| \bmu \|_{\bSigma} \}$ for some large enough constant $C$, we have
\begin{align*}
    &|\yb^\top \Ab^{-1} \bnu| \leq \frac{c_3 n   }{  \tr( \bSigma )  - \epsilon_{\lambda}  } \| \bmu \|_{\bSigma} \leq \sqrt{2} - 1, \\
    &  \yb^\top \Ab^{-1} \yb \leq \frac{n}{ \tr( \bSigma )  - \epsilon_{\lambda} } \leq \frac{2n}{\tr( \bSigma )},
\end{align*}
where $c_3$ is an absolute constant. Plugging the above bounds into \eqref{eq:refinedBOderivation_eq2}, we obtain
\begin{align*}
    \| \ab \|_2^2 & \leq 2 D^{-2} ( 1 + \yb^\top \Ab^{-1} \bnu )^2 \cdot n + 4  D^{-2} ( \yb^\top \Ab^{-1} \yb  )^2\cdot  n \cdot \| \bmu \|_{\bSigma}^2\\
    &\leq 4 D^{-2} \cdot n +  8 D^{-2} \cdot n  \cdot \bigg[\frac{n}{\tr( \bSigma )} \cdot \| \bmu \|_{\bSigma}\bigg]^2\\
    &\leq 5 D^{-2} \cdot n,
\end{align*}
where the last inequality utilizes the assumption  $\tr( \bSigma ) \geq C n \| \bmu \|_{\bSigma} $ for some large enough constant $C$ again. Further plugging this bound into \eqref{eq:refinedBOderivation_eq1.5}, we obtain
\begin{align}
    I_3 &\leq \| \ab \|_2^2\cdot \frac{\|\bSigma\|_F^2 + \epsilon_{\lambda}'}{ [\tr(\bSigma) - \epsilon_{\lambda}]^2} \leq 5 D^{-2} n \cdot \frac{\|\bSigma\|_F^2 + \epsilon_{\lambda}'}{ [\tr(\bSigma) - \epsilon_{\lambda}]^2} \nonumber\\
    &\leq c_4 D^{-2}\cdot  \frac{ n\cdot \|\bSigma\|_F^2 +  n^2\cdot \| \bSigma\|_2^2  + n^{3/2}\cdot \| \bSigma^2\|_F}{ [\tr(\bSigma) ]^2},\label{eq:refinedBOderivation_I3bound_eq1}
\end{align}
where $c_4$ is an absolute constant. 
Note that we have
\begin{align*}
     n^{3/2}\cdot \| \bSigma^2\|_F \leq n\cdot \| \bSigma\|_F\cdot (\sqrt{n}\cdot \| \bSigma\|_2)\leq n\cdot ( \| \bSigma\|_F^2 + n\cdot \| \bSigma\|_2^2)/ 2.
\end{align*}
Plugging this bound into \eqref{eq:refinedBOderivation_I3bound_eq1}, we have
\begin{align*}
    I_3 \leq c_5 D^{-2}\cdot\frac{ n\cdot \|\bSigma\|_F^2 +  n^2\cdot \| \bSigma\|_2^2 }{ [\tr(\bSigma) ]^2} = c_5 H(\bmu, \Qb, \yb, \bSigma)\cdot ( n\cdot \|\bSigma\|_F^2 +  n^2\cdot \| \bSigma\|_2^2 ),
\end{align*}
where we use the definition $H(\bmu, \Qb, \yb, \bSigma) =  [D\cdot \tr(\bSigma) ]^{-2}$, and $c_4$ is an absolute constant. This finishes the proof of the upper bound of $I_3$. 
\end{proof}

\subsection{Proof of Lemmas in Appendix~\ref{section:proof_proposition}}
We present the proofs of Lemmas~\ref{lemma:condition_calculation} and \ref{lemma:anisotropicbound1}.

\subsubsection{Proof of Lemma~\ref{lemma:condition_calculation}}

Here we present the proof of Lemma~\ref{lemma:condition_calculation}. The proof utilizes Lemma~\ref{lemma:matrixcalculation} and an argument based on the polarization identity.



\begin{proof}[Proof of Lemma~\ref{lemma:condition_calculation}] 
By Lemma~\ref{lemma:matrixcalculation}, we have
\begin{align}\label{eq:proof_anisotropicbound2_eq0}
    \yb^\top (\Xb\Xb^\top)^{-1} \eb_i y_i = D^{-1} [ ( 1 + \yb^\top \Ab^{-1} \bnu ) \yb^\top \Ab^{-1} \eb_i y_i - \yb^\top \Ab^{-1} \yb \cdot \bnu^\top \Ab^{-1} \eb_i y_i].
\end{align}
Moreover,  by definition we have
\begin{align}
    \yb^\top \Ab^{-1} \eb_i y_i &= \frac{1}{4\sqrt{n}} (\yb + \sqrt{n}\eb_i y_i )^\top \Ab^{-1} (\yb + \sqrt{n}\eb_i y_i ) - \frac{1}{4\sqrt{n}} (\yb - \sqrt{n}\eb_i y_i )^\top \Ab^{-1} (\yb - \sqrt{n}\eb_i y_i )\nonumber \\
    &\geq \frac{1}{4\sqrt{n}} \bigg[ \frac{\| \yb + \sqrt{n}\eb_i y_i \|_2^2}{\tr(\bSigma)  + \epsilon_{\lambda}} - \frac{\| \yb - \sqrt{n}\eb_i y_i \|_2^2}{\tr(\bSigma)  - \epsilon_{\lambda}}  \bigg]\nonumber \\
    &= \frac{1}{4\sqrt{n}} \bigg[ \frac{2n + 2\sqrt{n}}{\tr(\bSigma)  + \epsilon_{\lambda}} - \frac{2n - 2\sqrt{n}}{\tr(\bSigma)  - \epsilon_{\lambda}}  \bigg]\nonumber \\
    & = \frac{1}{2\sqrt{n}} \cdot  \frac{(n + \sqrt{n}) (\tr(\bSigma)  - \epsilon_{\lambda}) - (n - \sqrt{n}) (\tr(\bSigma)  + \epsilon_{\lambda}) }{\tr(\bSigma)^2  - \epsilon_{\lambda}^2}\nonumber \\
    & = \frac{1}{2\sqrt{n}} \cdot  \frac{2\sqrt{n} \tr(\bSigma) - 2n \epsilon_{\lambda} }{\tr(\bSigma)^2  - \epsilon_{\lambda}^2}\nonumber \\
     & = \frac{ \tr(\bSigma) - \sqrt{n} \epsilon_{\lambda} }{\tr(\bSigma)^2  - \epsilon_{\lambda}^2}, \label{eq:proof_anisotropicbound2_eq1}
\end{align}
where we use the polarization identity $\ab^\top \Mb \bbb = 1/4(\ab+\bbb)^\top \Mb (\ab+\bbb) - 1/4(\ab-\bbb)^\top \Mb (\ab-\bbb)$ in the first equality and use  Lemma~\ref{lemma:eigenvalue_concentration} 
to derive the inequality.

Plugging \eqref{eq:proof_anisotropicbound2_eq1} and  the inequalities in Lemmas~\ref{lemma:concentrationbounds} into \eqref{eq:proof_anisotropicbound2_eq0}, we have that as long as $\tr(\bSigma) > c_1\max\{ n \| \bmu \|_{\bSigma}, \epsilon_{\lambda}\}$ for some large enough constant $c_1$, $\yb^\top \Ab^{-1} \bnu \leq 1/2$ and therefore
\begin{align}
    \yb^\top (\Xb\Xb^\top)^{-1} \eb_i y_i &= D^{-1} [ ( 1 + \yb^\top \Ab^{-1} \bnu ) \yb^\top \Ab^{-1} \eb_i y_i  - \yb^\top \Ab^{-1} \yb \cdot \bnu^\top \Ab^{-1}\eb_i y_i ] \nonumber \\
    &\geq D^{-1}\cdot \bigg[ \frac{1}{2}\cdot \yb^\top \Ab^{-1} \eb_i y_i  - \frac{ c_2 n }{ \tr(\bSigma)}\cdot |\bnu^\top \Ab^{-1}\eb_i y_i| \bigg],\label{eq:equivalenceproof_eq0}
\end{align}
where $c_2$ is an absolute constant. 
By \eqref{eq:proof_anisotropicbound2_eq1}, we can see that as long as $ \tr(\bSigma) \geq c_3\sqrt{n} \epsilon_{\lambda}$ for some large enough absolute constant $c_3$, we have
\begin{align*}
    \yb^\top \Ab^{-1} \eb_i y_i \geq \frac{ \tr(\bSigma) - \sqrt{n} \epsilon_{\lambda} }{\tr(\bSigma)^2  - \epsilon_{\lambda}^2} \geq \frac{1}{2\tr(\bSigma)}.
\end{align*}
Plugging the bound above into \eqref{eq:equivalenceproof_eq0}, we obtain
\begin{align*}
    \yb^\top (\Xb\Xb^\top)^{-1} \eb_i y_i \geq \frac{1}{4D \tr(\bSigma)}\cdot [ 1 - c_4 n \cdot |\bnu^\top \Ab^{-1}\eb_i y_i| ].
\end{align*}
Since $D > 0$, we see that $G(\bmu, \Qb, \yb, \bSigma):= [4D \tr(\bSigma)]^{-1} > 0$. This completes the proof. 
\end{proof}

\subsubsection{Proof of Lemma~\ref{lemma:anisotropicbound1}}
Here we give the detailed proof of Lemma~\ref{lemma:anisotropicbound1} to backup the proof sketch presented in Section~\ref{section:equivalence}. The proof is based on the polarization identity.

\begin{proof}[Proof of Lemma~\ref{lemma:anisotropicbound1}] We have the following calculation,
\begin{align}
    \bmu^\top\Qb^\top\Ab^{-1} \eb_i y_i &= \frac{1}{\| \Qb\bmu \|_2}\cdot (\Qb\bmu)^\top \Ab^{-1} (\| \Qb\bmu \|_2\cdot \eb_i y_i) \nonumber \\
    & = \frac{1}{4\| \Qb\bmu \|_2}\cdot (\Qb\bmu + \| \Qb\bmu \|_2\cdot \eb_i y_i)^\top \Ab^{-1} (\Qb\bmu + \| \Qb\bmu \|_2\cdot \eb_i y_i) \nonumber \\
    &\quad - \frac{1}{4\| \Qb\bmu \|_2}\cdot (\Qb\bmu - \| \Qb\bmu \|_2\cdot \eb_i y_i)^\top \Ab^{-1} (\Qb\bmu - \| \Qb\bmu \|_2\cdot \eb_i y_i) \nonumber \\
    & \leq \frac{1}{4\| \Qb\bmu \|_2}\cdot \bigg[  \frac{\| \Qb\bmu + \| \Qb\bmu \|_2\cdot \eb_i y_i \|_2^2}{ \tr(\bSigma)- \epsilon_{\lambda}  } - \frac{\| \Qb\bmu - \| \Qb\bmu \|_2\cdot \eb_i y_i \|_2^2}{ \tr(\bSigma)+ \epsilon_{\lambda}  }     \bigg] \nonumber\\
    &= \frac{1}{4\| \Qb\bmu \|_2}\cdot \bigg[  \frac{2 \| \Qb\bmu \|_2^2 + 2 y_i\| \Qb\bmu \|_2\cdot \eb_i^\top \Qb\bmu }{ \tr(\bSigma)- \epsilon_{\lambda}  } - \frac{2\| \Qb\bmu \|_2^2  - 2 y_i\| \Qb\bmu \|_2\cdot \eb_i^\top \Qb\bmu }{ \tr(\bSigma)+ \epsilon_{\lambda}  } \bigg] \nonumber \\
    &= \frac{1}{2\| \Qb\bmu \|_2}\cdot \frac{2\| \Qb\bmu \|_2^2 \cdot \epsilon_{\lambda} + 2 y_i\| \Qb\bmu \|_2\cdot \eb_i^\top \Qb\bmu \cdot \tr(\bSigma)}{ \tr(\bSigma)^2 - \epsilon_{\lambda}^2  }\nonumber\\
    &= \frac{\| \Qb\bmu \|_2 \cdot \epsilon_{\lambda} +  y_i \eb_i^\top \Qb\bmu \cdot \tr(\bSigma)}{ \tr(\bSigma)^2 - \epsilon_{\lambda}^2  },\label{eq:fgbounds_eq1}
\end{align}
where the first equality holds due to the polarization identity $\ab^\top \Mb \bbb = 1/4(\ab+\bbb)^\top \Mb (\ab+\bbb) - 1/4(\ab-\bbb)^\top \Mb (\ab-\bbb)$, and the first inequality follows by Lemma~\ref{lemma:eigenvalue_concentration}. Based on our model assumption, we can denote $\Qb = \Zb \bLambda^{1/2} \Vb^\top$, where the entries of $\Zb$ are independent sub-Gaussian random variables with $ \| \Zb_{ij} \|_{\psi_2} \leq \sigma_u$ for all $i\in [n]$ and $j\in[p]$. Denote $\tilde\bmu = \Lambda^{1/2} \Vb^\top \bmu$. Then with the same proof as in Lemma~\ref{lemma:concentrationbounds}, we have 
\begin{align*}
      \| \Qb \bmu \|_2^2= \| \Zb \tilde\bmu \|_2^2 \leq  2n \| \tilde\bmu \|_2^2  = 2n \| \bmu \|_{\bSigma}^2
\end{align*}
when $n$ is large enough. Moreover, we also have
\begin{align*}
    \| y_i \eb_i^\top \Qb \bmu \|_{\psi_2} = \Bigg\| \sum_{j=1}^p \Zb_{ij} \tilde\mu_j \Bigg\|_{\psi_2} \leq \| \tilde\bmu \|_2 \cdot \sigma_u. 
\end{align*}
Therefore by Hoeffding's inequality, with probability at least $1 - n^{-1}$, we have
\begin{align*}
    | y_i \eb_i^\top \Qb \bmu | \leq c_1 \| \tilde\bmu \|_2 \cdot \sqrt{\log(n)} = c_1 \| \bmu \|_{\bSigma} \cdot \sqrt{\log(n)},
\end{align*}
where $c_1$ is an absolute constant. Therefore we have
\begin{align*}
    \bnu^\top \Ab^{-1} \eb_i y_i &\leq \frac{\sqrt{2n} \| \bmu \|_{\bSigma} \cdot \epsilon_{\lambda} +  c_2 \| \bmu \|_{\bSigma}\sqrt{\log(n)} \cdot \tr(\bSigma)}{ \tr(\bSigma)^2 - \epsilon_{\lambda}^2  }.
\end{align*}
With the exact same proof, we also have
\begin{align*}
    -\bnu^\top \Ab^{-1} \eb_i y_i &\leq \frac{\sqrt{2n} \| \bmu \|_{\bSigma} \cdot \epsilon_{\lambda} +  c_2 \| \bmu \|_{\bSigma} \sqrt{\log(n)} \cdot \tr(\bSigma)}{ \tr(\bSigma)^2 - \epsilon_{\lambda}^2  }.
\end{align*}
Therefore by the assumption that $\tr(\bSigma) > C \sqrt{n} \epsilon_{\lambda}$ for some large enough absolute constant $C$, we have 
\begin{align*}
    |\bnu^\top \Ab^{-1} \eb_i | &\leq \frac{c_3 \| \bmu \|_{\bSigma}  \cdot  \sqrt{\log(n)}}{ \tr(\bSigma)  }
\end{align*}
for some absolute constant $c_3$. This completes the proof.
\end{proof}

\subsection{Proof of Lemmas in Appendix~\ref{section:proof_appendix1}}\label{section:proof_appendix2}
Here we present the proofs of lemmas we used in Appendix~\ref{section:proof_appendix1}.
\subsubsection{Proof of Lemma~\ref{lemma:eigenvalue_concentration}}
The proof of Lemma~\ref{lemma:eigenvalue_concentration} is motivated by the analysis given in \citet{bartlett2020benign}. However here in Lemma~\ref{lemma:eigenvalue_concentration} we give a slightly tighter bound. The proof is as follows. 

\begin{proof}[Proof of Lemma~\ref{lemma:eigenvalue_concentration}]
Let $\cN$ be a $1/4$-net on the unit sphere $s^{n-1}$. Then by Lemma~5.2 in \citet{vershynin2010introduction}, we have $|\cN| \leq 9^n$. Denote $\zb_j =  \lambda_j^{-1/2} \Qb \vb_j \in \RR^{n}$. Then by definition, for any fixed unit vector $\hat\ab\in \cN$ we have $ \hat\ab^\top \Ab \hat\ab = \Qb\Qb^\top = \hat\ab^\top \sum_{j=1}^p \lambda_j \zb_j \zb_j^\top \hat\ab = \sum_{j=1}^p \lambda_j (\hat\ab^\top\zb_j)^2$. By Lemma 5.9 in \citet{vershynin2010introduction}, there exists an absolute constant $c_1$ such that $\| \hat\ab^\top\zb_j \|_{\psi_2} \leq c_1 \sigma_u$. Therefore by Lemma 21 and Corollary 23 in \citet{bartlett2020benign}, for any $t>0$, with probability at least $1 - 2\exp(-t)$ we have
\begin{align*}
    \big| \hat\ab^\top \Ab \hat\ab - \tr(\bSigma) \big| \leq c_2 \sigma_u^2 \max \big(  t\cdot \| \bSigma\|_{2}  , \sqrt{t}\cdot  \| \bSigma\|_{F}  \big).
\end{align*}
Applying an union bound over all $\hat\ab\in \cN$, we have that with probability at least $1 - 2\cdot 9^n \exp(-t )$,
\begin{align*}
    \big| \hat\ab^\top \Ab \hat\ab - \tr(\bSigma) \big| \leq c_2 \sigma_u^2 \max \big(  t\cdot \| \bSigma\|_{2} , \sqrt{t}\cdot  \| \bSigma\|_{F}  \big) 
\end{align*}
for all $\hat\ab\in \cN$. Therefore by Lemma~25 in \citet{bartlett2020benign}, with probability at least $1 - 2\cdot 9^n \exp(-t )$, we have
\begin{align*}
    \big\| \Ab  - \tr(\bSigma) \Ib \big\|_2 \leq c_3 \sigma_u^2  \big(  t \cdot \| \bSigma\|_{2}  + \sqrt{t}\cdot  \| \bSigma\|_{F}  \big),
\end{align*}
where $c_3$ is an absolute constant. 
Setting $t = c_4 n$ for some large enough constant $c_4$, we have that with probability at least $1 - n^{-2}$,
\begin{align*}
    \big\| \Ab  - \tr(\bSigma) \Ib \big\|_2 \leq c_5  \sigma_u^2 \big(  n \cdot \| \bSigma\|_{2} + \sqrt{n}\cdot  \| \bSigma\|_{F}  \big),
\end{align*}
where $c_5$ is an absolute constant. This completes the proof.
\end{proof}

\subsubsection{Proof of Lemma~\ref{lemma:matrixcalculation}}
Here we present the proof of Lemma~\ref{lemma:matrixcalculation}. Our proof utilizes a key lemma by \citet{wang2020benign}, and gives further simplifications of the result. 
\begin{proof}[Proof of Lemma~\ref{lemma:matrixcalculation}]

Denote $s = \yb^\top \Ab^{-1} \yb$, $t = \bnu^\top \Ab^{-1} \bnu$, $h = \yb^\top \Ab^{-1} \bnu$. Then we have $D =  \| \bmu\|_2^2 s - st + (h+1)^2$. 
By Lemma 3 in \citet{wang2020benign}, we have
\begin{align*}
    \yb^\top (\Xb\Xb^\top)^{-1} &= \yb^\top \Ab^{-1} - D^{-1}\cdot [ \| \bmu \|_2^2 s + h^2 + h - st ] \cdot \yb^\top \Ab^{-1} - D^{-1} s \cdot \bnu^\top \Ab^{-1}.
\end{align*}
Rearranging terms, we obtain
\begin{align*}
    \yb^\top (\Xb\Xb^\top)^{-1} 
    & =\bigg[ 1 - \frac{\| \bmu \|_2^2 s + h^2 + h - st }{ \| \bmu\|_2^2 s - st + (h+1)^2} \bigg] \cdot \yb^\top \Ab^{-1} - D^{-1} s \cdot \bnu^\top \Ab^{-1} \\
    & = \frac{ h + 1 }{ \| \bmu\|_2^2 s - st + (h+1)^2} \cdot \yb^\top \Ab^{-1} - D^{-1} s \cdot \bnu^\top \Ab^{-1}\\
    & = D^{-1} [ (h + 1) \yb^\top \Ab^{-1} - s \cdot \bnu^\top \Ab^{-1}].
\end{align*}
At last, by the definition of $D$, we have
\begin{align*}
     D &= \yb^\top\Ab^{-1} \yb \cdot  (\| \bmu\|_2^2 - \bmu^\top\Qb^\top (\Qb\Qb^\top)^{-1} \Qb\bmu)  + (1 + \yb^\top\Ab^{-1} \bnu)^2\\
     &\geq (1 + \yb^\top\Ab^{-1} \bnu)^2,
\end{align*}
where we utilize the fact that $ \yb^\top\Ab^{-1} \yb \geq 0$ and $\| \bmu\|_2^2 \geq \bmu^\top\Qb^\top (\Qb\Qb^\top)^{-1} \Qb\bmu$. Since $\yb^\top\Ab^{-1} \bnu \neq 1$ with probability $1$, we see that $D > 0$ almost surely. This completes the proof.
\end{proof}

\subsubsection{Proof of Lemma~\ref{lemma:concentrationbounds}}
The proof of Lemma~\ref{lemma:concentrationbounds} is based on the application of eigenvalue concentration results in Lemma~\ref{lemma:eigenvalue_concentration}. We present the details as follows.
\begin{proof}[Proof of Lemma~\ref{lemma:concentrationbounds}]
The bounds on $\yb^\top \Ab^{-1} \yb$ are directly derived from Lemma~\ref{lemma:eigenvalue_concentration} and the fact that $\| \yb \|_2^2 = n$. To derive the bounds for $\bnu\Ab^{-1}\bnu$, we note that by definition, $\bnu = \Qb\bmu$ and 
\begin{align*}
    \bnu^\top \Ab^{-1} \bnu = \bmu^\top \Qb^\top (\Qb\Qb^\top)^{-1} \Qb \bmu. 
\end{align*}

Denote $\zb_i =  \lambda_i^{-1/2} \Qb \vb_i \in \RR^{n}$,  $\Zb = [\zb_1,\ldots, \zb_p] \in \RR^{n \times p}$, and $\tilde\bmu = \Lambda^{1/2} \Vb^\top \bmu$. Then $\Qb = \Zb \bLambda^{1/2} \Vb^\top$, $\Qb \bmu = \Zb \tilde\bmu$, and
\begin{align*}
    \bmu^\top \Qb^\top (\Qb\Qb^\top)^{-1} \Qb \bmu &= \bmu^\top \Vb  \bLambda^{1/2} \Zb^\top ( \Zb \bLambda \Zb^\top )^{-1} \Zb \bLambda^{1/2} \Vb^\top \bmu\\
    &= \tilde\bmu^\top  \Zb^\top ( \Zb \bLambda \Zb^\top )^{-1} \Zb  \tilde\bmu\\
    &\leq \frac{ \| \Zb \tilde\bmu \|_2^2 }{ \tr(\bSigma) - \epsilon_{\lambda} }.
\end{align*}
Similarly, we have
\begin{align*}
    \bmu^\top \Qb^\top (\Qb\Qb^\top)^{-1} \Qb \bmu \geq \frac{ \| \Zb \tilde\bmu \|_2^2 }{  \tr(\bSigma) + \epsilon_{\lambda} }.
\end{align*}
We now proceed to give upper and lower bounds for the term $\| \Zb \tilde\bmu \|_2^2 =  \sum_{i=1}^n ( \sum_{j=1}^p \Zb_{ij} \tilde\mu_j )^2$. Note that by definition,  $\Zb_{ij}$ for $i\in [n]$ and $j\in [p]$ are independent sub-Gaussian vectors with $\| \Zb_{ij} \|_{\psi_2} \leq \sigma_u$. By Lemma~5.9 in \citet{vershynin2010introduction}, we have
\begin{align*}
    \Bigg\| \sum_{j=1}^p \Zb_{ij} \tilde\mu_j \Bigg\|_{\psi_2} \leq c_1 \| \tilde\bmu \|_2 \cdot \sigma_u, 
\end{align*}
where $c_1$ is an absolute constant. Therefore by Lemma~5.14 in \citet{vershynin2010introduction}, we have
\begin{align*}
     \Bigg\| \Bigg(\sum_{j=1}^p \Zb_{ij} \tilde\mu_j \Bigg)^2 - \| \tilde\bmu \|_2^2 \Bigg\|_{\psi_1} \leq  c_2 \| \tilde\bmu_j \|_2^2,
\end{align*}
where we merge $\sigma_u$ into the absolute constant $c_2$. 
By Bernstein's inequality, with probability at least $1 - n^{-2}$, 
\begin{align*}
    \big| \| \Zb \tilde\bmu \|_2^2 - \EE \| \Zb \tilde\bmu \|_2^2 \big| \leq c_3 \| \tilde\bmu \|_2^2\cdot \sqrt{n\log( n)},
\end{align*}
where $c_3$ is an absolute constant. Therefore we have
\begin{align}\label{eq:concentrationbounds_eq1}
     n  \| \tilde\bmu \|_2^2 - c_3 \| \tilde\bmu \|_2^2\cdot \sqrt{n\log( n)} \leq  \| \Qb \bmu \|_2^2 = \| \Zb \tilde\bmu \|_2^2 \leq  n \| \tilde\bmu \|_2^2 + c_3 \| \tilde\bmu \|_2^2\cdot \sqrt{n\log( n)},
\end{align}
and
\begin{align*}
    \frac{n - c_3 \sqrt{n\log( n)}}{ \tr(\bSigma) + \epsilon_{\lambda} } \cdot \| \tilde\bmu \|_2 \leq  \bnu^\top \Ab^{-1} \bnu \leq \frac{n + c_3 \sqrt{n\log( n)}}{ \tr(\bSigma) - \epsilon_{\lambda} } \cdot \| \tilde\bmu \|_2
\end{align*}
Similarly for $\yb^\top \Ab^{-1} \bnu$, by Cauchy-Schwarz inequality, for large enough $n$ we have
\begin{align*}
    |\yb^\top \Ab^{-1} \bnu| = |\yb^\top (\Qb\Qb^\top)^{-1} \Qb \bmu| \leq \| \yb \|_2 \cdot \| (\Qb\Qb^\top)^{-1} \Qb \bmu \|_2 = \sqrt{n} \cdot \sqrt{ \bmu^\top \Qb^\top (\Qb\Qb^\top)^{-2} \Qb \bmu }.
\end{align*}
Applying Lemma~\ref{lemma:eigenvalue_concentration} and the inequality \eqref{eq:concentrationbounds_eq1}, we have
\begin{align*}
    |\yb^\top \Ab^{-1} \bnu|
    &\leq\frac{\sqrt{n} }{  \tr(\bSigma) - \epsilon_{\lambda}  } \| \Qb\bmu \|_2 \leq\frac{\sqrt{n} \cdot \sqrt{n + c_3 \sqrt{n\log( n)}} }{  \tr(\bSigma) - \epsilon_{\lambda}  } \| \tilde\bmu \|_2 \leq \frac{c_4 n }{  \tr(\bSigma) - \epsilon_{\lambda}  } \| \tilde\bmu \|_2,
\end{align*}
where $c_4$ is an absolute constant. Note that $\| \tilde\bmu \|_2 = \| \bmu \|_{\bSigma}$. This completes the proof.
\end{proof}

\section{Proof of Theorem~\ref{thm:BO_lowerbound}}
Here we present the proof of Theorem~\ref{thm:BO_lowerbound}. 
\begin{proof}[Proof of Theorem~\ref{thm:BO_lowerbound}]
By the lower bound of the Gaussian cumulative distribution function \citep{cote2012chernoff}, we have that for any $\btheta \in \RR^d$,
\begin{align}\label{eq:lowerboundproof_eq1}
    R(\btheta) \geq c_1 \exp\bigg( - \frac{ c_2( \btheta^\top \bmu )^2 }{ \|\btheta\|_{\bSigma}^2} \bigg),
\end{align}
where $c_1,c_2 > 0$ are absolute constants. By Proposition~\ref{prop:interpolationregression}, we have
\begin{align*}
    \hat\btheta_{\text{SVM}} = \hat\btheta_{\text{LS}}  = \Xb^\top (\Xb\Xb^\top)^{-1} \yb.
\end{align*}
Plugging it into \eqref{eq:lowerboundproof_eq1}, we obtain
\begin{align}\label{eq:lowerboundproof_eq2}
    R(\hat\btheta_{\text{SVM}} ) \geq c_1 \exp\bigg\{ - \frac{ c_2 [ \yb^\top (\Xb\Xb^\top)^{-1} \Xb \bmu  ]^2}{ \| \Xb^\top (\Xb\Xb^\top)^{-1} \yb \|_{\bSigma}^2 } \bigg\}.
\end{align}
Note that based on our model, we have $\Xb = \yb \bmu^\top + \Qb$, and 
\begin{align}
    \| \Xb^\top (\Xb\Xb^\top)^{-1} \yb \|_{\bSigma}
    &\quad= \| (\yb \bmu^\top + \Qb)^\top (\Xb\Xb^\top)^{-1} \yb \|_{\bSigma}  \nonumber\\
    &\quad\geq \big| \| \bmu \yb^\top (\Xb\Xb^\top)^{-1} \yb \|_{\bSigma} - \| \Qb^\top (\Xb\Xb^\top)^{-1} \yb \|_{\bSigma} \big| \nonumber\\
    &\quad=\big| \yb^\top (\Xb\Xb^\top)^{-1} \yb\cdot \| \bmu  \|_{\bSigma} - \| \Qb^\top (\Xb\Xb^\top)^{-1} \yb \|_{\bSigma} \big| 
\end{align}
Plugging the above bound into \eqref{eq:lowerboundproof_eq2}, we obtain
\begin{align}\label{eq:lowerboundproof_eq3}
    R(\btheta) \geq c_1 \exp\bigg\{ - \frac{ c_2[ \yb^\top (\Xb\Xb^\top)^{-1} \Xb \bmu  ]^2 }{  (\yb^\top (\Xb\Xb^\top)^{-1} \yb \cdot\| \bmu \|_{\bSigma} - \| \Qb^\top (\Xb\Xb^\top)^{-1} \yb \|_{\bSigma})^2} \bigg\}.
\end{align}
Denote $\bnu = \Qb \bmu$ and $\Ab = \Qb\Qb^\top$. Then by Lemma~\ref{lemma:matrixcalculation} and the model definition $\Xb = \yb \bmu^\top + \Qb$, we have
\begin{align}
    \yb^\top (\Xb\Xb^\top)^{-1}  \Xb \bmu &= D^{-1} [ ( 1 + \yb^\top \Ab^{-1} \bnu ) \yb^\top \Ab^{-1} - \yb^\top \Ab^{-1} \yb \cdot \bnu^\top \Ab^{-1}] (\yb \bmu^\top + \Qb)\bmu \nonumber\\
    &= D^{-1} [ ( 1 + \yb^\top \Ab^{-1} \bnu ) \yb^\top \Ab^{-1} - \yb^\top \Ab^{-1} \yb \cdot \bnu^\top \Ab^{-1}] (\yb \cdot \| \bmu \|_2^2 + \Qb\bmu) \nonumber\\
    & = D^{-1} [ ( 1 + \yb^\top \Ab^{-1} \bnu ) \yb^\top \Ab^{-1} \yb - \yb^\top \Ab^{-1} \yb \cdot \bnu^\top \Ab^{-1} \yb] \cdot \| \bmu \|_2^2 \nonumber \\
    & \qquad + D^{-1} [ ( 1 + \yb^\top \Ab^{-1} \bnu ) \yb^\top \Ab^{-1}\bnu - \yb^\top \Ab^{-1} \yb \cdot \bnu^\top \Ab^{-1}\bnu] )\nonumber\\
    & = D^{-1}\cdot [ (\| \bmu \|_2^2 - \bnu^\top \Ab^{-1}\bnu) \yb^\top \Ab^{-1} \yb  + ( 1 + \yb^\top \Ab^{-1} \bnu ) \yb^\top \Ab^{-1}\bnu  ],\label{eq:refinedBO_numerator_eq0_lowerbound}
\end{align}
where the third equality follows by the notation $\bnu = \Qb\bmu$.
By Lemma~\ref{lemma:concentrationbounds} and the assumption that $\tr( \bSigma ) \geq C \max\{ \epsilon_{\lambda} , n \|\bSigma \|_2 , n \| \bmu \|_{\bSigma} \}$ for some large enough constant $C$, when $n$ is large enough we have
\begin{align*}
     &| \yb^\top \Ab^{-1} \bnu | \leq \frac{ c_3 n   }{  \tr( \bSigma )  - \epsilon_{\lambda}  } \| \bmu \|_{\bSigma} \leq \frac{ 2c_4 n   }{  \tr( \bSigma ) } \| \bmu \|_{\bSigma} \leq 1,\\
     &0\leq \bnu^\top \Ab^{-1}\bnu \leq \frac{ n  + c_5 \sqrt{n\log( n)} }{ \tr( \bSigma )  - \epsilon_{\lambda} } \cdot \| \bmu \|_{\bSigma}^2 \leq \frac{ 2n }{ \tr( \bSigma ) } \cdot \| \bmu \|_{\bSigma}^2 \leq \frac{ 2n \| \bSigma \|_2 }{ \tr( \bSigma ) } \cdot \| \bmu \|_2^2 \leq \frac{1}{2}\cdot  \| \bmu \|_2^2 ,\\
     & 0\leq \yb^\top \Ab^{-1} \yb \leq \frac{n}{ \tr( \bSigma )  - \epsilon_{\lambda} } \leq \frac{2n}{ \tr( \bSigma ) },
\end{align*}
where $c_3,c_4$ are absolute constants. 
Plugging the bounds above into \eqref{eq:refinedBO_numerator_eq0}, we obtain
\begin{align*}
    | \yb^\top (\Xb\Xb^\top)^{-1}  \Xb \bmu | &\leq D^{-1}\cdot  \bigg(\| \bmu \|_2^2\cdot \yb^\top \Ab^{-1} \yb + 2\cdot   |\yb^\top \Ab^{-1}\bnu | \bigg)\\
    &\leq D^{-1}\cdot \bigg[ \frac{2n}{ \tr( \bSigma ) } \cdot \| \bmu \|_2^2 + \frac{ 4n   }{  \tr( \bSigma ) } \| \bmu \|_{\bSigma} \bigg]\\
    &\leq D^{-1}\cdot \frac{2n}{ \tr( \bSigma ) } \cdot  ( \| \bmu \|_2^2 + 2 \| \bmu \|_{\bSigma} )\\
    &\leq D^{-1}\cdot \frac{4n}{ \tr( \bSigma ) } \cdot  \| \bmu \|_2^2,
\end{align*}
where the last inequality follows by the assumption that $\| \bmu \|_2^2 \geq C \| \bmu\|_{\bSigma} $ for some large enough absolute constant $C$. 
Therefore we have
\begin{align}\label{eq:refinedBO_numerator_upperbound}
    [\yb^\top (\Xb\Xb^\top)^{-1}  \Xb \bmu]^2 \leq D^{-2}\cdot \frac{n^2}{ 64 [\tr( \bSigma )]^2 } \cdot   \| \bmu \|_2^4 = \frac{H(\bmu, \Qb, \yb, \bSigma) }{64} \cdot n^2 \| \bmu \|_2^4,
\end{align}
where 
$$ H(\bmu, \Qb, \yb, \bSigma) := [D\cdot \tr(\bSigma) ]^{-2} > 0. $$
We now proceed to study the two terms in the denominator of the exponent in \eqref{eq:lowerboundproof_eq3}. We denote
\begin{align*}
    &J_1 = \yb^\top (\Xb\Xb^\top)^{-1} \yb\cdot \| \bmu  \|_{\bSigma},\\
    &J_2 = \| \Qb^\top (\Xb\Xb^\top)^{-1} \yb \|_{\bSigma})^2.
\end{align*}
Then for $J_1$, with the same derivation as the proof of Lemma~\ref{lemma:I1I2I3bounds} for $I_2$, we have
\begin{align*}
    J_1 = \sqrt{I_2} \leq 2 \sqrt{H(\bmu, \Qb, \yb, \bSigma)} \cdot n \cdot \| \bmu \|_{\bSigma}.
\end{align*}
Moreover we also have
\begin{align*}
    \yb^\top (\Xb\Xb^\top)^{-1} \yb = D^{-1} \cdot \yb^\top \Ab^{-1} \yb \geq D^{-1} \cdot \frac{n}{ \tr( \bSigma )  + \epsilon_{\lambda} } \geq  (2D)^{-1}  \cdot \frac{n}{ \tr( \bSigma )},
\end{align*}
where the first inequality follows by Lemma~\ref{lemma:concentrationbounds}, and the second inequality follows by the assumption that $\tr( \bSigma ) \geq C \epsilon_{\lambda}$ for some large enough constant $C$. Then we have
\begin{align*}
    J_1 = \yb^\top (\Xb\Xb^\top)^{-1} \yb \cdot  \| \bmu \|_{\bSigma} \geq  (2D)^{-1}  \cdot \frac{n \cdot \| \bmu \|_{\bSigma}}{ \tr( \bSigma )} = (1/2)\cdot \sqrt{H(\bmu, \Qb, \yb, \bSigma)} \cdot n \cdot \| \bmu \|_{\bSigma},
\end{align*}
where we use the definition $H(\bmu, \Qb, \yb, \bSigma) =  [D\cdot \tr(\bSigma) ]^{-2}$. Therefore in summary we have
\begin{align}\label{eq:lowerboundproof_J1bounds}
    (1/2)\cdot \sqrt{H(\bmu, \Qb, \yb, \bSigma)} \cdot n \cdot \| \bmu \|_{\bSigma} \leq J_1 \leq 2 \sqrt{H(\bmu, \Qb, \yb, \bSigma)} \cdot n \cdot \| \bmu \|_{\bSigma},
\end{align}
where $c_5$ is an absolute constant. 
Similarly, for $J_2$, with the same derivation as the proof of Lemma~\ref{lemma:I1I2I3bounds} for $I_3$, we have
\begin{align}\label{eq:lowerboundproof_J2upperbound}
    J_2^2 = I_3 \leq c_5 H(\bmu, \Qb, \yb, \bSigma)\cdot ( n\cdot \|\bSigma\|_F^2 +  n^2\cdot \| \bSigma\|_2^2 ).
\end{align}
Moreover, we denote $\ab = D^{-1}[( 1 + \yb^\top \Ab^{-1} \bnu )\cdot \yb - \yb^\top \Ab^{-1} \yb \cdot \bnu]$. Then with the same derivation,
\begin{align}
    J_2^2 &= \yb^\top (\Xb\Xb^\top)^{-1} \Qb \bSigma \Qb^\top (\Xb\Xb^\top)^{-1} \yb \nonumber \\
    &= \ab^\top (\Qb\Qb^\top)^{-1}\Qb\bSigma\Qb^\top (\Qb\Qb^\top)^{-1} \ab \nonumber \\
    & = \ab^\top (\Zb\bLambda \Zb^\top)^{-1} \Zb\bLambda^2 \Zb^\top (\Zb\bLambda \Zb^\top)^{-1} \ab, \label{eq:refinedBOderivation_lowerbound_eq1}
\end{align}
where we plug in $\bSigma = \Vb \bLambda \Vb^\top$ and $\Qb = \Zb \bLambda^{1/2} \Vb^\top$ for $\Zb$ with independent sub-Gaussian entries.
We have
\begin{align}
    J_2^2&= \ab^\top (\Zb\bLambda \Zb^\top)^{-1} \Zb\bLambda^2 \Zb^\top (\Zb\bLambda \Zb^\top)^{-1} \ab \nonumber\\
    & \geq \ab^\top (\Zb\bLambda \Zb^\top)^{-2} \ab \cdot \big[\|\bSigma\|_F^2 - \epsilon_{\lambda}'\big] \nonumber\\
    & \geq \| \ab \|_2^2\cdot \frac{\|\bSigma\|_F^2 - \epsilon_{\lambda}'}{ [\tr(\bSigma) + \epsilon_{\lambda}]^2}\nonumber\\
    &\geq  \| \ab \|_2^2\cdot \frac{\|\bSigma\|_F^2 - \epsilon_{\lambda}'}{2 [\tr(\bSigma) ]^2}.
    \label{eq:refinedBOderivation_lowerbound_eq1.5}
\end{align}
Here the first inequality follows by Lemma~\ref{lemma:eigenvalue_concentration2}, the second inequality follows by Lemma~\ref{lemma:eigenvalue_concentration}, and the third inequality follows by the assumption that $\tr(\bSigma) \geq C \epsilon_{\lambda}$ for some large enough absolute constant $C$.  
By the definition of $\epsilon_{\lambda}'$ in Lemma~\ref{lemma:eigenvalue_concentration2} and Cauchy-Schwarz inequality, we have
\begin{align*}
    \epsilon_{\lambda}' &:= c_6  \big( n\cdot \| \bSigma\|_2^2  + \sqrt{n}\cdot \| \bSigma^2\|_F \big)\\
    &\leq c_6  \big( n\cdot \| \bSigma\|_2^2  + \sqrt{n}\cdot \| \bSigma\|_2 \cdot \| \bSigma\|_F \big)\\
    &\leq c_6  \big( n\cdot \| \bSigma\|_2^2  + 2c_6n\cdot \| \bSigma\|_2^2  +  \| \bSigma\|_F^2 / (2c_6) \big)\\
    &\leq c_7  n\cdot \| \bSigma\|_2^2  +  \| \bSigma\|_F^2  / 2,
\end{align*}
where $c_6$, $c_7$ are absolute constants. Plugging the above bound into \eqref{eq:refinedBOderivation_lowerbound_eq1.5} gives
\begin{align}
    J_2^2 \geq  \| \ab \|_2^2\cdot \frac{\|\bSigma\|_F^2 - c_8 n\cdot \| \bSigma\|_2^2 }{ 4 [\tr(\bSigma) ]^2}
    \label{eq:refinedBOderivation_lowerbound_eq1.6}
\end{align}
for some absolute constant $c_8$.
 Moreover, by the definition $\ab$ and the triangle inequality, we have
\begin{align}
    \| \ab \|_2^2 & = \| D^{-1}( 1 + \yb^\top \Ab^{-1} \bnu ) \yb - \yb^\top \Ab^{-1} \yb \cdot \bnu \|_2^2 \nonumber \\
    & \geq \big[ D^{-1} ( 1 + \yb^\top \Ab^{-1} \bnu ) \| \yb \|_2 -  D^{-1} ( \yb^\top \Ab^{-1} \yb  )\cdot  \|  \Qb \bmu\|_2\big] ^2 \nonumber\\
    &= D^{-2}\big[  ( 1 + \yb^\top \Ab^{-1} \bnu ) \cdot \sqrt{n} -   ( \yb^\top \Ab^{-1} \yb  )\cdot  \|  \Qb \bmu\|_2\big] ^2. \label{eq:refinedBOderivation_lowerbound_eq1.7}
\end{align}
By Lemma~\ref{lemma:concentrationbounds} and the assumption that $\tr( \bSigma ) \geq C \max\{ \epsilon_{\lambda} , n, n \| \bmu \|_{\bSigma} \}$ for some large enough constant $C$, we have
\begin{align*}
    &|\yb^\top \Ab^{-1} \bnu| \leq \frac{c_9 n   }{  \tr( \bSigma )  - \epsilon_{\lambda}  } \| \bmu \|_{\bSigma} \leq 1/2, \\
    &  \yb^\top \Ab^{-1} \yb \leq \frac{n}{ \tr( \bSigma )  - \epsilon_{\lambda} } \leq \frac{2n}{\tr( \bSigma )},
\end{align*}
where $c_9$ is an absolute constant. Moreover, with the same proof as in Lemma~\ref{lemma:concentrationbounds}, when $n$ is sufficiently large, with probability at least $1- O(n^{-2})$ we have 
\begin{align*}
      \| \Qb \bmu \|_2^2 \leq  2n \| \bmu \|_{\bSigma}^2.
\end{align*}
Utilizing these inequalities above, we have
\begin{align*}
      &( 1 + \yb^\top \Ab^{-1} \bnu ) \cdot \sqrt{n} \geq \sqrt{n} /2,\\
      &(\yb^\top \Ab^{-1} \yb  )\cdot  \|  \Qb \bmu\|_2 \leq  \frac{2n}{\tr( \bSigma )}\cdot \sqrt{2n} \| \bmu \|_{\bSigma} \leq \sqrt{n} /4,
\end{align*}
where the second line above follows the assumption that $\tr( \bSigma ) \geq C n \| \bmu \|_{\bSigma}$ for some large enough constant $C$. Combining these bounds with \eqref{eq:refinedBOderivation_lowerbound_eq1.7}, we have
\begin{align*}
    \| \ab \|_2^2 &\geq D^{-2}\big[  ( 1 + \yb^\top \Ab^{-1} \bnu ) \cdot \sqrt{n} -   ( \yb^\top \Ab^{-1} \yb  )\cdot  \|  \Qb \bmu\|_2\big] ^2  \geq  D^{-2} n / 16.
\end{align*}
Further plugging this bound into \eqref{eq:refinedBOderivation_lowerbound_eq1.6}, we have
\begin{align}\label{eq:lowerboundproof_J2lowerbound}
    J_2^2 \geq \frac{ n }{16D^{2}}\cdot \frac{\|\bSigma\|_F^2 - c_8 n\cdot \| \bSigma\|_2^2 }{ 4 [\tr(\bSigma) ]^2} = H(\bmu, \Qb, \yb, \bSigma) \cdot (c_{10}n\cdot\|\bSigma\|_F^2 - c_{11} n^2\cdot \| \bSigma\|_2^2),
\end{align}
where $c_{10},c_{11}$ are absolute constants, and we use the definition $H(\bmu, \Qb, \yb, \bSigma) =  [D\cdot \tr(\bSigma) ]^{-2}$. Combining \eqref{eq:lowerboundproof_J2upperbound} and \eqref{eq:lowerboundproof_J2lowerbound}, we obtain
\begin{align}\label{eq:lowerboundproof_J2bounds}
    H(\bmu, \Qb, \yb, \bSigma) \cdot (c_{10}n\|\bSigma\|_F^2 - c_{11} n^2 \| \bSigma\|_2^2) \leq J_2^2 \leq c_5 H(\bmu, \Qb, \yb, \bSigma)\cdot ( n\|\bSigma\|_F^2 +  n^2\| \bSigma\|_2^2 ).
\end{align}

In the rest of the proof, we consider the two cases in Theorem~\ref{thm:BO_lowerbound} separately based on \eqref{eq:lowerboundproof_J1bounds} and \eqref{eq:lowerboundproof_J2bounds}. 

\noindent\textbf{Case 1.} Suppose that  $ n \| \bmu \|_{\bSigma}^2 \geq C (\| \bSigma \|_F^2 + n \| \bSigma\|_2^2)$ for some large enough constant $C$. Then by \eqref{eq:lowerboundproof_J1bounds} and \eqref{eq:lowerboundproof_J2bounds}, we have
\begin{align*}
    &J_1 \geq (1/2)\cdot \sqrt{H(\bmu, \Qb, \yb, \bSigma)} \cdot n \cdot \| \bmu \|_{\bSigma}\\
    &J_2 \leq 2\sqrt{c_5} \sqrt{H(\bmu, \Qb, \yb, \bSigma)}\cdot \sqrt{ n\cdot \|\bSigma\|_F^2 +  n^2\cdot \| \bSigma\|_2^2} \leq (1/4)\cdot \sqrt{H(\bmu, \Qb, \yb, \bSigma)} \cdot n \cdot \| \bmu \|_{\bSigma}
\end{align*}
Plugging the above inequalities and \eqref{eq:refinedBO_numerator_upperbound} into \eqref{eq:lowerboundproof_eq3}, we obtain
Therefore by 
\begin{align*}
    R(\btheta) \geq c_1 \exp\bigg\{ - \frac{ c_2 n^2 \| \bmu \|_2^4 / 64}{  (n \cdot \| \bmu \|_{\bSigma} / 4)^2} \bigg\} = c_1 \exp\bigg\{ - \frac{ c_{12}\| \bmu \|_2^4 }{  \| \bmu \|_{\bSigma}^2} \bigg\},
\end{align*}
where $c_{12}$ is an absolute constant. 
This completes the proof of the first case in Theorem~\ref{thm:BO_lowerbound}.

\noindent\textbf{Case 2.} Suppose that  $\| \bSigma \|_F^2 \geq C n (\| \bmu \|_{\bSigma}^2 + \| \bSigma\|_2^2)$ for some large enough constant $C$. Then by \eqref{eq:lowerboundproof_J1bounds} we have
\begin{align}\label{eq:lowerboundproof_case2_eq1}
    J_1 \leq 2  \sqrt{H(\bmu, \Qb, \yb, \bSigma)} \cdot n \cdot \| \bmu \|_{\bSigma} \leq \sqrt{H(\bmu, \Qb, \yb, \bSigma)} \cdot \sqrt{c_{10}}\|\bSigma\|_F / 4.
\end{align}
Moreover for $J_2$, by \eqref{eq:lowerboundproof_J2bounds} we have
\begin{align*}
    J_2^2 \geq H(\bmu, \Qb, \yb, \bSigma) \cdot (c_{10} n \|\bSigma\|_F^2 - c_{11} n^2 \| \bSigma\|_2^2) \geq H(\bmu, \Qb, \yb, \bSigma) \cdot c_{10}n \|\bSigma\|_F^2 / 4,
\end{align*}
and therefore 
\begin{align}\label{eq:lowerboundproof_case2_eq2}
    J_2 \geq \sqrt{H(\bmu, \Qb, \yb, \bSigma)} \cdot \sqrt{c_{10} n }\|\bSigma\|_F / 2.
\end{align}
Plugging \eqref{eq:refinedBO_numerator_upperbound}, \eqref{eq:lowerboundproof_case2_eq1} and \eqref{eq:lowerboundproof_case2_eq2} into \eqref{eq:lowerboundproof_eq3}, we obtain
\begin{align*}
    R(\btheta) \geq c_1 \exp\bigg\{ - \frac{ c_2 n^2 \| \bmu \|_2^4 / 64}{  (\sqrt{c_{10} n }\|\bSigma\|_F / 4)^2} \bigg\} = c_1 \exp\bigg\{ - \frac{ c_{13} n \| \bmu \|_2^4 }{  \|\bSigma\|_F^2} \bigg\},
\end{align*}
where $c_{13}$ is an absolute constant. 
This completes the proof of the second case in Theorem~\ref{thm:BO_lowerbound}. 
\end{proof}

\section{Proof of Corollaries}\label{section:corollaryproof}
Here we provide the proof of the Corollaries~\ref{col:isotropic}, \ref{col:polynomialdecay} and \ref{col:rare-weak} in Section~\ref{section:mainresults}.

\subsection{Proof of Corollary~\ref{col:isotropic}}
The proof of Corollary~\ref{col:isotropic} is a direct application of Theorem~\ref{thm:BOnew}. The detailed proof is as follows.

\begin{proof}[Proof of Corollary~\ref{col:isotropic}]
When $\bSigma = \Ib$, we have $\tr( \bSigma ) = d$, $\|\bSigma \|_2 = 1$, $\| \bSigma \|_F = \sqrt{d}$ and $\| \bmu \|_{\bSigma} = \| \bmu \|_2$. Under the condition in  Corollary~\ref{col:isotropic} that $d \geq C \max\big\{ n^{2} , n\sqrt{\log(n)}\cdot \| \bmu \|_{2} \big\}$ and $\| \bmu \|_2 \geq C $ for some large enough absolute constant $C$, it is easy to check that the conditions of Theorem~\ref{thm:BOnew}
$$\tr( \bSigma ) = \Omega\big( \max\big\{ n^{3/2} \|\bSigma \|_2, n\| \bSigma \|_F , n\sqrt{\log(n)}\cdot \| \bmu \|_{\bSigma} \big\} \big),~\| \bmu \|_2 \geq C \| \bSigma\|_{2}$$ 
hold. Therefore by Theorem~\ref{thm:BOnew}, we have
\begin{align*}
    R(\hat\btheta_{\text{SVM}}) \leq \exp\Bigg(  \frac{ - c_1 n \| \bmu \|_2^4  }{  n  \| \bmu \|_{\bSigma}^2+ \| \bSigma\|_F^2 +  n\| \bSigma\|_2^2 } \Bigg) \leq \exp\Bigg(  \frac{ - c_2 n \| \bmu \|_2^4  }{  n  \| \bmu \|_2^2+ d} \Bigg),
\end{align*}
where $c_1$, $c_2$ are absolute constants. This completes the proof. 
\end{proof}

\subsection{Proof of Corollary~\ref{col:polynomialdecay}}
Here we present the proof of Corollary~\ref{col:polynomialdecay}, which is mostly based on the estimation of the order of the summations $\sum_{k=1}^d k^{-\alpha}$ and $\sum_{k=1}^d k^{-2\alpha}$. We first present the full version of the corollary with detailed dependency in the sample size $n$ as follows.

\begin{corollary}\label{col:polynomialdecay_with_n}[Full version of Corollary~\ref{col:polynomialdecay}]
    Suppose that $\lambda_k = k^{-\alpha}$, and one of the following conditions hold:
\begin{enumerate}[leftmargin = *]
    \item $\alpha\in [0, 1/2)$, $ d = \tilde\Omega( n^{\frac{3}{2(1 - \alpha)}} + n^2 + ( n \| \bmu \|_{\bSigma})^{\frac{1}{1 - \alpha}}  )$, and $\| \bmu \|_2 = \omega( 1 + n^{-1/4}d^{1/4 - \alpha/2}  )$.
    \item $\alpha = 1/2$, $ d = \tilde\Omega( n^{3} + n^2 \| \bmu \|_{\bSigma}^2  )$, and $\| \bmu \|_2 = \omega( 1 + n^{-1/4}(\log(d))^{1/4} )$.
    \item $\alpha\in (1/2, 1 )$, $ d = \tilde\Omega( n^{\frac{3}{2(1 - \alpha)}} + ( n \| \bmu \|_{\bSigma})^{\frac{1}{1 - \alpha}}  )$, and $\| \bmu \|_2 = \omega(1)$.
\end{enumerate}
Then with probability at least $1 - n^{-1}$, the population risk of the maximum margin classifier satisfies $R(\hat\btheta_{\text{SVM}}) = o(1)$.
\end{corollary}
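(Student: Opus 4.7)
The plan is to invoke Theorem~\ref{thm:BOnew} after estimating $\tr(\bSigma)$, $\|\bSigma\|_2$, $\|\bSigma\|_F^2$, and $\|\bmu\|_{\bSigma}$ for the polynomial spectrum $\lambda_k = k^{-\alpha}$, and then show that the resulting exponent in the risk bound diverges to $-\infty$. The first step is the easy computational one: since $\lambda_k = k^{-\alpha}$ is decreasing, $\|\bSigma\|_2 = 1$, and standard integral comparison gives
\begin{align*}
    \tr(\bSigma) = \sum_{k=1}^d k^{-\alpha} \asymp \begin{cases} d^{1-\alpha} & \alpha \in [0,1), \end{cases} \qquad \|\bSigma\|_F^2 = \sum_{k=1}^d k^{-2\alpha} \asymp \begin{cases} d^{1-2\alpha} & \alpha \in [0,1/2), \\ \log d & \alpha = 1/2, \\ O(1) & \alpha \in (1/2,1), \end{cases}
\end{align*}
up to constants depending only on $\alpha$. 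I will also use the crude upper bound $\|\bmu\|_{\bSigma} \leq \|\bmu\|_2$ (since $\|\bSigma\|_2=1$), noting that this is tight in the worst case when $\bmu$ aligns with the top eigenvector.

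Next I would verify the two assumptions of Theorem~\ref{thm:BOnew}. The condition $\|\bmu\|_2^2 \geq C \|\bmu\|_{\bSigma}$ is ensured by the assumption $\|\bmu\|_2 = \omega(1)$ present in all three cases (combined with $\|\bmu\|_{\bSigma}\leq\|\bmu\|_2$). The trace condition $\tr(\bSigma) \geq C\max\{n^{3/2}\|\bSigma\|_2, n\|\bSigma\|_F, n\sqrt{\log n}\,\|\bmu\|_{\bSigma}\}$ translates, after plugging in the computed orders, into
\begin{align*}
    d^{1-\alpha} \;=\; \tilde\Omega\bigl(\,n^{3/2} \;+\; n\,\|\bSigma\|_F \;+\; n\,\|\bmu\|_{\bSigma}\,\bigr),
\end{align*}
which (after solving for $d$ in each $\alpha$-regime and using $\|\bSigma\|_F$ from the table) is exactly the stated dimension assumption $d = \tilde\Omega\bigl(n^{3/(2(1-\alpha))} + n^2 + (n\|\bmu\|_{\bSigma})^{1/(1-\alpha)}\bigr)$ for $\alpha<1/2$, and the analogous forms for $\alpha=1/2$ and $\alpha>1/2$. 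A small separate check is needed for the $\alpha=1/2$ case because of the logarithmic $\|\bSigma\|_F^2$ factor.

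Having verified the hypotheses, I would apply Theorem~\ref{thm:BOnew} to obtain
\begin{align*}
    R(\hat\btheta_{\text{SVM}}) \leq \exp\!\Bigl(\,-\,\tfrac{C' n\|\bmu\|_2^4}{n\|\bmu\|_{\bSigma}^2 + \|\bSigma\|_F^2 + n\|\bSigma\|_2^2}\,\Bigr),
\end{align*}
and argue case by case that the exponent tends to $-\infty$. In Case 1 ($\alpha<1/2$), using $\|\bmu\|_{\bSigma}\leq\|\bmu\|_2$ and $\|\bSigma\|_F^2 \asymp d^{1-2\alpha}$, the denominator is dominated by the larger of $n\|\bmu\|_2^2$ and $d^{1-2\alpha}$, so $\|\bmu\|_2 = \omega(1 + n^{-1/4} d^{1/4-\alpha/2})$ forces $n\|\bmu\|_2^4 / (n\|\bmu\|_2^2 + d^{1-2\alpha}) \to \infty$. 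Cases 2 and 3 are analogous with $\|\bSigma\|_F^2 \asymp \log d$ and $\|\bSigma\|_F^2 = O(1)$ respectively, yielding the stated conditions $\|\bmu\|_2 = \omega(1 + n^{-1/4}(\log d)^{1/4})$ and $\|\bmu\|_2=\omega(1)$.

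The main obstacle, and the one place where I would be most careful, is the interplay between the two terms $n\|\bmu\|_{\bSigma}^2$ and $\|\bSigma\|_F^2$ in the denominator: a naive bound $\|\bmu\|_{\bSigma}\leq\|\bmu\|_2$ is what drives the worst-case conditions stated, but one must also verify that the assumption $d=\tilde\Omega((n\|\bmu\|_{\bSigma})^{1/(1-\alpha)})$ simultaneously ensures that the $n\|\bmu\|_{\bSigma}^2$ term in the denominator is subsumed by (or at worst comparable to) the numerator $n\|\bmu\|_2^4$. Doing the bookkeeping in the three $\alpha$-regimes so that all three hypothesis conditions on $d$ and the final $o(1)$ conclusion fit together is routine but slightly delicate around the boundary $\alpha=1/2$ where the logarithm appears; this is where the detailed proof must be written out rather than sketched.
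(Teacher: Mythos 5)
Your proposal is correct and follows essentially the same route as the paper's proof: integral comparison to get $\tr(\bSigma)\gtrsim d^{1-\alpha}$ and the three regimes for $\|\bSigma\|_F^2$, translation of the trace hypothesis of Theorem~\ref{thm:BOnew} into the stated conditions on $d$, and a term-by-term check (using $\|\bSigma\|_2=1$ and $\|\bmu\|_{\bSigma}\leq\|\bmu\|_2$) that the exponent diverges under the stated conditions on $\|\bmu\|_2$. No substantive differences from the paper's argument.
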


\begin{proof}[Proof of Corollary~\ref{col:polynomialdecay_with_n}]
We first consider the case when $\alpha\in [0, 1/2)$. We have
\begin{align*}
    \tr(\bSigma) = \sum_{k=1}^d \lambda_k = \sum_{k=1}^d k^{-\alpha} \geq \int_{t=1}^{d} t^{-\alpha} \dd t = \frac{d^{1 - \alpha}}{1 - \alpha} - \frac{1}{1 - \alpha} > \frac{d^{1 - \alpha}}{2(1 - \alpha)}
\end{align*}
when $d$ is sufficiently large. Similarly, we have
\begin{align*}
    \| \bSigma \|_F^2 = \sum_{k=1}^d \lambda_k^2 = 1 + \sum_{k=2}^d k^{-2\alpha} \leq 1 + \int_{t=1}^{d-1} t^{-2\alpha} \dd t = 1 + \frac{(d - 1)^{1 - 2\alpha}}{1 - 2\alpha} - \frac{1}{1 - 2\alpha} \leq 1 + \frac{d^{1 - 2\alpha}}{1 - 2\alpha}.
\end{align*}
Therefore, a sufficient condition for the assumptions in Theorem~\ref{thm:BOnew} to hold is that $\| \bmu \|_2 = \omega(1)$ and
\begin{align*}
    &\frac{d^{1 - \alpha}}{2(1 - \alpha)}  \geq C n^{3/2},\\
    &\frac{d^{1 - \alpha}}{2(1 - \alpha)}  \geq C n \cdot \sqrt{1 + \frac{d^{1 - 2\alpha}}{1 - 2\alpha} },\\
    &\frac{d^{1 - \alpha}}{2(1 - \alpha)}  \geq C n\sqrt{\log(n)}\cdot \| \bmu \|_{\bSigma} .
\end{align*}
After simplifying the result, we derive the condition that  $ d = \tilde\Omega( n^{\frac{3}{2(1 - \alpha)}} + n^2 + ( n \| \bmu \|_{\bSigma})^{\frac{1}{1 - \alpha}}  )$. We further check the conditions on $\| \bmu \|_2$ that lead to $o(1)$ population risk. Note that when $\| \bmu \|_2 = \omega(1)$, $\| \bmu \|_2^4 / \| \bmu \|_{\bSigma}^2 = \omega(1)$. We also check the condition that $n \| \bmu \|_2^4 / \| \bSigma \|_F^2 = \omega(1)$. A sufficient condition is that
\begin{align*}
    n \| \bmu \|_2^4 = \omega\bigg( 1 + \frac{d^{1 - 2\alpha}}{1 - 2\alpha} \bigg).
\end{align*}
Simplifying the condition completes the proof for the case $\alpha\in [0, 1/2)$.

For the case $\alpha = 1/2$, we have
\begin{align*}
    \tr(\bSigma) = \sum_{k=1}^d \lambda_k = \sum_{k=1}^d k^{-1/2} \geq \int_{t=1}^{d} t^{-1/2} \dd t = \frac{d^{1 - 1/2}}{1 - 1/2} - \frac{1}{1 - 1/2} > \sqrt{d}
\end{align*}
when $d$ is sufficiently large. Moreover,
\begin{align*}
    \| \bSigma \|_F^2 = \sum_{k=1}^d \lambda_k^2 = 1 + \sum_{k=2}^d k^{-1} \leq 1 + \int_{t=1}^{d-1} t^{-1} \dd t = 1 + \log(d - 1) \leq 1 + \log(d).
\end{align*}
Verifying the conditions
\begin{align*}
    &\sqrt{d}  \geq C n^{3/2},\\
    &\sqrt{d} \geq C n \cdot \sqrt{1 + \log(d) },\\
    &\sqrt{d} \geq C n\sqrt{\log(n)}\cdot \| \bmu \|_{\bSigma}
\end{align*}
then gives a sufficient condition $ d = \tilde\Omega( n^{3} + n^2 \| \bmu \|_{\bSigma}^2  )$, $\| \bmu \|_2 = \omega(1)$ for the assumptions in Theorem~\ref{thm:BOnew} to hold. It is also easy to verify that when $\| \bmu \|_2 = \omega( 1 + n^{-1/4}(\log(d))^{1/4} )$ we have $R(\hat\btheta_{\text{SVM}}) = o(1)$.

Finally for the case $\alpha\in (1/2, 1 )$, we have
\begin{align*}
    \tr(\bSigma) = \sum_{k=1}^d \lambda_k = \sum_{k=1}^d k^{-\alpha} \geq \int_{t=1}^{d} t^{-\alpha} \dd t = \frac{d^{1 - \alpha}}{1 - \alpha} - \frac{1}{1 - \alpha}.
\end{align*}
Moreover, in this setting we have $\| \bSigma \|_F^2 \leq c_1$ for some absolute constant $c_1$. It is therefore easy to check that $\| \bmu \|_2 = \omega(1)$ and
\begin{align*}
    d = \tilde\Omega( n^{\frac{3}{2(1 - \alpha)}} + ( n \| \bmu \|_{\bSigma})^{\frac{1}{1 - \alpha}}  )
\end{align*}
are sufficient for the assumptions in Theorem~\ref{thm:BOnew} to hold, and we also have $R(\hat\btheta_{\text{SVM}}) = o(1)$.
\end{proof}

\subsection{Proof of Corollary~\ref{col:rare-weak}}
The proof of Corollary~\ref{col:rare-weak} for the rare/weak feature model is rather straightforward. 
\begin{proof}[Proof of Corollary~\ref{col:rare-weak}]
Note that in the rare/weak feature model we have $\| \bmu \|_2 = \gamma \sqrt{s}$. Therefore the conditions of Corollary~\ref{col:isotropic} are satisfied and we have 
\begin{align*}
    R(\hat\btheta_{\text{SVM}} ) \leq  \exp\bigg( - \frac{  c_1 n \| \bmu \|_2^4  }{  n  \| \bmu \|_{2}^2+ d } \bigg) = \exp\bigg( - \frac{  c_1 n \gamma^4 s^2  }{  n  \gamma^2 s + d } \bigg),
\end{align*}
where $c_1$ is an absolute constant. This completes the proof. 
\end{proof}

\section{Experiments}
In this section we present simulation results to backup our population risk bound in Theorem~\ref{thm:BOnew}. We generate $\ub$ as a standard Gaussian vector, and set $\bSigma = \diag\{ \lambda_1,\ldots, \lambda_d\}$ with $\lambda_k = k^{-\alpha}$ for some parameter $\alpha \in [0,1)$, which matches the setting studied in Section~\ref{section:mainresults}. The mean vector $\bmu$ is generated uniformly from the sphere centered at the origin with radius $r$. All population risks are calculated by taking the average of $100$ independent experiments. Note that under our setting, $\hat\btheta_{\text{SVM}} = \hat\btheta_{\text{LS}}$ can be easily calculated. Moreover, since we are considering Gaussian mixtures in our experiments, the population risk can be directly calculated with the Gaussian cumulative distribution function:
\begin{align*}
    R(\hat\btheta_{\text{SVM}}) &= \PP[ \btheta^\top \bmu  < y\cdot \hat\btheta_{\text{SVM}}^\top  \bLambda^{1/2} \ub ].
\end{align*}
The derivation of the above result is in the proof of Lemma~\ref{lemma:subGaussian_riskbound} in Appendix~\ref{section:proof_subGaussian_riskbound}.

\noindent\textbf{Population risk versus the norm of the mean vector $\| \bmu\|_2$.} We first present experimental results on the relation between the population risk and the norm of the mean vector $\| \bmu\|_2$. Note that in our setting, the risk bound in Theorem~\ref{thm:BOnew} reduces to the following bound:
\begin{align*}
    R(\hat\btheta_{\text{SVM}}) \leq \exp\Bigg(  \frac{ - C' n \| \bmu \|_2^4  }{  n  \| \bmu \|_{\bSigma}^2+ \sum_{k=1}^d k^{-2\alpha}} \Bigg).
\end{align*}
Based on this bound, we can first see that the population risk should be smaller when $\alpha$ is larger. Moreover, the dependency of $R(\hat\btheta_{\text{SVM}})$ depends on the comparison between the scaling of the two terms in the denominator. When
\begin{align}\label{eq:experimentdiscussion_eq1}
    \sum_{k=1}^d k^{-2\alpha} \geq  n  \| \bmu \|_{\bSigma}^2,
\end{align}
we can expect that $ -\log (R(\hat{\btheta}_{\text{SVM}})) $ should be roughly of order $\| \bmu \|_2^4$. On the other hand, if \eqref{eq:experimentdiscussion_eq1} does not hold, then $ -\log (R(\hat{\btheta}_{\text{SVM}})) $ should be roughly of order $\| \bmu \|_2^2$. It is also clear that whether \eqref{eq:experimentdiscussion_eq1} holds heavily depends on the values of the sample size $n$ and $\alpha$: when $n$ is large, then \eqref{eq:experimentdiscussion_eq1} is less likely to be satisfied. Moreover, when $\alpha > 1/2$, \eqref{eq:experimentdiscussion_eq1} cannot hold because in this case $\sum_{k=1}^d k^{-2\alpha}$ is upper bounded by a constant. 

\begin{figure}[ht!]
	\begin{center}
		\begin{tabular}{cc}
 		\hspace{-0.2in}
			\subfigure[$R(\hat{\btheta}_{\text{SVM}})$ versus $\| \bmu \|_2$, $n = 10$, $d = 2000$]{\includegraphics[width=0.4\linewidth,angle=0]{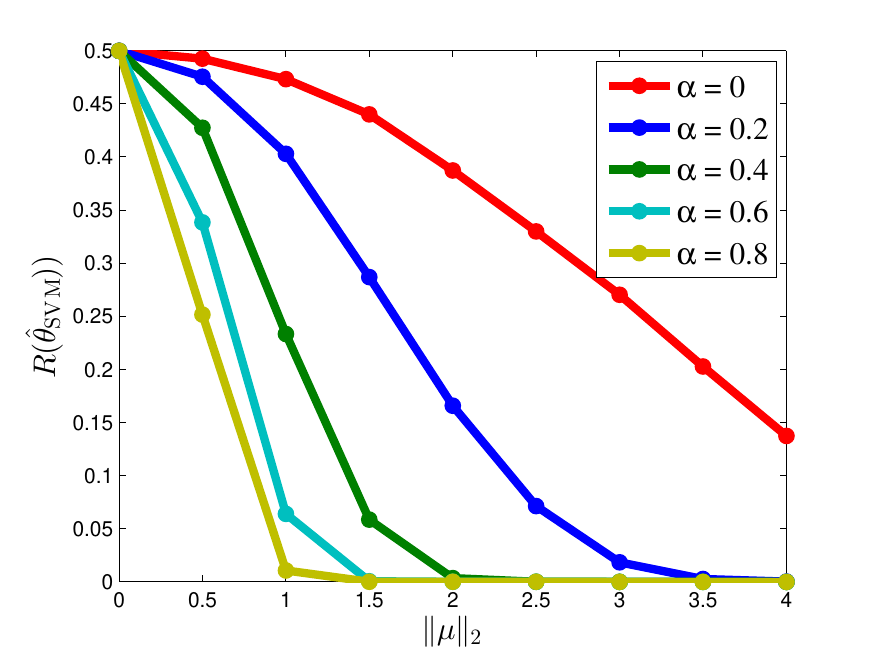}\label{subfig:1}}
			& 
			\subfigure[$ -\log (R(\hat{\btheta}_{\text{SVM}})) $
 versus $\| \bmu \|_2^2$, $n = 10$, $d = 2000$]{\includegraphics[width=0.4\linewidth,angle=0]{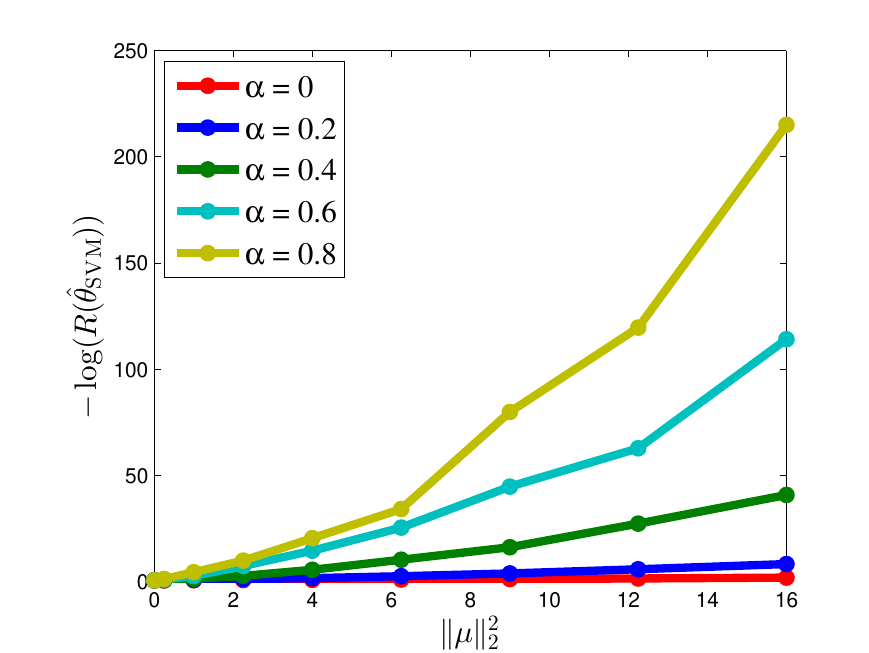}\label{subfig:2}}
			\\
			\subfigure[$R(\hat{\btheta}_{\text{SVM}})$ versus $\| \bmu \|_2$, $n = 100$, $d = 2000$]{\includegraphics[width=0.4\linewidth,angle=0]{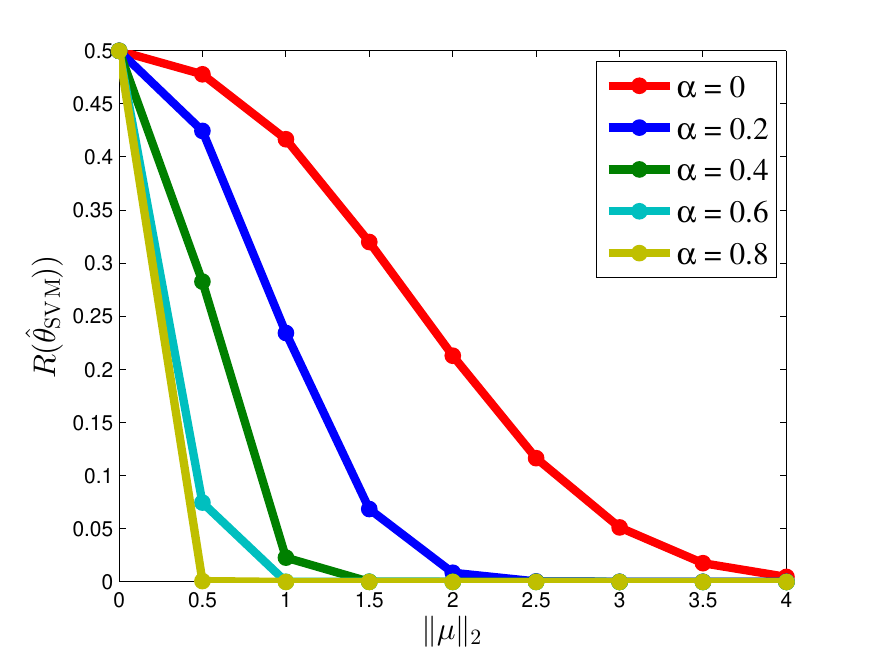}\label{subfig:3}}
			& 
			\subfigure[$-\log(R(\hat{\btheta}_{\text{SVM}}))$ versus $\| \bmu \|_2^2$, $n = 100$, $d = 2000$]{\includegraphics[width=0.4\linewidth,angle=0]{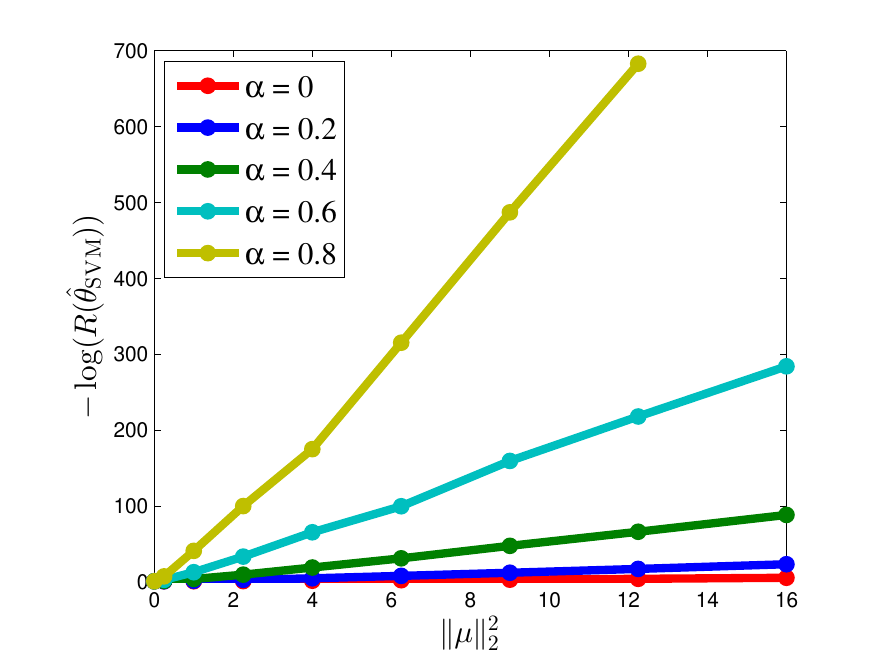}\label{subfig:4}}
		    \end{tabular}
	\end{center}
	\vskip -12pt
	\caption{Experiments on the dependency of the population risk $R(\hat\btheta_{\text{SVM}})$ on the norm of the mean vector $\| \bmu\|_2$ with different values of $\alpha$ and sample size $n$. (a) and (b) gives the curves with $n = 10$, while (c) and (d) are for the case $n = 100$. Moreover, (a) and (c) gives the curves of $R(\hat\btheta_{\text{SVM}})$ versus $\| \bmu\|_2$, and to further test the tightness of our risk bound, in (c) and (d) we also study the relation between $-\log(R(\hat{\btheta}_{\text{SVM}}))$ and $\| \bmu \|_2^2$. The dimension $d$ is set to $2000$ in all these figures. In (d) we omit the last point $\|\bmu \|_2 = 16$ in the curve for $\alpha = 0.8$ because the population risk in this case is too small and is dominated by the numerical accuracy.
	} 
	\label{fig:errorcurve}
\end{figure}

In Figure~\ref{fig:errorcurve}, we verify the above argument by verifying the dependency of the population risk $R(\hat\btheta_{\text{SVM}})$ on the norm of the mean vector $\| \bmu\|_2$ with different values of $\alpha$ and sample size $n$. From Figures~\ref{subfig:1} and \ref{subfig:3}, we can see that $R(\hat{\btheta}_{\text{SVM}})$ decreases with $\| \bmu \|_2$ and $\alpha$. From~\ref{subfig:2}, we verify that when $n = 10$ (which is rather small) and when $\alpha = 0, 0.2, 0.4$, $ -\log (R(\hat{\btheta}_{\text{SVM}})) $ is linear in $\| \bmu \|_2^2$. This verifies our discussion for the setting when \eqref{eq:experimentdiscussion_eq1} holds. On the other hand, when $\alpha = 0.6,0.8$, $ -\log (R(\hat{\btheta}_{\text{SVM}})) $ has a higher order dependency in $\| \bmu \|_2^2$, which is because $\sum_{k=1}^d k^{-2\alpha}$ is upper bounded by a constant and \eqref{eq:experimentdiscussion_eq1} cannot hold. In Figure~\ref{subfig:4}, we further verify that when $n = 100$, \eqref{eq:experimentdiscussion_eq1} never hold and $ -\log (R(\hat{\btheta}_{\text{SVM}})) $ is of order $\| \bmu \|_2^2$ for all choices of $\alpha$. This set of experiments verifies our risk bound in Theorem~\ref{thm:BOnew}.

\noindent\textbf{Verification of the dimension-dependent and dimension-free settings.} In Corollary~\ref{col:polynomialdecay}, we have discussed that when $\alpha < 1/2$, achieving a small population risk requires a larger $\| \bmu \|_2$ when $d$ is larger. On the other hand, when $\alpha > 1/2$, the requirement on $\| \bmu \|_2$ to achieve small population error is dimension-free. Here we present experimental results to verify our claim. The results are given in Figure~\ref{fig:phasetransition}. We can see very clearly that when $\alpha = 0.2$, the risk curves for different $d$ are different, and larger $d$ results in worse population risk. However, when $\alpha = 0.8$, all the risk curves are almost exactly the same, which indicates that the population risk is dimension-free. This verifies our claim in Corollary~\ref{col:polynomialdecay}.

\begin{figure}[h]
	\begin{center}
		\begin{tabular}{cc}
 		\hspace{-0.2in}
			\subfigure[$R(\hat{\btheta}_{\text{SVM}})$ versus $\| \bmu \|_2$, $\alpha = 0.2$, $n = 10$]{\includegraphics[width=0.4\linewidth,angle=0]{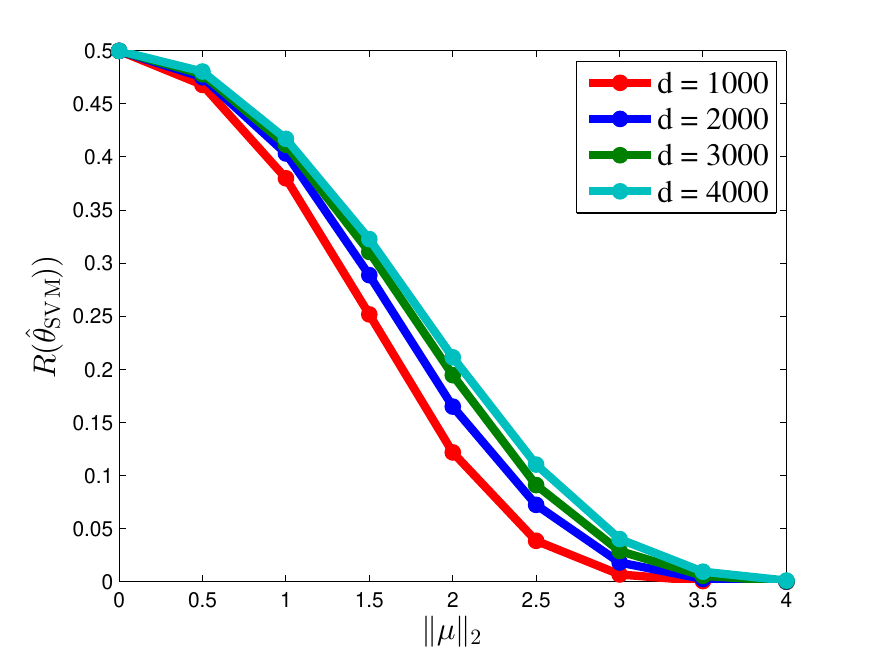}\label{subfig:5}}
			& 
			\subfigure[$R(\hat{\btheta}_{\text{SVM}})$ versus $\| \bmu \|_2$, $\alpha = 0.8$, $n = 10$]{\includegraphics[width=0.4\linewidth,angle=0]{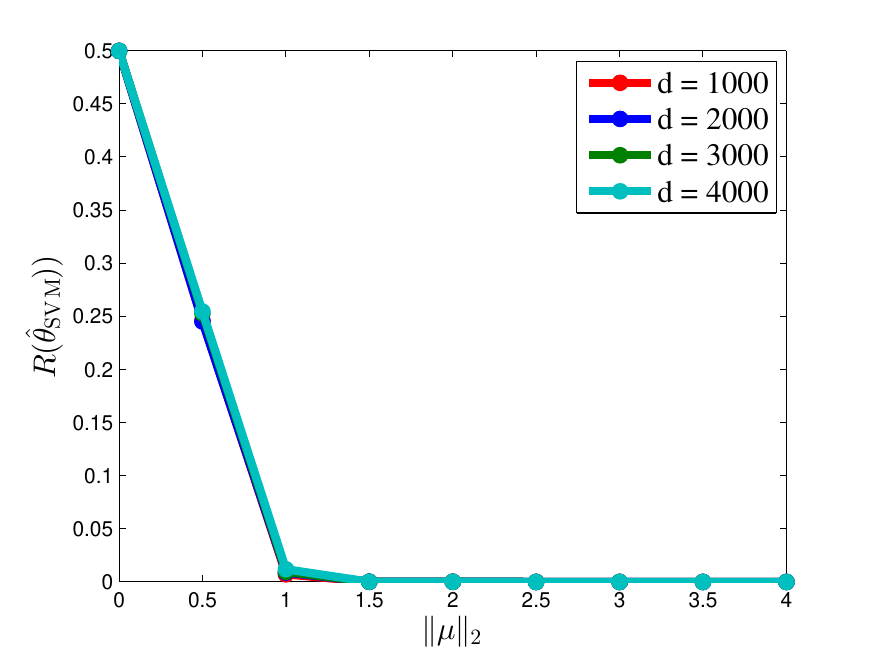}\label{subfig:6}}
		    \end{tabular}
	\end{center}
	\vskip -12pt
	\caption{The population risk curve with respect to $\| \bmu\|_2$ with different values of $\alpha$ and dimension $d$. (a) shows the result for $\alpha = 0.2$, while (b) is for the case $\alpha = 0.8$. The sample size $n$ is set to $10$ in both experiments.} 
	\label{fig:phasetransition}
\end{figure}



\end{document}